\newcommand{\titletext}{Implicit Regularization Paths of Weighted Neural Representations}
\newcommand{\removelinebreaks}[1]{%
      \def\\{\relax}#1}
\def\titleRLB{\removelinebreaks{\titletext}}
\title{\titletext}
\date{}
\newcommand{\footremember}[2]{%
    \footnote{#2}
    \newcounter{#1}
    \setcounter{#1}{\value{footnote}}%
}
\author{
    Jin-Hong Du\footremember{cmustats}{Department of Statistics and Data Science, Carnegie Mellon University, Pittsburgh, PA 15213, USA.}\footremember{cmumld}{Machine Learning Department, Carnegie Mellon University, Pittsburgh, PA 15213, USA.} \\ {\small \url{jinhongd@andrew.cmu.edu}} 
    \and
    Pratik Patil\footremember{berkeleystats}{Department of Statistics, University of California, Berkeley, CA 94720, USA.} \\ {\small \url{pratikpatil@berkeley.edu}} 
}
\date{\vspace{-10pt}}
\begin{document}

\maketitle

\begin{abstract}
We study the implicit regularization effects induced by (observation) weighting of pretrained features.
For weight and feature matrices of bounded operator norms that are infinitesimally free with respect to (normalized) trace functionals, we derive equivalence paths connecting different weighting matrices and ridge regularization levels.
Specifically, we show that ridge estimators trained on weighted features along the same path are asymptotically equivalent when evaluated against test vectors of bounded norms.
These paths can be interpreted as matching the effective degrees of freedom of ridge estimators fitted with weighted features.
For the special case of subsampling without replacement, our results apply to independently sampled random features and kernel features and confirm recent conjectures (Conjectures 7 and 8) of the authors on the existence of such paths in \cite{patil2023generalized}.
We also present an additive risk decomposition for ensembles of weighted estimators and show that the risks are equivalent along the paths when the ensemble size goes to infinity.
As a practical consequence of the path equivalences, we develop an efficient cross-validation method for tuning and apply it to subsampled pretrained representations across several models (e.g., ResNet-50) and datasets (e.g., CIFAR-100).
\end{abstract}

\section{Introduction}

In recent years, neural networks have become state-of-the-art models for tasks in computer vision and natural language processing by learning rich representations from large datasets. 
Pretrained neural networks, such as ResNet, which are trained on massive datasets like ImageNet, serve as valuable resources for new, smaller datasets \citep{he2016deep}. 
These pretrained models reduce computational burden and generalize well in tasks such as image classification and object detection due to their rich feature space \citep{yosinski2014transferable, he2016deep}.
Furthermore, pretrained features or neural embeddings, such as the neural tangent kernel, extracted from these models, serve as valuable representations of diverse data \citep{jacot2018neural,wei2022more}.

However, despite their usefulness, fitting models based on pretrained features on large datasets can be challenging due to computational and memory constraints.
When dealing with high-dimensional pretrained features and large sample sizes, direct application of even simple linear regression may be computationally infeasible or memory-prohibitive \citep{drineas2006sampling, mahoney2011randomized}.
To address this issue, subsampling has emerged as a practical solution that reduces the dataset size, thereby alleviating the computational and memory burden.
Subsampling involves creating smaller datasets by randomly selecting a subset of the original data points.
Beyond these computational and memory advantages, subagging can also greatly improve predictive performance in overparameterized regimes, especially near model interpolation thresholds \citep{patil2022mitigating}.
Moreover, through distributed learning, models fitted on multiple subsampled datasets can be aggregated as an ensemble to provide more stable predictions \citep{dobriban_sheng_2021,dobriban_sheng_2020,patil2022bagging}.

There has been growing interest in understanding the effects of subsampling (without replacement) \citep{lejeune2020implicit, patil2022bagging, du2023subsample, chen2023sketched, patil2023generalized}. 
These works relate subsampling to explicit ridge regularization, assuming either Gaussian features $\bphi \sim\cN(\zero_p,\bSigma)$ or linearly decomposable features (referred to as \emph{linear features} in this paper) $\bphi = \bSigma^{1/2} \bz$, where $\bSigma\in\RR^{p\times p}$ is the covariance matrix and $\bz \in \mathbb{R}^{p}$ contains i.i.d.\ entries with zero means and bounded $4 + \delta$ moments for some $\delta > 0$. 
Specifically, \citep{patil2023generalized} establish a connection between implicit regularization induced by subsampling and explicit ridge regularization through a \emph{path} defined by the tuple $(k/n,\lambda)$, where $k$ and $n$ are the subsample size and the full sample size, respectively, and $\lambda$ is the ridge regularization level.
Along this path, any subsample estimator with the corresponding ridge regularization exhibits the same first-order (or estimator equivalence) and second-order (or risk equivalence) asymptotic limits.
Moreover, the endpoints of all such paths along the two axes of $k = n$ (no subsampling) and $\lambda = 0$ (no regularization) span the same range.
Although these results have been demonstrated for linear features, \citep{patil2023generalized} also numerically observe similar equivalence behavior in more realistic contexts and propose conjectures for random features and kernel features based on heuristic ``universality'' justifications.
However, extending these results to encompass more general feature structures and other sampling schemes remains an open question.

Towards answering this question, in this paper, we view subsampling as a weighted regression problem \citep{weisberg2005applied}.
This perspective allows us to study the equivalence in its most general form, considering arbitrary feature structures and weight structures.
The general weight matrix approach used in this study encompasses various applications, including subsampling, bootstrapping, variance-adaptive weighting, survey, and importance weighting, among others.
By interpreting subsampling as a weighted regression problem, we leverage recent tools from free probability theory, which have been developed to analyze feature sketching \citep{Liu2020Ridge,lejeune2024asymptotics,patil2023asymptotically}.
Building on these theoretical tools, we establish implicit regularization paths for general weighting and feature structures.
We summarize our main results below and provide an overview of our results in the context of recent related work in \Cref{tab:summary}.

\begin{table}[!t]
    \centering
    \caption{Overview of related work on the equivalence of implicit regularization and explicit ridge regularization.}
    \begin{tabularx}{0.99\textwidth}{C{3.7cm}C{3.7cm}C{3.4cm}C{3.4cm}}\toprule
        \textbf{Main analysis} & \textbf{Feature structure} & \textbf{Weight structure} & \textbf{Reference} \\\midrule
        \multirow{3}{*}{Risk characterization} & Gaussian & subsampling & \citep{lejeune2020implicit} \\
        & linear & subsampling & \citep{patil2022bagging} \\
        & Gaussian & bootstrapping & \citep{clarte2024analysis,chen2023sketched,ando2023high} \\ \arrayrulecolor{black!25} \cmidrule(lr){2-4}
        \multirow{4}{*}{Estimator equivalence} & linear & subsampling & \citep{patil2023generalized} \\
        & \colornew general & \colornew general & \colornew \Cref{thm:general-path} \\ 
        & \colornew general & \colornew subsampling & \colornew\Cref{thm:subsample-wor-path} \\ 
        & \colornew linear, random, kernel & \colornew subsampling & \colornew \Cref{prop:path-linear-features,prop:path-random-features,prop:path-kernel-features} \\ \arrayrulecolor{black!25} \cmidrule(lr){2-4}
        \multirow{2}{*}{Risk equivalence} & linear & subsampling & \citep{patil2023generalized} \\
        & \colornew general & \colornew general & \colornew \Cref{thm:risk-equivalence} \\
        \arrayrulecolor{black} \bottomrule
    \end{tabularx}
    \label{tab:summary}
\end{table}

\subsection{Summary of results and paper outline}

We summarize our main results and provide an outline for the paper below.

\begin{itemize}[leftmargin=7mm]
    \item 
    \emph{Paths of weighted representations.}
    In \Cref{sec:path-characterization}, we demonstrate that general weighted models exhibit first-order equivalence along a path (\Cref{thm:general-path}) when the weight matrices are asymptotically independent of the data matrices. 
    This path of equivalence can be computed directly from the data using the formula provided in \Cref{eq:path}.
    Furthermore, we provide a novel interpretation of this path in terms of matching effective degrees of freedom of models along the path for general feature structures when the weights correspond to those arising from subsampling (\Cref{thm:subsample-wor-path}).

    \item 
    \emph{Paths of subsampled representations.}
    We further specialize our general result in \Cref{thm:subsample-wor-path} for the weights induced by subsampling without replacement to structured features in \Cref{sec:feature-matrices-examples}. 
    These include results for linear random features, nonlinear random features, and kernel features, as shown in \Cref{prop:path-linear-features,prop:path-random-features,prop:path-kernel-features}, respectively.
    The latter two results also resolve Conjectures 7 and 8 raised by \citep{patil2023generalized} regarding subsampling regularization paths for random and kernel features, respectively.

    \item 
    \emph{Risk equivalences and tuning.}
    In \Cref{sec:risk-equivalences}, we demonstrate that an ensemble of weighted models has general quadratic risk equivalence on the path, with an error term that decreases inversely as $1/M$ as the number of ensemble size $M$ increases (\Cref{thm:risk-equivalence}). 
    The risk equivalence holds for both in-distribution and out-of-distribution settings.
    For subsampling general features, we derive an upper bound for the optimal subsample size (\Cref{prop:optimal-subsample-ratio}) and propose a cross-validation method to tune the subsample and ensemble sizes (\Cref{alg:cross-validation}), validated on real datasets in \Cref{sec:experiments-real-data}.
\end{itemize}

This level of generality is achievable because we do not analyze the risk of either the full model or the weighted models in isolation. 
Instead, we relate these two sets of models, allowing us to maintain weak assumptions about the features.
The key assumption underlying our results is the asymptotic freeness of weight matrices with respect to the data matrices. 
While directly testing this assumption is generally challenging, we verify its validity through its consequences on real datasets in \Cref{sec:experiments-real-data}.

\subsection{Related literature}

We provide a brief account of other related work below to place our work in a better context.

\emph{Linear features.}
Despite being overparameterized, neural networks generalize well in practice \citep{zhang_bengio_hardt_recht_vinyals_2016,zhang_bengio_hardt_recht_vinyals_2021}.
Recent work has used high-dimensional ``linearized'' networks to investigate the various phenomena that arise in deep learning, such as double descent \citep{belkin2020two,mei_montanari_2022,nakkiran2021optimal}, benign overfitting \citep{bartlett2020benign,koehler2021uniform,mallinar2022benign}, and scaling laws \citep{bahri2021explaining,cui2021generalization,wei2022more}.
This literature analyzes linear regression using statistical physics \citep{sollich2001gaussian,bordelon2020spectrum} and random matrix theory \citep{dobriban2018high,hastie2022surprises}.
Risk approximations hold under random matrix theory assumptions \citep{hastie2022surprises,wei2022more,bach2023high} in theory and apply empirically on a variety of natural data distributions \citep{sollich2001gaussian,loureiro21learning,wei2022more}.

\emph{Random and kernel features.}
Random feature regression, initially introduced in \citep{rahimi2007random} as a way to scale kernel methods, has recently been used for theoretical analysis of neural networks and trends of double descent in deep networks \citep{mei_montanari_2022,adlam2022random}.
The generalization of kernel ridge regression has been studied in \citep{liang2020just,bartlett_montanari_rakhlin_2021,sahraee2022kernel}.
The risks of kernel ridge regression are also analyzed in \citep{cui2021generalization,barthelme2023gaussian,ghorbani2020neural}.
The neural representations we study are motivated by the neural tangent kernel (NTK) and related theoretical work on ultra-wide neural networks and their relationships to NTKs \citep{jacot2020kernel,yang2019scaling}.

\emph{Resampling analysis.}
Resampling and weighted models are popular in distributed learning to provide more stable predictions and handle large datasets \citep{dobriban_sheng_2021,dobriban_sheng_2020,patil2022bagging}.
Historically, for ridge ensembles, \citep{sollich1995learning,krogh1997statistical} derived risk asymptotics under Gaussian features.
Recently, there has been growing interest in analyzing the effect of subsampling in high-dimensional settings.
\citep{lejeune2020implicit} considered least squares ensembles obtained by subsampling, where the final subsampled dataset has more observations than the number of features.
For linear models in the underparameterized regime, \citep{slagel2019sampled} also provide certain equivalences between subsampling and iterative least squares approaches.
The asymptotic risk characterization for general data models has been derived by \citep{patil2022bagging}.
\citep{du2023subsample,patil2023generalized} extended the scope of these results by characterizing risk equivalences for both optimal and suboptimal risks and for arbitrary feature covariance and signal structures.
Very recently, different resampling strategies for high-dimensional supervised regression tasks have been analyzed by \citep{clarte2024analysis} under isotropic Gaussian features.
Cross-validation methods for tuning the ensemble of ridge estimators and other penalized estimators are discussed in \citep{du2024extrapolated,du2023subsample,bellec2023corrected}.
Our work adds to this literature by considering ensembles of models with general weighting and feature structures.

\section{Preliminaries}
\label{sec:preliminaries}

In this section, we formally define our weighted estimator and state the main assumption on the weight operator.
Let $\fnn \colon \RR^{d} \to \RR^{p}$ be a pretrained model.
Let $\{(\bx_i, y_i) \colon i = 1, \dots, n \}$ in $\RR^{d} \times \RR$ be the given dataset.
Applying $\fnn$ to the raw dataset, we obtain the pretrained features $\bphi_i = \fnn(\bx_i)$ for $i = 1, \dots, n$ as the resulting neural representations or neural embeddings.
In matrix notation, we denote the pretrained feature matrix by $\bPhi = [\bphi_1, \ldots, \bphi_n]^{\top} \in \RR^{n \times p}$.
Let $\bW \in \RR^{n \times n}$ be a general weight matrix used for weighting the observations.
The weight matrix $\bW$ is allowed to be asymmetric, in general.

We consider fitting ridge regression on the weighted dataset $(\bW \bPhi, \bW \by)$.
Given a ridge penalty $\lambda$, the ridge estimator fitted on the weighted dataset is given by:
\begin{align}
    \hbeta_{\bW, \lambda}
    := \argmin_{\bbeta \in \RR^p} \left( \frac{\| \bW \by - \bW \bPhi \bbeta \|_2^2}{n} + \lambda \| \bbeta \|_2^2 \right)
    = (\bPhi^\top \bW^\top \bW \bPhi + n \lambda \bI_p)^{\dagger} \bPhi^\top \bW^\top \bW \by.
    \label{eq:ridge-subsample-def}
\end{align}
In the definition above, we allow for $\lambda = 0$, in which case the corresponding ridgeless estimator is defined as the limit $\lambda \to 0^+$.
For $\lambda < 0$, we use the Moore-Penrose pseudoinverse.
An important special case is where $\bW$ is a diagonal matrix, in which case the above estimator reduces to weighted ridge regression.
This type of weight matrix encompasses various applications, such as resampling, bootstrapping, and variance weighting.
Our main application in this paper will be subsampling.

For our theoretical results, we assume that the weight matrix $\bW$ preserves some spectral structure of the feature matrix $\bPhi$.
This assumption is captured by the condition of \emph{asymptotic freeness} between $\bW^\top \bW$ and the feature Gram matrix $\bPhi \bPhi^\top$.
Asymptotic freeness is a concept from free probability theory \citep{voiculescu1997free}.

\begin{assumption}
    [Weight structure]
    \label{cond:sketch-observation}
    Let $\bW^{\top} \bW$ and $\bPhi \bPhi^\top / n$ converge almost surely to bounded operators that are infinitesimally free with respect to $(\otr [\cdot], \tr [\bC (\cdot)])$ for any $\bC$ independent of $\bW$ with $\| \bC \|_{\tr}$ uniformly bounded.
    Additionally, let $\bW^{\top} \bW$ have a limiting $S$-transform that is analytic on the lower half of the complex plane.
\end{assumption}

At a high level, \Cref{cond:sketch-observation} captures the notion of independence but is adapted for non-commutative random variables of matrices.
We provide background on free probability theory and asymptotic freeness in \Cref{app:free-asymptotic-ridge-resolvents}.
Here, we briefly list a series of invertible transformations from free probability to help define the $S$-transform \citep{mingo2017free}.
The Cauchy transform is given by $\cG_{\bA}(z) = \otr[(z \bI - \bA)^{-1}]$.
The moment generating series is given by $\cM_{\bA}(z) = z^{-1} \cG_{\bA} (z^{-1}) - 1$.
The $S$-transform is given by $\cS_{\bA}(w) = (1 + w^{-1}) \cM_{\bA}^{\langle-1\rangle}(w)$.
These are the Cauchy transform (negative of the Stieltjes transform), moment generating series, and $S$-transform of $\bA$, respectively.
Here, $\cM_{\bA}^{\langle -1 \rangle}$ denotes the inverse under the composition of $\cM_{\bA}$.
The notation $\otr[\bA]$ denotes the average trace $\tr[\bA] / p$ of $\bA \in \RR^{p \times p}$.

The freeness of a pair of operators $\bA$ and $\bB$ means that the eigenvectors of one are completely unaligned or incoherent with those of the other.
For example, if $\bA = \bU \bR \bU^\top$ for a uniformly random unitary matrix $\bU$ drawn independently of the positive semidefinite $\bB$ and $\bR$, then $\bA$ and $\bB$ are almost surely asymptotically infinitesimally free \citep{cebron2022freeness}.
Other well-known examples include Wigner matrices, which are asymptotically free with respect to deterministic matrices \cite[Theorem 5.4.5]{anderson2010introduction}.
Gaussian matrices, where the Gram matrix $\bG = \bPhi \bPhi^{\top} /n = \bU (\bV \bV^\top/n) \bU^\top$ and any deterministic $\bS$, are almost surely asymptotically free \citep[Chapter 4, Theorem 9]{mingo2017free}.
Although not proven in full generality, it is expected that diagonal matrices are asymptotically free from data Gram matrices constructed using i.i.d.\ data.
In \Cref{sec:feature-matrices-examples}, we will provide additional examples of feature matrices, such as random and kernel features from machine learning, for which our results apply.

Our results involve the notion of degrees of freedom from statistical optimism theory \citep{efron_1983,efron_1986}.
Degrees of freedom in statistics count the number of dimensions in which a statistical model may vary, which is simply the number of variables for ordinary linear regression.
To account for regularization, this notion has been extended to \emph{effective degrees of freedom} (Chapter 3 of \citep{hastie_tibshirani_1990}).
Under some regularity conditions, from Stein's relation \citep{stein_1981}, the degrees of freedom of a predictor $\hf$ are measured by the trace of the operators $\by \mapsto (\partial / \partial \by) \hf(\bPhi)$.
For the ridge estimator $\hbeta_{\bI,\mu}$ fitted on $(\bPhi,\by)$ with penalty $\mu$, the degrees of freedom is consequently the trace of its prediction operator $\by \mapsto \bPhi (\bPhi^\top \bPhi+\mu\bI_p)^{\dagger} \bPhi^\top \by$, which is also referred to as the ridge smoother operator.
That is, $\df(\hbeta_{\bI,\mu}) = \tr[\bPhi^\top\bPhi (\bPhi^\top \bPhi+\mu\bI_p)^{\dagger}]$.
We denote the normalized degrees of freedom by $\odf = \df/n$.
Note that $\odf(\hbeta_{\bI,\mu}) \le \min\{n,p\}/n \le 1$.

Finally, we express our asymptotic results using the asymptotic equivalence relation.
Consider sequences $\{ \bA_n \}_{n \ge 1}$ and $\{ \bB_n \}_{n \ge 1}$ of (random or deterministic) matrices (which includes vectors and scalars).
We say that $\bA_n$ and $\bB_n$ are \emph{equivalent} and write $\bA_n \asympequi \bB_n$ if $\lim_{p \to \infty} | \tr[\bC_n (\bA_n - \bB_n)] | = 0$ almost surely for any sequence $\bC_n$ of matrices with bounded trace norm such that $\limsup \| \bC_n \|_{\mathrm{tr}} < \infty$ as $n \to \infty$.
Our forthcoming results apply to a sequence of problems indexed by $n$.
For notational simplicity, we omit the explicit dependence on $n$ in our statements.

\section{Implicit regularization paths}
\label{sec:path-characterization}

We begin by characterizing the implicit regularization induced by weighted pretrained features.
We will show that the degrees of freedom of the unweighted estimator $\smashold{\hbeta_{\bI,\mu}}$ on the full data $(\bPhi, \by)$ with regularization parameter $\mu$ are equal to the degrees of freedom of the weighted estimator $\smashold{\hbeta_{\bW,\lambda}}$ for some regularization parameter $\lambda$.
For estimator equivalence, our data-dependent set of weighted ridge estimators $(\bW, \lambda)$ that connect to the unweighted ridge estimator $(\bI, \mu)$ is defined in terms of ``matching'' effective degrees of freedom of component estimators in the set.

To state the upcoming result, denote the Gram matrix of the weighted data as $\bG_{\bW} = \bW \bPhi \bPhi^\top \bW^\top / n$ and the Gram matrix of the unweighted data as $\bG_{\bI} = \bPhi \bPhi^\top / n$.
Furthermore, let $\lambda_{\min}^{+}(\bA)$ denote the minimum positive eigenvalue of a symmetric matrix $\bA$.

\begin{theorem}
    [Implicit regularization of weighted representations]
    \label{thm:general-path}
    For $\bG_{\bI}\in\RR^{n\times n}$, suppose that the subsampling operator $\bW\in\RR^{n\times n}$ satisfies \Cref{cond:sketch-observation} and $\limsup \| \by \|_2^2 / n < \infty$ as $n \to \infty$.
    For any $\mu > - \liminf_{n\to\infty} \lambda_{\min}^+(\bG_{\bI})$, let $\lambda > - \lambda_{\min}^+(\bG_{\bW})$ be given by the following equation:
    \begin{align}
        \lambda
        = \mu/\cS_{\bW^\top \bW}(- {\odf(\hbeta_{\bI,\mu})}) ,
        \label{eq:path}
    \end{align}
    where $\cS_{\bW^\top \bW}$ is the $S$-transform of the operator $\bW^\top \bW$.
    Then, as $n\rightarrow \infty$, it holds that:
    \begin{align}
        \label{eq:estimator-df-equivalence}
        \df(\hbeta_{\bW,\lambda}) \asympequi \df(\hbeta_{\bI,\mu})
        \quad
        \text{and}
        \quad
        \hbeta_{\bW,\lambda} \asympequi \hbeta_{\bI,\mu}.
    \end{align}
\end{theorem}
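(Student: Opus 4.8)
\emph{Setup and reduction to a resolvent equivalence.}
The plan is to reduce both claims in \eqref{eq:estimator-df-equivalence} to a single deterministic‑equivalent statement for the resolvent of the weighted Gram matrix, and then to verify that \eqref{eq:path} is exactly the relation that makes the accompanying degrees‑of‑freedom identity close. First I would put the estimators in resolvent form: by the push‑through identity applied with $\bX=\bW\bPhi$ (with the usual care for the pseudoinverse when $\lambda\le 0$),
\[
  \hbeta_{\bW,\lambda}=\tfrac1n\,\bPhi^{\top}\,\bW^{\top}\bW\,(\bG_{\bI}\bW^{\top}\bW+\lambda\bI_n)^{\dagger}\,\by,
  \qquad
  \hbeta_{\bI,\mu}=\tfrac1n\,\bPhi^{\top}(\bG_{\bI}+\mu\bI_n)^{\dagger}\by,
\]
and, using cyclicity of the trace and the fact that $\bG_{\bW}=\bW\bG_{\bI}\bW^{\top}$ and $\bG_{\bI}\bW^{\top}\bW$ share their nonzero eigenvalues, $\df(\hbeta_{\bW,\lambda})=\tr[\bG_{\bW}(\bG_{\bW}+\lambda\bI_n)^{\dagger}]$ and $\df(\hbeta_{\bI,\mu})=\tr[\bG_{\bI}(\bG_{\bI}+\mu\bI_n)^{\dagger}]$. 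The estimator equivalence then follows once I establish the operator equivalence $\bW^{\top}\bW\,(\bG_{\bI}\bW^{\top}\bW+\lambda\bI_n)^{\dagger}\asympequi(\bG_{\bI}+\mu\bI_n)^{\dagger}$: contracting it with the vectors $\tfrac1{\sqrt n}\bPhi\bv$ (bounded norm, since $\bG_{\bI}$ has bounded operator norm) on the left and $\tfrac1{\sqrt n}\by$ (bounded norm, since $\limsup\|\by\|_2^2/n<\infty$) on the right yields $\bv^{\top}\hbeta_{\bW,\lambda}-\bv^{\top}\hbeta_{\bI,\mu}\to 0$, because both vectors are independent of $\bW$, so the corresponding rank‑one test matrix is of the form covered by \Cref{cond:sketch-observation}.

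\emph{The free‑probability deterministic equivalent.}
The heart of the argument is this operator equivalence, which I would prove from \Cref{cond:sketch-observation} using the machinery of \Cref{app:free-asymptotic-ridge-resolvents}, in the spirit of \cite{lejeune2024asymptotics,patil2023asymptotically}. Since $\bW^{\top}\bW$ and $\bG_{\bI}$ converge to infinitesimally free operators, the limiting spectral distribution of $\bG_{\bW}$ (equivalently of $\bG_{\bI}\bW^{\top}\bW$) is the free multiplicative convolution of those of $\bG_{\bI}$ and $\bW^{\top}\bW$, so the $S$‑transforms multiply, $\cS_{\bG_{\bW}}=\cS_{\bG_{\bI}}\cdot\cS_{\bW^{\top}\bW}$; the associated subordination relation yields that $\bW^{\top}\bW(\bG_{\bI}\bW^{\top}\bW+\lambda\bI_n)^{\dagger}$ is equivalent, at the level of the normalized trace, to an unweighted ridge resolvent $(\bG_{\bI}+\mu\bI_n)^{\dagger}$ whose penalty $\mu$ is obtained from $\lambda$ by rescaling through $\cS_{\bW^{\top}\bW}$ evaluated at minus a self‑consistent parameter; and the infinitesimal part of \Cref{cond:sketch-observation} upgrades this from the normalized trace to arbitrary bilinear forms against $\bW$‑independent matrices of bounded trace norm, i.e.\ to $\asympequi$. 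The analyticity of $\cS_{\bW^{\top}\bW}$ on the lower half‑plane, together with $\mu>-\liminf\lambda_{\min}^{+}(\bG_{\bI})$ and $\lambda>-\lambda_{\min}^{+}(\bG_{\bW})$, keeps every transform and resolvent (including the ridgeless limit $\lambda\to 0^{+}$ and negative penalties, handled via the pseudoinverse and continuity in the penalty) well‑defined.

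\emph{Identifying the path and the degrees of freedom.}
It remains to check that the self‑consistent parameter above is exactly $v:=\odf(\hbeta_{\bI,\mu})$, so that the rescaling reads $\lambda=\mu/\cS_{\bW^{\top}\bW}(-v)$, i.e.\ \eqref{eq:path}, and that then $\odf(\hbeta_{\bW,\lambda})=v$ as well. This is a short chain of identities in the free‑probability dictionary: from $\cM_{\bA}(z)=z^{-1}\cG_{\bA}(z^{-1})-1$ one gets $\odf(\hbeta_{\bI,\mu})=1+\mu\,\cG_{\bG_{\bI}}(-\mu)$ and $\cM_{\bG_{\bI}}(-1/\mu)=-v$, hence $\cM_{\bG_{\bI}}^{\langle-1\rangle}(-v)=-1/\mu$ and $\cS_{\bG_{\bI}}(-v)=(1-1/v)(-1/\mu)$; combining with $\cS_{\bG_{\bW}}=\cS_{\bG_{\bI}}\cdot\cS_{\bW^{\top}\bW}$ and $\cS_{\bW^{\top}\bW}(-v)=\mu/\lambda$ gives $\cS_{\bG_{\bW}}(-v)=(1-1/v)(-1/\lambda)$, whence $\cM_{\bG_{\bW}}^{\langle-1\rangle}(-v)=-1/\lambda$, i.e.\ $\cM_{\bG_{\bW}}(-1/\lambda)=-v$, i.e.\ $1+\lambda\,\cG_{\bG_{\bW}}(-\lambda)=v$; since the left side is the limiting $\odf(\hbeta_{\bW,\lambda})$ (from convergence of the spectral distribution of $\bG_{\bW}$), the normalized degrees of freedom of the two estimators converge to the same limit, which is the degrees‑of‑freedom equivalence in \eqref{eq:estimator-df-equivalence}. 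The only non‑routine ingredients here are the injectivity (strict monotonicity) of $\cM$ and of $\mu\mapsto\odf(\hbeta_{\bI,\mu})$ on the admissible ranges, needed so that the composition inverses are well‑defined and the matched pair $(\lambda,\mu)$ is unique. Assembling the three parts proves \eqref{eq:estimator-df-equivalence}.

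\emph{Main obstacle.}
The substantive difficulty is the anisotropic deterministic equivalent of the second step: upgrading the spectral (normalized‑trace) statement — classical free probability once the $S$‑transform multiplies — to uniform control of bilinear forms $\bu^{\top}[\,\cdot\,]\bv$, and in particular handling the edge of the spectrum of $\bG_{\bW}$, where the resolvent norm blows up (the ridgeless limit, and negative $\lambda$ down to $-\lambda_{\min}^{+}(\bG_{\bW})$). This is precisely where the lower‑half‑plane analyticity of $\cS_{\bW^{\top}\bW}$ and careful continuity/perturbation arguments near the spectral edge are required. A secondary but necessary point is the monotonicity and uniqueness used to pin down the path map $\mu\mapsto\lambda$.
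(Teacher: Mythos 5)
Your proposal is correct and follows the same high-level route as the paper: reduce both claims to a deterministic-equivalent statement about the weighted ridge resolvent, and verify that \eqref{eq:path} is the relation enforced by the $S$-transform of $\bW^\top\bW$. The central operator equivalence $\bW^{\top}\bW(\bG_{\bI}\bW^{\top}\bW+\lambda\bI_n)^{\dagger}\asympequi(\bG_{\bI}+\mu\bI_n)^{\dagger}$ that you isolate is exactly \Cref{lem:general-first-order} in the appendix, and the paper does not re-derive it from the subordination relation as you sketch; it imports it directly from \Cref{thm:sketched-pseudoinverse} (LeJeune et al.\ 2024) by swapping the roles of features and observations. Your ``main obstacle'' paragraph correctly identifies what that imported theorem is doing --- upgrading a spectral statement to $\asympequi$ over $\bW$-independent test matrices of bounded trace norm --- so it is not a gap in your argument, just a part you would either cite or have to reprove.

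Where you genuinely diverge is the degrees-of-freedom step. The paper derives $\df(\hbeta_{\bW,\lambda})\asympequi\df(\hbeta_{\bI,\mu})$ from the trace of the resolvent equivalence (the second part \eqref{eq:part-two} of \Cref{lem:general-first-order}, obtained by multiplying \eqref{eq:part-one} by $\bG_\bI$ and applying the cyclic property). You instead run the chain $\odf(\hbeta_{\bI,\mu})=1+\mu\,\cG_{\bG_{\bI}}(-\mu)\Rightarrow\cM_{\bG_{\bI}}(-1/\mu)=-v\Rightarrow\cS_{\bG_{\bI}}(-v)$, multiply $S$-transforms, and invert back to read off $\cM_{\bG_{\bW}}(-1/\lambda)=-v$, i.e.\ $\odf(\hbeta_{\bW,\lambda})=v$. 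This is a self-contained re-derivation of the fixed-point relation underlying the LeJeune et al.\ theorem, and it has the pedagogical advantage of making the multiplicativity of the $S$-transform the visible mechanism behind the path \eqref{eq:path}. Two small remarks: (i) the monotonicity/uniqueness of the map $\mu\mapsto\odf(\hbeta_{\bI,\mu})$ that you worry about is not actually needed for the theorem as stated, since \eqref{eq:path} gives $\lambda$ explicitly as a function of $\mu$ rather than via a fixed-point equation; and (ii) your contraction argument for the estimator equivalence is exactly right, with the rank-one test matrix $\by(\bPhi\bv)^\top/n$ having bounded trace norm and being $\bW$-independent, which is precisely the form allowed by \Cref{cond:sketch-observation}.
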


In other words, to achieve a target regularization of $\mu$ on the unweighted data, \Cref{thm:general-path} provides a method to compute the regularization penalty $\lambda$ with given weights $\bW$ from the available data using \eqref{eq:path}.
The weighted estimator then has asymptotically the same degrees of freedom as the unweighted estimator.
This means that the level of effective regularization of the two estimators is the same.
Moreover, the estimators themselves are structurally equivalent; that is, $\bc^{\top} (\hbeta_{\bW,\lambda} - \hbeta_{\bI,\mu}) \asto 0$ for every constant vector $\bc$ with bounded norm. 
The estimator equivalence in \Cref{thm:general-path} is a ``first-order'' result, while we will also characterize the ``second-order'' effects in \Cref{sec:risk-equivalences}.

The notable aspect of \Cref{thm:general-path} is its generality. 
The equivalence results hold for a wide range of weight matrices and allow for negative values for the regularization levels. 
Furthermore, we have not made any direct assumptions about the feature matrix $\bPhi$, the weight matrix $\bW$, and the response vector $\by$ (other than mild bounded norms). 
The main underlying ingredient is the asymptotic freeness between $\bW$ and $\bPhi$, which we then exploit using tools developed in \citep{lejeune2024asymptotics} in the context of feature sketching.
We discuss special cases of interest for $\bW$ and $\bPhi$ in the upcoming \Cref{subsec:subsample,sec:feature-matrices-examples}.

\subsection{Examples of weight matrices}
\label{subsec:subsample}

\begin{figure*}[!t]\centering
  \includegraphics[width=0.99\textwidth]{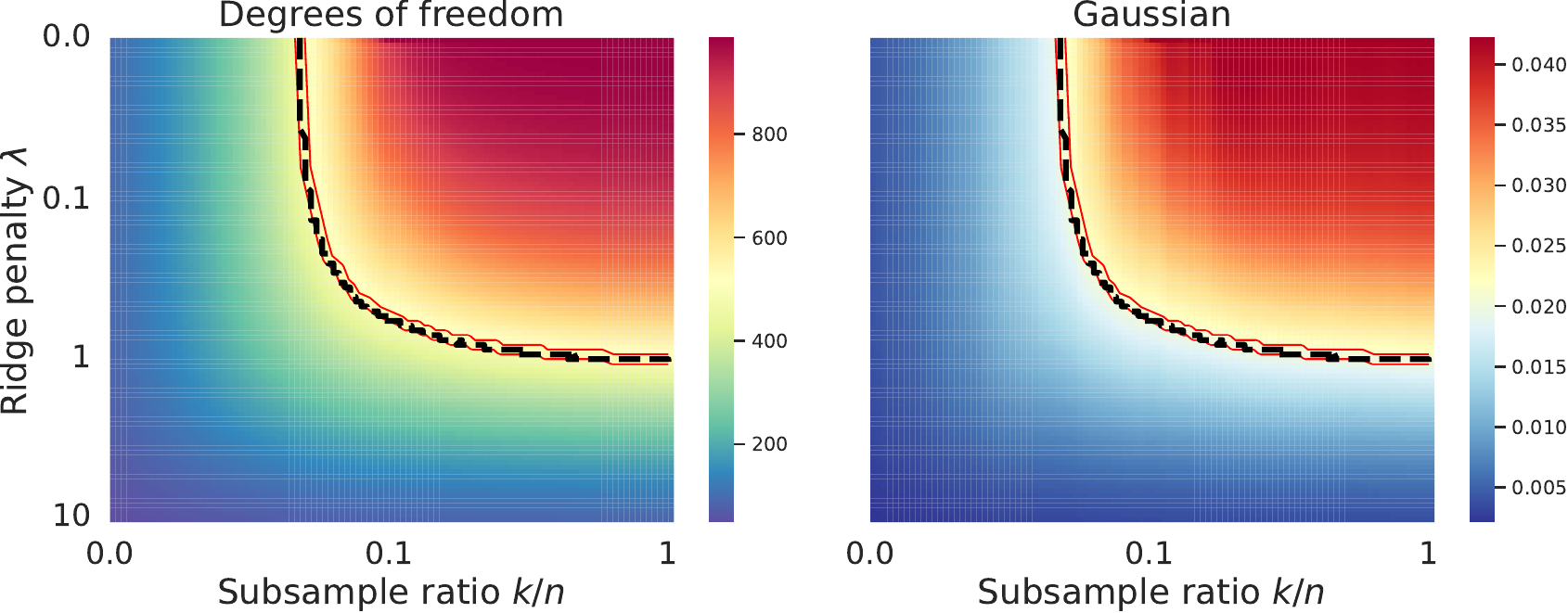}
  \caption{
      Equivalence under subsampling.
      The left panel shows the heatmap of degrees of freedom, and the right panel shows the random projection $\EE_{\bW}[\ba^{\top}\hbeta_{\bW,\lambda}]$ where $\ba\sim\cN(\zero_p,\bI_p/p)$.
      In both heatmaps, the red color lines indicate the predicted paths using \Cref{eq:subsample-wor-path}, and the black dashed lines indicate the empirical paths by matching empirical degrees of freedom.
      The data is generated according to \Cref{subsec:simu} with $n=10000$ and $p=1000$, and the results are averaged over $M=100$ random weight matrices $\bW$.
  }
  \label{fig:different-subsampling}
\end{figure*}

There are two classes of weighting matrices that are of practical interest:

\begin{itemize}[leftmargin=7mm]
    \item \textbf{Non-diagonal weighting matrices.}
    One can consider observation sketching, which involves some random linear combinations of the rows of the data matrix.
    Such observation sketching is beneficial for privacy, as it scrambles the rows of the data matrix, which may contain identifiable information about individuals.
    It also helps in reducing the effect of non-i.i.d.\ data that arise in time series or spatial data, where one wants to smooth away the impact of irregularities or non-stationarity.

    \item \textbf{Diagonal weighting matrices.}
    When observations are individually weighted, $\bW$ is a diagonal matrix, which includes scenarios such as resampling, bootstrapping, and subsampling.
    Note that even with subsampling, one can have a non-binary diagonal weighting matrix.
    For example, one can consider sampling with replacement or sampling with a particular distribution, which yields non-binary diagonal weighting matrices.
    Other examples of non-binary diagonal weighting matrices include inverse-variance weighting sampling to mitigate the effects of heterogeneous variations if the responses have different variances for different units.
\end{itemize}

In general, the set of equivalent weighted estimators depends on the corresponding $S$-transform as in \eqref{eq:path}, and it can be numerically evaluated.
When focusing on subsampling without replacement, the data-dependent path for equivalent estimators with associated subsampling and regularization levels can be explicitly characterized in the following result by analyzing the $S$-transform of subsampling operators.

\begin{theorem}
    [Regularization paths due to subsampling]
    \label{thm:subsample-wor-path}
    For a subsampling operator $\bW^{(k)}$ consisting of $k$ unit diagonal entries, the path \eqref{eq:path} in terms of $(k,\lambda)$ simplifies to:
    \begin{equation}
    \label{eq:subsample-wor-path}
        (1 - {\odf(\hbeta_{\bI,\mu})})
        \cdot
        (1 - {\lambda}/{\mu})
        = (1 - {k}/{n}).
    \end{equation}
\end{theorem}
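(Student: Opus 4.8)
The plan is to evaluate the general path equation \eqref{eq:path} by computing the $S$-transform of the subsampling operator in closed form and then performing the resulting algebraic rearrangement. Since $\bW^{(k)} \in \RR^{n\times n}$ is the diagonal matrix with $k$ ones and $n-k$ zeros on its diagonal, the matrix $(\bW^{(k)})^\top \bW^{(k)} = \bW^{(k)}$ is an orthogonal projection of rank $k$; its empirical spectral distribution with respect to the normalized trace $\otr[\cdot] = \tr[\cdot]/n$ is the two-point law $\tfrac{k}{n}\delta_1 + (1 - \tfrac{k}{n})\delta_0$, which is also its almost sure limit under \Cref{cond:sketch-observation}. Write $q = k/n$ (or its limit).

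First I would compute the chain of free-probability transforms for the Bernoulli law $q\delta_1 + (1-q)\delta_0$, following the conventions fixed in \Cref{sec:preliminaries}. The Cauchy transform is $\cG(z) = q/(z-1) + (1-q)/z$; substituting into $\cM(z) = z^{-1}\cG(z^{-1}) - 1$ and simplifying gives $\cM(z) = qz/(1-z)$. Inverting the relation $w = qz/(1-z)$ for $z$ yields $\cM^{\langle -1\rangle}(w) = w/(q+w)$, and hence
\begin{align}
    \cS_{\bW^\top\bW}(w) = (1 + w^{-1})\,\cM^{\langle-1\rangle}(w) = \frac{1+w}{q+w}.
    \label{eq:s-transform-bernoulli-plan}
\end{align}
Here I would also check the bookkeeping required to invoke \Cref{thm:general-path}: that \eqref{eq:s-transform-bernoulli-plan} is analytic on the lower half plane (its only pole, $w = -q$, lies on the real axis), consistent with \Cref{cond:sketch-observation}, and that the evaluation point $-\odf(\hbeta_{\bI,\mu})$ lies in the domain of $\cS_{\bW^\top\bW}$ and in the regime covered by the hypotheses of \Cref{thm:general-path} (in particular $\odf(\hbeta_{\bI,\mu}) \neq k/n$ is the ridgeless endpoint $\lambda = 0$, which \eqref{eq:path} already permits).

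Finally I would substitute $w = -\odf(\hbeta_{\bI,\mu}) =: -v$ into \eqref{eq:s-transform-bernoulli-plan} to get $\cS_{\bW^\top\bW}(-v) = (1-v)/(q-v)$, plug this into \eqref{eq:path} to obtain $\lambda = \mu(q - v)/(1 - v)$, and rearrange: dividing by $\mu$ gives $q = v + (\lambda/\mu)(1-v)$, so that $1 - q = (1-v)(1 - \lambda/\mu)$, which is exactly \eqref{eq:subsample-wor-path} with $v = \odf(\hbeta_{\bI,\mu})$ and $q = k/n$. The algebra is routine; the only genuine content beyond \Cref{thm:general-path} is the $S$-transform identity \eqref{eq:s-transform-bernoulli-plan}, so I expect the main (and modest) obstacle to be keeping the conventions for the Cauchy, moment-generating, and $S$-transforms — and their domains of definition — exactly aligned with those fixed in the preliminaries and the cited free-probability references, rather than any substantive difficulty.
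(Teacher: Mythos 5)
Your proposal is correct and follows essentially the same route as the paper: compute the $S$-transform of the two-point Bernoulli law of $\bW^\top\bW$ via the Cauchy transform and moment-generating series chain, obtaining $\cS_{\bW^\top\bW}(w) = (1+w)/(w + k/n)$, then substitute into \eqref{eq:path} and rearrange algebraically. The only cosmetic difference is that you write out the Cauchy transform explicitly as $q/(z-1) + (1-q)/z$ before computing $\cM$, whereas the paper computes $\cM$ directly from the eigenvalues, and you add some domain/analyticity bookkeeping the paper omits — neither change affects the substance.
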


The relation \eqref{eq:subsample-wor-path} is remarkably simple, yet quite general! 
It provides a clear interplay between normalized target complexity $\odf(\hbeta_{\bI,\mu})$, regularization inflation $\lambda/\mu$, and subsample fraction $k/n$:
\begin{equation}
    \label{eq:target_complexity-regularization_inflation-subsample_fraction}
    (1 - \text{target complexity}) 
    \cdot 
    (1 - \text{regularization inflation}) 
    = (1 - \text{subsample fraction}).
\end{equation}
Since the normalized target complexity and subsample fraction are no greater than one, \eqref{eq:target_complexity-regularization_inflation-subsample_fraction} also implies that the regularization level $\lambda$ for the subsample estimator is always lower than the regularization level $\mu$ for the full estimator.
In other words, subsampling induces (positive) implicit regularization, reducing the need for explicit ridge regularization.
This is verified numerically in \Cref{fig:different-subsampling}.

For a fixed target regularization amount $\mu$, the degrees of freedom $\df(\hbeta_{\bI,\mu})$ of the ridge estimator on full data is fixed.
Thus, we can observe that the path in the $(k/n,\lambda)$-plane is a line.
There are two extreme cases: (1) when the subsample size $k$ is close to $n$, we have $\mu \approx \lambda$; and (2) when the subsample size is near $0$, we have $\mu \approx \infty$.
When $\lambda = 0$, the effective regularization level $\lambda$ is such that $\df(\hbeta_{\bW^{(k)},\lambda}) = \df(\hbeta_{\bI,\mu}) = k$, which we find to be a neat relation!

Beyond subsampling without replacement, one can also consider other subsample operators.
For example, for bootstrapping $k$ entries, we observe a similar equivalent path in \Cref{fig:different-subsampling-with-and-without-replacement}.
Additionally, for random sample reweighting, as shown in \Cref{fig:non-uniform-weights}, we also observe certain equivalence behaviors of degrees of freedom.
This indicates that \Cref{thm:general-path} also applies to more general weighting schemes.

\subsection{Examples of feature matrices}
\label{sec:feature-matrices-examples}

As mentioned in \Cref{sec:preliminaries}, when the feature matrix $\bPhi$ consists of i.i.d.\ Gaussian features, any deterministic matrix $\bW$ satisfies the condition stated in \Cref{cond:sketch-observation}. 
However, our results are not limited to Gaussian features. 
In this section, we will consider more general families of features commonly analyzed in machine learning and demonstrate the applicability of our results to them.

\emph{(1) Linear features.}
As a first example, we consider linear features composed of (multiplicatively) transformed i.i.d.\ entries with sufficiently bounded moments by a deterministic covariance matrix.

\begin{proposition}
    [Regularization paths with linear features]
    \label{prop:path-linear-features}
    Suppose the feature $\bphi$ can be decomposed as $\bphi = \bSigma^{1/2}\bz$, where $\bz \in \RR^{p}$ contains i.i.d.\ entries $z_{i}$ for $i = 1, \dots, p$ with mean $0$, variance $1$, and satisfies $\EE[|z_{i}|^{4 + \mu}] \le M_{\mu} < \infty$ for some $\mu > 0$ and a constant $M_{\mu}$, and $\bSigma \in \RR^{p \times p}$ is a deterministic symmetric matrix with eigenvalues uniformly bounded between constants $r_{\min}>0$ and $r_{\max}<\infty$. 
    Then, as $n, p \to \infty$ such that $p/n \to \gamma > 0$, the equivalences in \eqref{eq:estimator-df-equivalence} hold along the path \eqref{eq:subsample-wor-path}.
\end{proposition}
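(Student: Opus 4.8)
The plan is to derive \Cref{prop:path-linear-features} from \Cref{thm:subsample-wor-path} by checking that, for linear features $\bphi=\bSigma^{1/2}\bz$ and a $0/1$ diagonal subsampling operator $\bW^{(k)}$, the pair $(\bW^{(k)\top}\bW^{(k)},\ \bPhi\bPhi^\top/n)$ satisfies \Cref{cond:sketch-observation}. Two of the three ingredients are essentially immediate. Since $\bW^{(k)}$ is an idempotent $0/1$ matrix, $\bW^{(k)\top}\bW^{(k)}=\bW^{(k)}$ has limiting spectral distribution $(1-q)\delta_0+q\,\delta_1$ with $q=\lim_n k/n$, whose $S$-transform is the rational function $(1+w)/(q+w)$ appearing in the derivation of \eqref{eq:subsample-wor-path} and is analytic off its real pole at $w=-q$, in particular on the open lower half-plane. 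Writing $\bPhi=Z\bSigma^{1/2}$ with $Z\in\RR^{n\times p}$ carrying the i.i.d.\ entries, we have $\bPhi\bPhi^\top/n=Z\bSigma Z^\top/n$, which converges almost surely to a bounded operator by the anisotropic Marchenko--Pastur law under the $4+\mu$ moment bound and the spectral bounds on $\bSigma$. So the real content is the asymptotic infinitesimal freeness of $\bW^{(k)}$ and $\bPhi\bPhi^\top/n$ with respect to $(\otr[\cdot],\ \tr[\bC\,\cdot])$ for admissible $\bC$.

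First I would settle the Gaussian case $\bz\sim\cN(\zero_p,\bI_p)$. Then $Z\stackrel{d}{=}OZ$ for every fixed orthogonal $O$, so $\bPhi\bPhi^\top/n=Z\bSigma Z^\top/n\stackrel{d}{=}O(Z\bSigma Z^\top/n)O^\top$; hence $\bPhi\bPhi^\top/n$ is orthogonally invariant and equal in law to $\bU\Lambda\bU^\top$ with $\bU$ Haar on $\mathrm{O}(n)$, independent of $\Lambda$ and of the independently drawn $\bW^{(k)}$. The asymptotic infinitesimal freeness of such a randomly rotated matrix from an independent positive semidefinite one, with respect to exactly the functionals in \Cref{cond:sketch-observation}, is the example recorded just after that assumption (via \citep{cebron2022freeness}). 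Thus \Cref{cond:sketch-observation} holds, and \Cref{thm:subsample-wor-path} yields \eqref{eq:estimator-df-equivalence} along \eqref{eq:subsample-wor-path} for Gaussian linear features.

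For general entries I would not verify \Cref{cond:sketch-observation} directly — asymptotic freeness of a non-rotationally-invariant sample covariance matrix from a diagonal matrix is open in general — but instead transfer the \emph{conclusion} by universality. Every object in \eqref{eq:estimator-df-equivalence} is a bounded-trace-norm functional of the two ridge resolvents $(\bPhi^\top\bW^{(k)\top}\bW^{(k)}\bPhi/n+\lambda\bI_p)^{\dagger}$ and $(\bPhi^\top\bPhi/n+\mu\bI_p)^{\dagger}$: the degrees of freedom are traces of the corresponding ridge smoothers, while $\bc^\top\hbeta_{\bW^{(k)},\lambda}$ is a smooth functional of the first resolvent and of $\by$. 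For subsampling, $\bPhi^\top\bW^{(k)\top}\bW^{(k)}\bPhi/n=(k/n)\,\bSigma^{1/2}(Z_S^\top Z_S/k)\bSigma^{1/2}$ with $Z_S$ the $k$ selected rows of $Z$, a rescaled anisotropic sample covariance matrix with aspect ratio $p/k$; the standard anisotropic deterministic equivalents for such matrices (as used for linear features in \citep{patil2022bagging,patil2023generalized}) show that each resolvent, hence each functional in \eqref{eq:estimator-df-equivalence}, has an almost-sure limit determined by $\bSigma$ and the aspect ratio through a fixed-point equation insensitive to the law of the entries beyond their first two moments and the $4+\mu$ bound. Since \eqref{eq:estimator-df-equivalence} is an identity between these universal limits and already holds in the Gaussian case, it holds in general; the calibration \eqref{eq:subsample-wor-path} is unchanged because it involves only $\odf(\hbeta_{\bI,\mu})$ and $k/n$, both universal.

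The main obstacle is making this transfer rigorous at the required uniformity and range. One needs the deterministic equivalent at the level of the resolvent \emph{matrix}, not merely its normalized trace, and uniformly over test vectors $\bc$ with $\|\bc\|_2\le 1$, in order to control $\bc^\top(\hbeta_{\bW^{(k)},\lambda}-\hbeta_{\bI,\mu})$; and it must remain valid throughout the admissible window, including $\lambda=0$ (ridgeless), $\lambda<0$, and $\mu$ down to $-\liminf_n\lambda_{\min}^+(\bG_{\bI})$, which requires the relevant resolvents to stay separated from the edge of the limiting spectrum — guaranteed here by $r_{\min}>0$ together with $p/n\to\gamma>0$. The randomness of $\bW^{(k)}$ is handled routinely by conditioning on the sub-selection, applying the deterministic equivalents, and averaging using concentration of $k/n$; a possibly feature-dependent response of bounded norm is handled by the same resolvent calculus, exactly as in \citep{patil2023generalized}.
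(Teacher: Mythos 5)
The route you take is genuinely different from the paper's. The paper does not verify \Cref{cond:sketch-observation} for linear features at all; instead, it observes that \citep{patil2023generalized} already established the estimator and degrees-of-freedom equivalences for linear features along an explicit path written in terms of the companion Stieltjes transform $v(\mu,\gamma)$ (their Equation~(5) and Proposition~4), and then the proof reduces to a short algebraic computation showing that path coincides with \eqref{eq:subsample-wor-path} once one uses $\odf(\hbeta_{\bI,\mu}) = 1-\mu v(\mu,\gamma)$ (their Lemma~B.2). Your argument is more self-contained: you verify the Gaussian case by orthogonal invariance plus the Haar-conjugation freeness result of \citep{cebron2022freeness} and then transfer to general moment-bounded entries via anisotropic deterministic equivalents for resolvents of sample covariance matrices. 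Both are valid strategies, and you are candid about why a direct verification of infinitesimal freeness for non-rotationally-invariant $Z\bSigma Z^\top$ is not available. The tradeoff is that the paper's route is a two-line calculation that leans entirely on prior work, whereas yours requires the additional machinery you correctly flag in your last paragraph: resolvent-\emph{matrix}-level (not just trace-level) deterministic equivalents, uniformity over test directions $\bc$, and control down to the spectral edge including $\lambda=0$ and $\lambda<0$. These ingredients do exist in the literature cited, so the plan is sound, but it carries more technical burden than the paper's reduction. One small point worth tightening in a full write-up: when you transfer by universality, you need to argue that almost-sure convergence (as opposed to convergence in probability) survives the truncation/interpolation step; for the $4+\mu$ moment assumption this is handled by the standard truncation argument already cited in this paper's kernel proposition proof, so it is fixable, but it should be stated.
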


Features of this type are common in random matrix theory \citep{bai2010spectral} and in a wide range of applications, including statistical physics \citep{sollich2001gaussian,bordelon2020spectrum}, high-dimensional statistics \citep{serdobolskii2008multiparametric,paul2014random,dobriban2018high}, machine learning \citep{couillet2022random}, among others.
The generalized path \eqref{eq:path} in \Cref{thm:subsample-wor-path} recovers the path in Proposition 4 of \citep{patil2023generalized}.
Although the technique in this paper is quite different and more general than that of \citep{patil2023generalized}.

\begin{figure*}[t]
    \centering
    \includegraphics[width=0.99\textwidth]{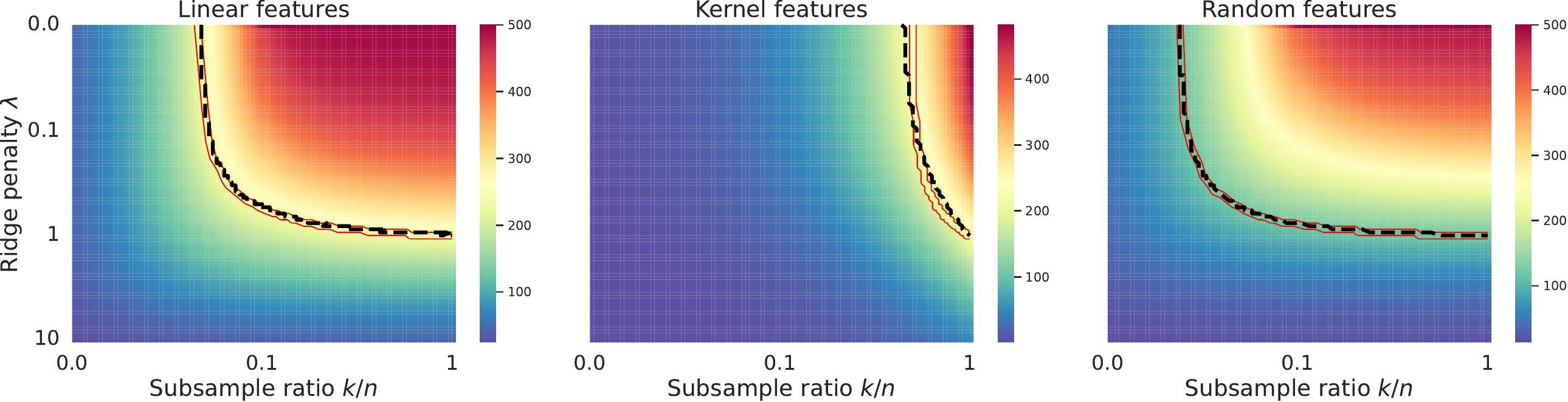}
    \caption{
        Equivalence of degrees of freedom for various feature structures under subsampling.
        The three panels correspond to linear features, random features with ReLU activation function (2-layer), and kernel features (polynomial kernel with degree 3 and without intercept), respectively.
        In all heatmaps, the red color lines indicate the predicted paths using \Cref{eq:subsample-wor-path}, and the black dashed lines indicate the empirical paths by matching the empirical degrees of freedom.
        The data is generated according to \Cref{subsec:simu} with $n=5000$ and $p=500$, and the results are averaged over $M=100$ random weight matrices $\bW$.
    }
    \label{fig:different-features}
\end{figure*}

\emph{(2) Kernel features.}
As the second example, \Cref{thm:subsample-wor-path} also applies to kernel features.
Kernel features are a generalization of linear features and lift the input feature space to a high- or infinite-dimensional feature space by applying a feature map $\bx \mapsto \bphi(\bx)$.
Kernel methods use the kernel function $K(\bx_i, \bx_j) = \langle \bphi(\bx_i), \bphi(\bx_j) \rangle$ to compute the inner product in the lifted space.

\begin{proposition}
    [Regularization paths with kernel features]
    \label{prop:path-kernel-features}
    Suppose the same conditions as in \Cref{prop:path-linear-features} and the kernel function is of the form 
    $K(\bx_i, \bx_j)=g({\left\|\bx_i\right\|_2^2}/{p}, {\left\langle\bx_i, \bx_j\right\rangle}/{p}, {\left\|\bx_j\right\|_2^2}/{p})$,
    where $g$ is $\cC^1$ around $(\tau, \tau, \tau)$ and $\cC^3$ around $(\tau, 0, \tau)$ and $\tau := \lim_{p \to \infty} \tr[\bSigma] / d$.  
    Then, as $n \to \infty$, the equivalences in \eqref{eq:estimator-df-equivalence} hold in probability along the path \eqref{eq:subsample-wor-path}.
\end{proposition}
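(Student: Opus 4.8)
The plan is to reduce the claim to the linear-feature case of \Cref{prop:path-linear-features} through a spectral linearization of the kernel Gram matrix. Write $\bX = [\bx_1, \dots, \bx_n]^\top$, where $\bx_i = \bSigma^{1/2}\bz_i$ with $\bz_i$ as in \Cref{prop:path-linear-features}, and let $\bK = [K(\bx_i, \bx_j)]_{i,j=1}^{n}$, so that in the notation of \Cref{thm:general-path} the unweighted Gram matrix for the kernel problem is $\bG_{\bI} = \bK/n$ (the penalty $\mu$ now plays the role of the ridge level in \eqref{eq:ridge-subsample-def}) and the subsampled Gram matrix is $\bG_{\bW^{(k)}} = \bW^{(k)}\,(\bK/n)\,(\bW^{(k)})^{\top}$. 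Under the linear model $\bx = \bSigma^{1/2}\bz$ inherited from \Cref{prop:path-linear-features} and the $(4+\mu)$-moment bound, $\|\bx_i\|_2^2/p \to \tau$ and $\langle\bx_i, \bx_j\rangle/p \to 0$ for $i\neq j$, both at rate $p^{-1/2}$. Taylor-expanding $g$ around $(\tau,\tau,\tau)$ on the diagonal (using the $\cC^1$ hypothesis) and around $(\tau, 0, \tau)$ off the diagonal (using the $\cC^3$ hypothesis), and controlling the resulting random-matrix remainders in operator norm by El~Karoui-type arguments, I would establish that
\begin{equation}
    \bG_{\bI} \;=\; \frac{\bK}{n} \;\asympequi\; \frac{a_0}{n}\,\mathbf{1}_n\mathbf{1}_n^\top \;+\; \frac{a_1}{np}\,\bX\bX^\top \;+\; \frac{a_2}{n}\,\bI_n,
    \label{eq:kernel-linearization-plan}
\end{equation}
with $a_0 = g(\tau, 0, \tau)$, $a_1 = \partial_2 g(\tau, 0, \tau)$, and $a_2 = g(\tau,\tau,\tau) - a_0 - \tau a_1$, where the remainder --- comprising the nonlinear Taylor remainder, the higher-order Hadamard-power terms from the off-diagonal expansion, and the rank-two ``$\|\bx_i\|_2^2$-fluctuation'' piece --- is negligible in operator norm after the $1/n$ normalization defining $\bG_{\bI}$.

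Granting \eqref{eq:kernel-linearization-plan}, I would feed the three surviving pieces into the ridge problem. The matrix $\bX\bX^\top$ is (a scalar multiple of) the linear-feature Gram matrix governed by \Cref{prop:path-linear-features}; the term $(a_2/n)\bI_n$ is an identity shift that, in the $\bK/n$ scaling, amounts to an $O(1/n)$ reparametrization of the ridge level and is asymptotically inert; and $(a_0/n)\mathbf{1}_n\mathbf{1}_n^\top$ is a genuine rank-one spike. Since $\bW^{(k)}$ is a diagonal $0/1$ matrix with $\bW^{(k)}(\bW^{(k)})^{\top} = \bW^{(k)}$, conjugation by $\bW^{(k)}$ distributes over the three pieces, so the same decomposition holds for $\bG_{\bW^{(k)}}$ with the spike replaced by $(a_0/n)\bW^{(k)}\mathbf{1}_n\mathbf{1}_n^\top(\bW^{(k)})^{\top}$, still rank one. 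Applying \Cref{prop:path-linear-features} to the linear-feature component --- at the reparametrized ridge levels that \eqref{eq:kernel-linearization-plan} produces --- yields the degrees-of-freedom and estimator equivalences along its path; one then checks that, re-expressed in the variables $(\mu, \lambda, k/n)$, this path coincides with \eqref{eq:subsample-wor-path}, the key points being that the reparametrization of the ridge level preserves the limiting ratio $\lambda/\mu$ and that the limit of $\odf(\hbeta_{\bI,\mu})$ is the same whether computed from the kernel Gram $\bK/n$ or from the reduced linear problem. (When $a_1 = 0$, as for a pure cubic kernel, the linear component is absent, $\bG_{\bI}$ is a scaled identity plus spike, and the same conclusion follows directly without \Cref{prop:path-linear-features}.)

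It remains to discard the rank-one spike and to record the mode of convergence. The spike $(a_0/n)\mathbf{1}_n\mathbf{1}_n^\top$ and its subsampled counterpart perturb every ridge resolvent, every degrees-of-freedom trace, and every test-vector functional $\bc^\top\hbeta$ by a rank-one quantity of size $O(1)$ --- hence $o(n)$ for the unnormalized degrees of freedom and $o(1)$ for the normalized functionals --- so it is inert under $\asympequi$; this is absorbed through the Woodbury identity together with the resolvent stability bounds already used for \Cref{thm:general-path}, which likewise absorb the operator-norm-negligible remainder in \eqref{eq:kernel-linearization-plan}. Finally, because El~Karoui's linearization controls the operator norm of the remainder only in probability under the bare $(4+\mu)$-moment assumption, all of the resulting equivalences hold in probability, which is the stated conclusion.

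The main obstacle is the linearization \eqref{eq:kernel-linearization-plan}: one must bound the remainder in \emph{operator} norm, not merely entrywise, while correctly separating the retained rank-one spike (operator norm $\Theta(n)$) from the many lower-order contributions --- the rank-two fluctuation term, the second- and third-order Hadamard powers, and the nonlinear Taylor remainder --- that become negligible only after the $1/n$ normalization. Doing this under the same $(4+\mu)$-moment hypothesis as \Cref{prop:path-linear-features}, rather than invoking a heavier-moment version of the kernel-random-matrix linearization, is the delicate point; it requires sharp concentration of the quadratic forms $\bz_i^\top\bSigma\bz_i$ and $\bz_i^\top\bSigma\bz_j$ together with an operator-norm bound for random matrices with weakly dependent, possibly heavy-tailed entries. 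A secondary point is to verify that \Cref{prop:path-linear-features} (and the $S$-transform computation behind \eqref{eq:subsample-wor-path}) remains valid, uniformly enough, at the ridge level produced by the reduction, which grows with $n$.
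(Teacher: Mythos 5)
Your core idea—linearize the kernel Gram matrix via the El~Karoui/Sahraee-Ardakan expansion into an identity shift, a rank-one spike, and a $\bX\bX^\top$ piece, and transfer the resolvent equivalence across that operator-norm-small perturbation—is exactly the first half of the paper's proof: the paper invokes Proposition~5.1 of Sahraee-Ardakan et al.\ to get $\|\bG_{\bI}-\bG_{\bI}^{\lin}\|_{\oper}\pto 0$ with $\bG_{\bI}^{\lin}=c_0\bI_n+c_1\one_n\one_n^\top+c_2\bX\bX^\top$, and then passes the equivalence of resolvents through the matrix identity $\bA^{-1}-\bB^{-1}=\bA^{-1}(\bB-\bA)\bB^{-1}$. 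Up to that point you are on the paper's track, and your observation that the mode of convergence degrades to ``in probability'' because the operator-norm linearization only holds in probability is correct and matches the statement.

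The gap is in the second half, where you route through \Cref{prop:path-linear-features} rather than through the general lemma. The paper applies \Cref{lem:general-first-order} (equivalently \Cref{thm:general-path}) to the \emph{entire} linearized Gram matrix $\bG_{\bI}^{\lin}$; since that lemma makes no structural assumption on the Gram matrix beyond \Cref{cond:sketch-observation}, all three pieces (identity, spike, linear) are handled at once, and the degrees of freedom appearing in the path is, by construction, the degrees of freedom of the full kernel problem. Your plan instead peels off the $\bX\bX^\top$ piece and feeds it into \Cref{prop:path-linear-features} at a ``reparametrized'' ridge level, then asserts the identity shift and the rank-one spike are asymptotically inert. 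This does not go through: the identity shift $c_0\bI_n$ has operator norm of the same order as the ridge penalty $\mu$ (the paper's normalization is $\bG_{\bI}=K(\bX,\bX)$, not $K(\bX,\bX)/n$; with your $1/n$ normalization all three pieces become $O(1/n)$ and the problem degenerates), and it enters the degrees of freedom nontrivially. Concretely, with eigenvalues $s_i$ of $c_2\bX\bX^\top$, the kernel degrees of freedom are $\sum_i(c_0+s_i)/(c_0+\mu+s_i)$ while the reduced linear problem at level $\mu'=\mu+c_0$ has $\sum_i s_i/(c_0+\mu+s_i)$; the difference is $c_0\tr[(\bG_{\bI}^{\lin}+\mu\bI)^{-1}]=\Theta(n)$, so $\odf$ does \emph{not} match across the reparametrization. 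Moreover, the additive shift $\mu\mapsto\mu+c_0$, $\lambda\mapsto\lambda+c_0$ does not preserve the ratio $\lambda/\mu$ appearing in \eqref{eq:subsample-wor-path}, contradicting your ``key point.'' Both of these would need to hold for the two paths to coincide, and neither does. The rank-one spike argument via Woodbury, while plausible, also requires care because \eqref{eq:subsample-wor-path} is not linear in the resolvent; but that issue is moot once you apply the general lemma to the whole linearized matrix, since freeness of $\bW^\top\bW$ with $\bG_{\bI}^{\lin}$ under \Cref{cond:sketch-observation} covers the spike automatically. The fix, then, is simply to replace ``apply \Cref{prop:path-linear-features} to the linear piece'' with ``apply \Cref{lem:general-first-order} to $\bG_{\bI}^{\lin}$,'' as the paper does.
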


The assumption in \Cref{prop:path-kernel-features} is commonly used in the risk analysis of kernel ridge regression \citep{cui2021generalization,barthelme2023gaussian,ghorbani2020neural,sahraee2022kernel}, among others.
It includes neural tangent kernels (NTKs) as a special case.
\Cref{prop:path-kernel-features} confirms Conjecture 8 of \citep{patil2023generalized} for these types of kernel functions.

\emph{(3) Random features.}
Finally, we consider random features that were introduced by \citep{rahimi2007random} as a way to scale kernel methods to large datasets.
Linked closely to two-layer neural networks \citep{mei_montanari_2022}, the random feature model has $\fnn(\bx) = \sigma(\bF\bx)$, where $\bF\in\RR^{d\times p}$ is some randomly initialized weight matrix, and $\sigma:\RR\rightarrow\RR$ is a nonlinear activation function applied element-wise to $\bF\bx$.

\begin{proposition}
    [Regularization paths with random features]
    \label{prop:path-random-features}
    Suppose $\bx_i\sim\cN(\zero,\bSigma)$ and the activation function $\sigma: \mathbb{R} \rightarrow \mathbb{R}$ is differentiable almost everywhere and there are constants $c_0$ and $c_1$ such that $|\sigma(x)|,\left|\sigma^{\prime}(x)\right| \leq c_0 e^{c_1 x}$, whenever $\sigma^{\prime}(x)$ exists. 
    Then, as $n,p,d\rightarrow\infty$ such that $p/n \to  \gamma> 0$ and $d/n \to \xi> 0$, the equivalences in \eqref{eq:estimator-df-equivalence} hold in probability along the path \eqref{eq:subsample-wor-path}.
\end{proposition}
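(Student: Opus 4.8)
The plan is to deduce the statement from the linear‑feature case, \Cref{prop:path-linear-features}, by way of a Gaussian equivalence for random features applied at the level of the functionals that enter \Cref{thm:subsample-wor-path}. Two structural observations make this reduction work. First, subsampling without replacement acts row‑wise: if $S\subseteq\{1,\dots,n\}$ with $|S|=k$ is the selected index set, then $\bW^{(k)}\bPhi$ is, after discarding its zero rows, again a $k$‑row random‑feature matrix built from $\{\bx_i : i\in S\}$; consequently every object in \eqref{eq:estimator-df-equivalence} — the degrees of freedom $\df(\hbeta_{\bI,\mu})$ and $\df(\hbeta_{\bW,\lambda})$, and the linear functionals $\bc^{\top}\hbeta_{\bW,\lambda}$, $\bc^{\top}\hbeta_{\bI,\mu}$ — is a functional of a random‑feature matrix with $n$ or with $k$ rows, and $\bW^{(k)}$ (a diagonal $\{0,1\}$‑projection, so $(\bW^{(k)})^{\top}\bW^{(k)}=\bW^{(k)}$ with two‑point limiting spectrum $(1-k/n)\delta_0+(k/n)\delta_1$) is independent of it. Second, conditionally on $\bF$ (standard i.i.d.\ initialization) and under $\bx_i\sim\cN(\zero,\bSigma)$ together with the exponential growth bound on $\sigma,\sigma'$ — which is exactly the tail control that makes the linearization concentration estimates go through — the random‑feature matrix admits the Gaussian equivalent
\begin{equation*}
    \bPhi_{\mathrm{lin}} \;=\; \nu_\star\,\bZ \;+\; \nu_1\,\frac{1}{\sqrt{d}}\,\bX\bF^{\top} \;+\; \nu_0\,\mathbf{1}_n\mathbf{1}_p^{\top},
\end{equation*}
where $\bZ$ has i.i.d.\ $\cN(0,1)$ entries, $\bX=[\bx_1,\dots,\bx_n]^{\top}$, and $(\nu_0,\nu_1,\nu_\star)$ are the relevant Gaussian moments of $\sigma$. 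Conditionally on $\bF$ and after dropping the asymptotically negligible rank‑one term, $\bPhi_{\mathrm{lin}}$ has i.i.d.\ rows $\cN(\zero,\tilde\bSigma)$ with $\tilde\bSigma=\nu_\star^2\bI_p+\nu_1^2\,\bF\bSigma\bF^{\top}/d$, which is well‑conditioned when $\sigma$ is genuinely nonlinear; hence \Cref{prop:path-linear-features} (equivalently, the Gaussian‑feature instance of \Cref{thm:subsample-wor-path} applied conditionally on $\bW^{(k)}$ and $\bF$) applies to $\bPhi_{\mathrm{lin}}$.

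With this in place I would proceed in three steps. (i) Establish the linearization in the strong form required: the agreement of the normalized resolvent traces $\otr[(\bPhi\bPhi^{\top}/n+\mu\bI_n)^{-1}]$ with their $\bPhi_{\mathrm{lin}}$ counterparts (and the same with $k$ in place of $n$), which controls $\odf(\hbeta_{\bI,\mu})$ and $\df(\hbeta_{\bW,\lambda})$; and, more delicately, the agreement of the bilinear forms of the relevant ridge resolvents that represent $\bc^{\top}\hbeta_{\bW,\lambda}$ and $\bc^{\top}\hbeta_{\bI,\mu}$, for test vectors of bounded $\ell_2$ norm and for the response $\by$ with $\limsup\|\by\|_2^2/n<\infty$. (ii) Apply \Cref{prop:path-linear-features} to $\bPhi_{\mathrm{lin}}$ to obtain $\hbeta^{\mathrm{lin}}_{\bW,\lambda}\asympequi\hbeta^{\mathrm{lin}}_{\bI,\mu}$ and $\df(\hbeta^{\mathrm{lin}}_{\bW,\lambda})\asympequi\df(\hbeta^{\mathrm{lin}}_{\bI,\mu})$ along the path \eqref{eq:subsample-wor-path} determined by $\odf(\hbeta^{\mathrm{lin}}_{\bI,\mu})$. (iii) Transfer back: by step (i), $\odf(\hbeta^{\mathrm{RF}}_{\bI,\mu})$ and $\odf(\hbeta^{\mathrm{lin}}_{\bI,\mu})$ share the same limit, so the random‑feature path coincides with the linear‑feature path \eqref{eq:subsample-wor-path}; and $\bc^{\top}(\hbeta^{\mathrm{RF}}_{\bW,\lambda}-\hbeta^{\mathrm{RF}}_{\bI,\mu})$ is bounded by the two linearization errors from step (i) plus $\bc^{\top}(\hbeta^{\mathrm{lin}}_{\bW,\lambda}-\hbeta^{\mathrm{lin}}_{\bI,\mu})$, each vanishing, with $\df(\hbeta^{\mathrm{RF}}_{\bW,\lambda})\asympequi\df(\hbeta^{\mathrm{RF}}_{\bI,\mu})$ following from the same comparison. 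The conclusion holds in probability, matching the high‑probability nature of the linearization estimates.

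The main obstacle is the strong form of the linearization in step (i). Its spectral (Stieltjes‑transform) version is classical and suffices only to pin down the path; the estimator equivalence $\hbeta_{\bW,\lambda}\asympequi\hbeta_{\bI,\mu}$ requires an anisotropic statement about general bilinear forms of ridge resolvents of $\bW^{(k)}\bPhi$, and it must hold uniformly over the (conditionally deterministic) subsample $S$ — this is where the deterministic‑equivalent tools underlying \Cref{thm:general-path} have to be combined with the random‑feature linearization. A secondary nuisance is the rank‑one term $\nu_0\mathbf{1}_n\mathbf{1}_p^{\top}$: it perturbs $\bPhi^{\top}\bPhi$ by a rank‑one matrix of operator norm of order $n$, so one must check (e.g.\ via Sherman--Morrison, using that the spike lies along the delocalized direction $\mathbf{1}_p$) that it contributes negligibly to all functionals tested against vectors and matrices of bounded norm; alternatively one may first reduce to a centered activation and handle the constant feature separately. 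The remaining ingredients — verifying that $\tilde\bSigma$ meets the hypotheses of \Cref{prop:path-linear-features} and the triangle‑inequality bookkeeping — are routine.
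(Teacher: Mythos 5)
Your proposal is essentially the paper's argument: replace the random-feature matrix by its Gaussian equivalent $\bPhi^{\lin}$ and then invoke the linear/Gaussian-feature machinery for weighted ridge resolvents. The paper dispatches the strong (anisotropic, bilinear-form) version of the linearization --- which you correctly single out as the main obstacle --- by citing Claim~A.13 of Lee et al.\ (2023), and then applies \Cref{lem:general-first-order} directly to $\bPhi^{\lin}$ rather than conditioning on $\bF$ and reducing to \Cref{prop:path-linear-features} as you do; the two reductions are interchangeable once that claim is in hand.
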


As mentioned in the related work, random feature models have recently been used as a standard model to study various generalization phenomena observed in neural networks theoretically \citep{mei_montanari_2022,adlam2022random}.
\Cref{prop:path-random-features} resolves Conjecture 7 of \citep{patil2023generalized} under mild regularity conditions on the activation function.

It is worth noting that the prior works mentioned above, including \citep{patil2023generalized}, have focused on first characterizing the risk asymptotics in terms of various population quantities for each of the cases above. 
In contrast, our work in this paper deviates from these approaches by not expressing the risk in population quantities but rather by directly relating the estimators at different regularization levels. 
In the next section, we will explore the relationship between their squared prediction risks.

\section{Prediction risk asymptotics and risk estimation}
\label{sec:risk-equivalences}

The results in the previous section provide first-order equivalences of the estimators, which are related to the bias of the estimators. 
In practice, we are also interested in the predictive performance of the estimators.
In this section, we investigate the second-order equivalence of weighting and ridge regularization through ensembling.
Specifically, we show that aggregating estimators fitted on different weighted datasets also reduces the additional variance.
Furthermore, the prediction risks of the full-ensemble weighted estimator and the unweighted estimator also match along the path.

Before presenting our risk equivalence result, we first introduce some additional notation.
Assume there are $M$ i.i.d.\ weight matrices $\bW_1, \dots, \bW_M \in \RR^{n \times n}$.
The \emph{$M$-ensemble} estimator is defined as:
\begin{align}\label{eq:def-full-ensemble}
    \hbeta_{\bW_{1:M},\lambda} =  
    M^{-1} \sum_{m=1}^{M} \hbeta_{\bW_{m},\lambda},
\end{align}
and its performance is quantified by the conditional squared prediction risk, given by:
\begin{equation}
    \label{eq:ridge_prederr}
    R(\hbeta_{\bW_{1:M},\lambda})
    = \EE_{\bx_0, y_0} [(y_0 - \bphi_0^\top \hbeta_{\bW_{1:M},\lambda})^2 \mid \bPhi, \by, \{\bW_m\}_{m=1}^{M}],
\end{equation}
where $(\bx_0, y_0)$ is a test point sampled independently from some distribution $P_{\bx_0, y_0}$ that may be different from the training distribution $P_{\bx, y}$, and $\bphi_0=\fnn(\bx_0)$ is the pretrained feature at the test point.
The covariance matrix of the test features $\bphi_0$ is denoted by $\bSigma_0$.
    When $P_{\bx_0,y_0}=P_{\bx,y}$, we refer to it as the in-distribution risk. 
On the other hand, when $P_{\bx_0,y_0}$ differs from $P_{\bx,y}$, we refer to it as the out-of-distribution risk.
Note that the conditional risk $R_{M}$ is a scalar random variable that depends on both the dataset $(\bPhi, \by)$ and the weight matrix $\bW_{m}$ for $m \in [M]$.
Our goal in this section is to analyze the prediction risk of the ensemble estimator \eqref{eq:def-full-ensemble} for any ensemble size $M$.

\begin{theorem}
    [Risk equivalence along the path]
    \label{thm:risk-equivalence}
    Under the setting of \Cref{thm:general-path}, assume that the operator norm of $\bSigma_0$ is uniformly bounded in $p$ and that each response variable $y_i$ for $i = 1, \dots, n$ has mean $0$ and satisfies $\EE[| y_i |^{4+ \mu}] \leq M_{\mu} < \infty$ for some $\mu,M_{\mu} > 0$. 
    Then, along the path \eqref{eq:subsample-wor-path},
    \begin{equation}
        \label{eq:decomp-M}
        R(\hbeta_{\bW_{1:M},\lambda}) 
        \asympequi 
        R(\hbeta_{\bI,\mu}) +  \frac{C}{M} \otr[(\bG_{\bI} + \mu \bI)^{\dagger} \by \by^\top (\bG_{\bI} + \mu \bI_n)^{\dagger}],
    \end{equation}
    where the constant $C$ is given by:
    \begin{equation}
        \label{eq:constant-risk-decomposition}
        C = - 
        {\partial \mu / \partial \lambda}
        \cdot
        \lambda^2 \cS_{\bW^\top \bW}'(-\df(\hbeta_{\bI,\mu})) \otr[(\bG_{\bI} + \mu \bI)^{\dagger} (\bPhi \bSigma_0 \bPhi^\top / n) (\bG_{\bI} + \mu \bI)^{\dagger}].
    \end{equation}
\end{theorem}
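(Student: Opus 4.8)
The plan is to decompose the ensemble risk into terms that can be individually controlled using the first-order equivalence of \Cref{thm:general-path} and a variance-reduction argument for the ensemble. First I would write the conditional risk as $R(\hbeta_{\bW_{1:M},\lambda}) = \EE_{\bx_0,y_0}[y_0^2] - 2\EE_{\bx_0,y_0}[y_0\bphi_0^\top]\hbeta_{\bW_{1:M},\lambda} + \hbeta_{\bW_{1:M},\lambda}^\top \bSigma_0 \hbeta_{\bW_{1:M},\lambda}$, so the only nonconstant pieces are a linear functional of the ensemble estimator and a quadratic form $\hbeta_{\bW_{1:M},\lambda}^\top \bSigma_0 \hbeta_{\bW_{1:M},\lambda}$. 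For the linear term, since $\bSigma_0$-weighted test vectors have bounded norm, \Cref{thm:general-path} immediately gives $\EE_{\bx_0,y_0}[y_0\bphi_0^\top](\hbeta_{\bW_m,\lambda} - \hbeta_{\bI,\mu}) \asto 0$ for each $m$, hence the same for the average. The crux is therefore the quadratic term, which I would expand as
\begin{equation}
    \hbeta_{\bW_{1:M},\lambda}^\top \bSigma_0 \hbeta_{\bW_{1:M},\lambda}
    = \frac{1}{M^2}\sum_{m} \hbeta_{\bW_m,\lambda}^\top \bSigma_0 \hbeta_{\bW_m,\lambda}
    + \frac{1}{M^2}\sum_{m\neq m'} \hbeta_{\bW_m,\lambda}^\top \bSigma_0 \hbeta_{\bW_{m'},\lambda}.
\end{equation}

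The off-diagonal ("cross") terms involve two independent weight matrices conditioned on the data, so I would establish a \emph{cross equivalence}: $\hbeta_{\bW_m,\lambda}^\top \bSigma_0 \hbeta_{\bW_{m'},\lambda} \asympequi \hbeta_{\bI,\mu}^\top \bSigma_0 \hbeta_{\bI,\mu}$ for $m\neq m'$. This should follow by conditioning on $\bW_{m'}$ (and the data), treating $\bphi_0^\top\hbeta_{\bW_{m'},\lambda}$ as playing the role of a fixed bounded-norm test functional, applying \Cref{thm:general-path} to replace $\hbeta_{\bW_m,\lambda}$ by $\hbeta_{\bI,\mu}$, and then repeating the argument to replace $\hbeta_{\bW_{m'},\lambda}$ by $\hbeta_{\bI,\mu}$. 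The diagonal terms $\hbeta_{\bW_m,\lambda}^\top \bSigma_0 \hbeta_{\bW_m,\lambda}$, on the other hand, do \emph{not} collapse to $\hbeta_{\bI,\mu}^\top \bSigma_0 \hbeta_{\bI,\mu}$; they carry the extra variance. Writing $\hbeta_{\bW_m,\lambda} = \bA_{\bW_m} \by$ with $\bA_{\bW} = (\bPhi^\top\bW^\top\bW\bPhi + n\lambda\bI_p)^\dagger \bPhi^\top\bW^\top\bW$ and correspondingly $\hbeta_{\bI,\mu} = \bA_{\bI}\by$, the gap is $\EE_{\bW}[\by^\top\bA_{\bW}^\top\bSigma_0\bA_{\bW}\by] - \by^\top\bA_{\bI}^\top\bSigma_0\bA_{\bI}\by$. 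I would compute the limit of this gap using the free-probability resolvent tools from \citep{lejeune2024asymptotics}: express $\bA_{\bW}$ in resolvent form, use asymptotic freeness of $\bW^\top\bW$ and $\bG_{\bI}$ to evaluate $\EE_{\bW}$ of the relevant bilinear forms, and show the gap equals the stated $\otr[(\bG_{\bI}+\mu\bI)^\dagger \by\by^\top(\bG_{\bI}+\mu\bI)^\dagger]$ times the constant $C$. Collecting terms, the $\frac{1}{M^2}\sum_{m\neq m'}(\approx M^2-M$ terms$)$ contributes the in-distribution-like risk and the $\frac{1}{M^2}\sum_m (M$ terms$)$ contributes a $1/M$ multiple of the diagonal value, whose difference from the cross value is exactly the $C/M$ correction; combined with the linear term this yields \eqref{eq:decomp-M}.

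For the explicit form of $C$ in \eqref{eq:constant-risk-decomposition}, the key computation is differentiating the path relation \eqref{eq:path}, i.e.\ $\lambda = \mu/\cS_{\bW^\top\bW}(-\odf(\hbeta_{\bI,\mu}))$, to obtain $\partial\mu/\partial\lambda$, and tracking how the weighted Gram resolvent $(\bG_{\bW}+\lambda\bI)^{-1}$ relates to the unweighted resolvent $(\bG_{\bI}+\mu\bI)^{-1}$ via the $S$-transform linearization. The factor $\lambda^2 \cS_{\bW^\top\bW}'(-\df(\hbeta_{\bI,\mu}))$ arises from the first-order variation of the implicit regularization map as the weight fluctuates around its mean, and the trace $\otr[(\bG_{\bI}+\mu\bI)^\dagger(\bPhi\bSigma_0\bPhi^\top/n)(\bG_{\bI}+\mu\bI)^\dagger]$ is the test-covariance-weighted version of the same resolvent sandwich that governs the diagonal term. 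I would verify the in-distribution special case $\bSigma_0 = \bSigma$ separately as a sanity check against the risk-equivalence result of \citep{patil2023generalized}.

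The main obstacle I anticipate is the diagonal-term variance computation: establishing the exact limit $\EE_{\bW}[\by^\top\bA_{\bW}^\top\bSigma_0\bA_{\bW}\by] - \by^\top\bA_{\bI}^\top\bSigma_0\bA_{\bI}\by \asympequi (C/M')$-type expression with the precise constant requires a careful second-order free-probability expansion (not just the first-order deterministic-equivalent used in \Cref{thm:general-path}), since the diagonal term is where the weight randomness does not average out. In particular, one must handle products of two resolvents of $\bG_{\bW}$ with a deterministic $\bSigma_0$-sandwich in between, which calls for the infinitesimal freeness / second-order freeness machinery referenced in \Cref{cond:sketch-observation}, and it is the analyticity of the $S$-transform on the lower half-plane (the extra hypothesis in \Cref{cond:sketch-observation}) that should guarantee the relevant derivatives $\cS_{\bW^\top\bW}'$ exist and the expansion is valid at the argument $-\odf(\hbeta_{\bI,\mu})$. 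A secondary technical point is controlling the $\EE_{\bx_0,y_0}$-expectation and the moment conditions on $y_i$ simultaneously with the almost-sure statements, which I would handle by the standard truncation/concentration arguments already implicit in the bounded-trace-norm equivalence relation $\asympequi$.
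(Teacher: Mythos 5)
Your proposal matches the paper's proof in all essentials: the same diagonal/off-diagonal split of the quadratic form, the same cross-term collapse via two applications of the first-order resolvent equivalence (using independence of $\bW_m$ and $\bW_{m'}$), and the same recognition that the diagonal term requires a second-order deterministic equivalent for the $\bSigma_0$-sandwiched product of weighted ridge resolvents (the paper's \Cref{lem:general-second-order-subsample}, which is exactly the second-order freeness machinery you anticipate from \citep{lejeune2024asymptotics} and \citep{patil2023asymptotically}), with the constant $C$ arising from $\partial\mu/\partial\lambda$ and $\cS_{\bW^\top\bW}'$ just as you describe. The only cosmetic difference is that the paper writes the risk via the projection parameter $\bbeta_0 = \bSigma_0^{-1}\EE[\bphi_0 y_0]$ as $(\hbeta - \bbeta_0)^\top\bSigma_0(\hbeta - \bbeta_0) + \sigma_0^2$ rather than your direct quadratic expansion, but these are equivalent.
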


At a high level, \Cref{thm:risk-equivalence} provides a bias-variance-like risk decomposition for both the squared risks of weighted ensembles. 
The risk of the weighted predictor is equal to the risk of the unweighted equivalent implicit ridge regressor (bias) plus a term due to the randomness due to weighting (variance). 
The inflation factor $C$ controls the magnitude of this term, and it decreases at a rate of $1/M$ as the ensemble size $M$ increases (see \Cref{fig:ensemble-risk-illustration} for a numerical verification of this rate).
Therefore, by using a resample ensemble with a sufficiently large size $M$, we can retain the statistical properties of the full ridge regression while reducing memory usage and increasing parallelization.

\Cref{thm:risk-equivalence} extends the risk equivalence results in \citep{patil2023generalized,patil2024optimal}.
Compared to previous results, \Cref{thm:risk-equivalence} provides a broader risk equivalence that holds for general weight and feature matrices, as well as an arbitrary ensemble size $M$.
It is important to note that \Cref{thm:risk-equivalence} holds even when the test distribution differs from the training data, making it applicable to out-of-distribution risks.
Furthermore, our results do not rely on any specific distributional assumptions for the response vector, making them applicable in a model-free setting.
The key idea behind this result is to exploit asymptotic freeness between the subsample and data matrices.
Next, we will address the question of optimal tuning.

\subsection{Optimal oracle tuning}

As in \Cref{thm:subsample-wor-path}, we next analyze various properties related to optimal subsampling weights and their implications for the risk of optimal ridge regression.
Recall that the subsampling operator $\bW^{(k)}$ is a diagonal matrix with $k\in \{1,\dots,n\}$ nonzero diagonal entries, which is parameterized by the subsample size $k$.
Note that the optimal regularization parameter $\mu^*$ for the full data ($\bW^{(k)}=\bI$ or $k=n$) is a function of the distribution of pretrained data and the test point.
Based on the risk equivalence in \Cref{thm:risk-equivalence}, there exists an optimal path of $(k, \lambda)$ with the corresponding full-ensemble estimator $\hbeta_{\bW_{1:\infty}^{(k)},\lambda}:=\lim_{M\rightarrow\infty}\hbeta_{\bW_{1:M}^{(k)},\lambda}$ that achieves the optimal predictive performance at $(n,\mu^*)$.
In particular, the ridgeless ensemble with $\lambda^* = 0$ happens to be on the path. 
From previous work \citep{du2023subsample,patil2023generalized}, the optimal subsample size $k^*$ for $\lambda^* = 0$ has the property that $k^* \le p$ under linear features.
We show in the following that this property can be extended to include general features.

\begin{proposition}
    [Optimal subsample ratio]
    \label{prop:optimal-subsample-ratio}
    Assume the subsampling operator $\bW$ as defined in \Cref{thm:subsample-wor-path}.
    Let $\mu^* = \argmin_{\mu\geq 0} R(\hbeta_{\bW_{1:\infty}^{(k)},\mu})$.
    Then the corresponding subsample size satisfies:
    \begin{align}
        k^* = \df(\hbeta_{\bW_{1:\infty}^{(k)},\mu^*}) \leq \rank(\bG_{\bI}).
    \end{align}
\end{proposition}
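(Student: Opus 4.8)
The plan is to chain the risk equivalence of \Cref{thm:risk-equivalence} with the explicit subsampling path of \Cref{thm:subsample-wor-path}, and to finish with an elementary spectral bound on the degrees of freedom. First I would send the ensemble size $M \to \infty$ in \Cref{thm:risk-equivalence}: along the path \eqref{eq:subsample-wor-path} the correction term $\tfrac{C}{M}\,\otr[(\bG_{\bI}+\mu\bI)^{\dagger}\by\by^\top(\bG_{\bI}+\mu\bI_n)^{\dagger}]$ vanishes, so $R(\hbeta_{\bW_{1:\infty}^{(k)},\lambda}) \asympequi R(\hbeta_{\bI,\mu})$ whenever $(k,\lambda)$ and the target regularization $\mu$ are linked by \eqref{eq:subsample-wor-path}. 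Consequently, optimizing the full-ensemble estimator over its feasible configurations is asymptotically the same as optimizing $R(\hbeta_{\bI,\mu})$ over $\mu \ge 0$; let $\mu^*$ denote a minimizer of the latter, so the ensemble sitting on the $\mu^*$-path attains the optimal risk.

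I expect this transfer to be the main obstacle: since \eqref{eq:subsample-wor-path} gives a pointwise-in-$(k,\lambda)$ equivalence, arguing that the \emph{argmin} of the ensemble risk coincides with $\mu^*$ requires some care, because the empirical optimizer is data-dependent. I would handle it by reparametrizing the feasible set through $\odf(\hbeta_{\bI,\mu}) \in [0,1]$ (a compact interval) and upgrading the path equivalence to a version uniform over this interval, or alternatively by a two-sided sandwich comparing the risk evaluated at the data-dependent optimum and at the population optimum $\mu^*$. Everything after this is bookkeeping. Next I would locate the ridgeless point on the optimal path: setting $\lambda = 0$ in \eqref{eq:subsample-wor-path} with $\mu = \mu^*$ gives $1 - k/n = 1 - \odf(\hbeta_{\bI,\mu^*})$, so the ridgeless subsample size is $k^* = n\,\odf(\hbeta_{\bI,\mu^*}) = \df(\hbeta_{\bI,\mu^*})$; since $\odf(\hbeta_{\bI,\mu^*}) \in (0,1]$ this configuration is feasible and, by the preceding step, risk-optimal. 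Applying \Cref{thm:general-path} to each ensemble component together with linearity of the degrees of freedom over the ensemble, $\df(\hbeta_{\bW_{1:M},\lambda}) = M^{-1}\sum_{m=1}^{M}\df(\hbeta_{\bW_m,\lambda}) \asympequi \df(\hbeta_{\bI,\mu})$, identifies $\df(\hbeta_{\bW_{1:\infty}^{(k)},\mu^*}) \asympequi \df(\hbeta_{\bI,\mu^*}) = k^*$, which is the first equality in the claim.

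Finally, for the inequality I would use the eigendecomposition of $\bPhi^\top\bPhi$: writing $\sigma_1^2 \ge \dots \ge \sigma_{\rank(\bPhi)}^2 > 0$ for its positive eigenvalues,
\begin{equation*}
\df(\hbeta_{\bI,\mu^*}) = \tr\bigl[\bPhi^\top\bPhi\,(\bPhi^\top\bPhi + \mu^*\bI_p)^{\dagger}\bigr] = \sum_{j=1}^{\rank(\bPhi)}\frac{\sigma_j^2}{\sigma_j^2 + \mu^*} \le \rank(\bPhi) = \rank(\bG_{\bI}),
\end{equation*}
since each summand is at most $1$ for $\mu^* \ge 0$ (with equality throughout exactly when $\mu^* = 0$) and $\rank(\bPhi\bPhi^\top/n) = \rank(\bPhi)$. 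Combining this with $k^* = \df(\hbeta_{\bI,\mu^*})$ from the previous step yields $k^* = \df(\hbeta_{\bW_{1:\infty}^{(k)},\mu^*}) \le \rank(\bG_{\bI})$, as required; the bound is tight precisely when the oracle-optimal regularization is ridgeless.
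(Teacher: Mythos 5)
Your core computation matches the paper's proof exactly: substitute $\lambda = 0$ into the path \eqref{eq:subsample-wor-path} to get $k^* = \df(\hbeta_{\bI,\mu^*})$, then bound the degrees of freedom by the rank using the spectral sum $\sum_j \sigma_j^2/(\sigma_j^2 + \mu^*) \le \rank(\bG_{\bI})$. The paper's proof is just these two steps and nothing more.

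The front half of your argument — sending $M \to \infty$ in \Cref{thm:risk-equivalence} to kill the $C/M$ term, reparametrizing through $\odf \in [0,1]$ to get a uniform version of the path equivalence, and the two-sided sandwich to transfer the argmin — is addressing a real but unstated subtlety that the paper handles only informally in the prose surrounding the proposition (it simply asserts that the ridgeless ensemble ``happens to be on the path'' achieving the risk at $(n,\mu^*)$). Your instinct to flag the data-dependent-argmin issue is sound, but you only sketch the resolution, and the paper does not carry it out either, so this extra effort neither contradicts nor improves on the written proof; it just makes explicit a gap the paper leaves implicit. Likewise, your identification $\df(\hbeta_{\bW_{1:M},\lambda}) = M^{-1}\sum_m \df(\hbeta_{\bW_m,\lambda}) \asympequi \df(\hbeta_{\bI,\mu})$ via \Cref{thm:general-path} and linearity is correct and is an unstated ingredient in the paper's shorthand. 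In short: correct, same route for the actual computation, with extra scaffolding around the optimality transfer that the paper skips.
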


The optimal subsample size $k^{*}$ obtained from \Cref{prop:optimal-subsample-ratio} is asymptotically optimal. 
For linear features, in the underparameterized regime where $n>p$, \citep{du2023subsample,patil2023generalized} show that the optimal subsample size $k^*$ is asymptotically no larger than $p$. 
This result is covered by \Cref{prop:optimal-subsample-ratio} by noting that $\rank(\bG_{\bI}) \leq p$ under linear features. 
It is interesting and somewhat surprising to note that in the underparameterized regime (when $p \le n$), we do not need more than $p$ observations to achieve the optimal risk. 
In this sense, the optimal subsampled dataset is always overparameterized.

When the limiting risk profiles $\sR(\gamma,\psi,\mu):= \lim_{p/n\rightarrow\gamma,p/k\rightarrow\psi} R(\hbeta_{\bW_{1:\infty}^{(k)},\mu})$ exist for subsample ensembles, the limiting risk of the optimal ridge predictor $\inf_{\mu\geq 0}\sR(\gamma,\gamma,\mu)$ is monotonically decreasing in the limiting sample aspect ratio $\gamma$ \citep{patil2023generalized}.
This also (provably) confirms the sample-wise monotonicity of optimally-tuned risk for general features in an asymptotic sense \citep{nakkiran2021optimal}.
Due to the risk equivalence in \Cref{thm:risk-equivalence}, for any $\mu>0$, there exists $\psi$ such that $\sR(\gamma,\gamma,\mu) = \sR(\gamma,\psi,0)$.
This implies that $\inf_{\mu\geq 0}\sR(\gamma,\gamma,\mu) = \inf_{\psi\geq \gamma}\sR(\gamma,\psi,0)$.
In other words, tuning over subsample sizes with sufficiently large ensembles is equivalent to tuning over the ridge penalty on the full data.

\subsection{Data-dependent tuning}

As suggested by \Cref{prop:optimal-subsample-ratio}, the optimal subsample size is smaller than the rank of the Gram matrix. 
This result has important implications for real-world datasets where the number of observations ($n$) is much larger than the number of features ($p$). 
In such cases, instead of using the entire dataset, we can efficiently build small ensembles with a subsample size $k \leq p$. 
This approach is particularly beneficial when $n$ is significantly higher than $p$, for example, when $n = 1000p$. 
By fitting ensembles with only $M=100$ base predictors, we can potentially reduce the computational burden while still achieving optimal predictive performance. 
Furthermore, this technique can be especially valuable in scenarios where computational resources are limited or when dealing with massive datasets that cannot be easily processed in their entirety. 

In the following, we propose a method to determine the optimal values of the regularization parameter $\mu^{*}$ for the full ridge regression, as well as the corresponding subsample size $k^{*}$ and the optimal ensemble size $M^{*}$.
According to \Cref{thm:risk-equivalence}, the optimal value of $M^{*}$ is theoretically infinite. 
However, in practice, the prediction risk of the $M$-ensemble predictor decreases at a rate of $1/M$ as $M$ increases. 
Therefore, it is important to select a suitable value of $M$ that achieves the desired level of performance while considering computational constraints and the specified error budget.
By carefully choosing an appropriate $M$, we can strike a balance between model accuracy and efficiency, ensuring that the subsampled neural representations are effectively used in downstream tasks.

\begin{figure}[!t]
    \centering
    \includegraphics[width=0.99\textwidth]{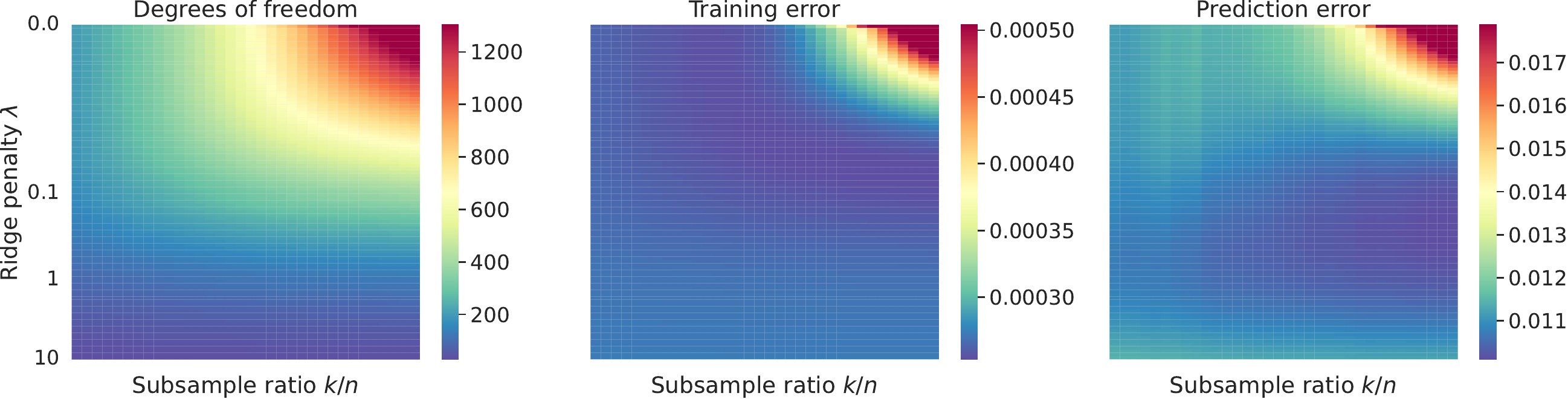}
    \caption{
        Equivalence in pretrained features of pretrained ResNet-50 on Flowers-102 datasets.
    }
    \label{fig:equiv-real-data-flowers-102}
\end{figure}

Consider a grid of subsample size $\cK_n\subseteq \{1, \dots, n\}$; for instance, $\cK_n = \{0,k_{0}, 2k_{0},\ldots,  n\}$ where $k_{0}$ is a subsample size unit.
For a prespecified subsample size $k\in\cK_n$ and ensemble size $M_0\in\NN$, suppose we have multiple risk estimates $\hR_m$ of $R_m$ for $m=1,\ldots,M_0$.
The squared risk decomposition \citep[Eq (7)]{patil2022bagging} along with the equivalence path \eqref{eq:decomp-M} implies that $R_m = m^{-1}R_1 + \left(1-m^{-1}\right) R_{\infty}$, for $m=1,\ldots,M_0$.
Summing these equations yields $\sum_{m=1}^{M_0}R_m = \sum_{m=1}^{M_0}\frac{1}{m}R_1 + \sum_{m=1}^{M_0}\left(1- m^{-1} \right) R_{\infty}$.
Thus, we can estimate $R_{\infty}$ by:
\begin{align}
    \hR_{\infty} = {\Big(\sum_{m=1}^{M_0}\hR_m-\sum_{m=1}^{M_0}m^{-1}\hR_1\Big)}\;/\;{\sum_{m=1}^{M_0}\left(1-m^{-1}\right)} . \label{eq:hR-inf}
\end{align}
Then, the extrapolated risk estimates $\hR_m$ (with $m>M_0$) are defined as:
\begin{align}
    \hR_m := m^{-1}\hR_1 + \left(1-m^{-1}\right) \hR_{\infty} \quad \text{for} \quad m > M_0. \label{eq:hR-m}
\end{align}
The meta-algorithm that implements the above cross-validation procedure is provided in \Cref{alg:cross-validation}.
To efficiently tune the parameters of ridge ensembles, we use and combine the corrected generalized cross-validation (CGCV) method \citep{bellec2023corrected} and the extrapolated cross-validation (ECV) method \citep{du2024extrapolated}.
The improved CV method is implemented in the Python library \citep{ensemblecv}.

\begin{algorithm}[!t]
    \caption{Meta-algorithm for tuning of ensemble size and subsample operator.}\label{alg:cross-validation}
        \begin{algorithmic}[1]
        \Require A dataset $\cD_n = \{ (\bx_i, y_i) \in \RR^{p} \times \RR : 1 \le i \le n \}$, a regularization parameter $\lambda$, a class of subsample operator distribution $\cP_n=\{P_k\}_{k\in\cK_n}$, a ensemble size $M_0\geq 2$ for risk estimation, and optimality tolerance parameter $\delta$.
        
        \State Build ensembles $\hbeta_{\bW_{1:M_0}^{(k)},\lambda}$ with $M_0$ base estimators, where $\bW_{1}^{(k)},\ldots,\bW_{M_0}^{(k)}\overset{\iid}{\sim}P_k$ for each $k \in \cK_n$.
        
        \State Estimate the prediction risk of $\hbeta_{\bW_{1:M_0}^{(k)},\lambda}$ with $\hR_{m,k}$ by CV methods such as CGCV \citep{bellec2023corrected}, for $k \in \cK_n$ and $m=1,\ldots,M_0$.

        \State Extrapolate the risk estimations $\hR_{m,k}$ for $m>M_0$ using \eqref{eq:hR-inf} and \eqref{eq:hR-m}.

        \State Select a subsample size $\hat{k} \in \argmin_{k\in\cK_n}  \hR_{\infty,k}.$ that minimizes the extrapolated estimates.

        \State Select an ensemble size $\hat{M}\in\argmin_{m\in\NN}\ind\{ \hR_{m,\hat{k}} >  \hR_{\infty,\hat{k}} + \delta\}$ for the $\delta$-optimal risk.
        
        \State If $\hat{M}>M_0$, fit a $\hat{M}$-ensemble estimator $\hbeta_{\bW_{1:\hat{M}}^{(\hat{k})},{\lambda}}$.
        
        \Ensure Return the tuned estimator $\hbeta_{\bW_{1:\hat{M}}^{(\hat{k})},{\lambda}}$, and the risk estimators $\hR_{M,k}$ for all~$M,k$.
        \end{algorithmic}
\end{algorithm}

\subsection{Validation on real-world datasets}
\label{sec:experiments-real-data}

In this section, we present numerical experiments to validate our theoretical results on real-world datasets.
\Cref{fig:equiv-real-data-flowers-102} provides evidence supporting \Cref{cond:sketch-observation} on pretrained features extracted from commonly used neural networks applied to real-world datasets. 
The first panel of the figure demonstrates the equivalence of degrees of freedom for these pretrained features.
Furthermore, we also observe consistent behavior across different neural network architectures and different datasets (see \Cref{fig:equiv-real-data-cifar-10,fig:equiv-real-data-fashion-mnist}).
Remarkably, the path of equivalence can be accurately predicted, offering valuable insight into the underlying dynamics of these models. 
This observation suggests that the pretrained features from widely used neural networks exhibit similar properties when applied to real-world data, regardless of the specific architecture employed. 
The ability to predict the equivalence path opens up new possibilities for optimizing the performance of these models in practical applications. 

One implication of the equivalence results explored in \Cref{thm:general-path,thm:risk-equivalence} is that instead of tuning for the full ridge penalty $\mu$ on the large datasets, we can fix a small value of the ridge penalty $\lambda$, fit subsample ridge ensembles, and tune for an optimal subsample size $k$.
To illustrate the validity of the tuning procedure described in \Cref{alg:cross-validation}, we present both the actual prediction errors and their estimates by \Cref{alg:cross-validation} in \Cref{fig:risk-est}.
We observe that the risk estimates closely match the prediction risks at different ensemble sizes across different datasets.
Even with a subsampling ratio $k/n$ of $0.01$ and a sufficiently large $M$, the risk estimate is close to the optimal risk.
A smaller subsample size could also yield even smaller prediction risk in certain datasets.

\begin{figure}[!t]
    \centering
    \includegraphics[width=0.99\textwidth]{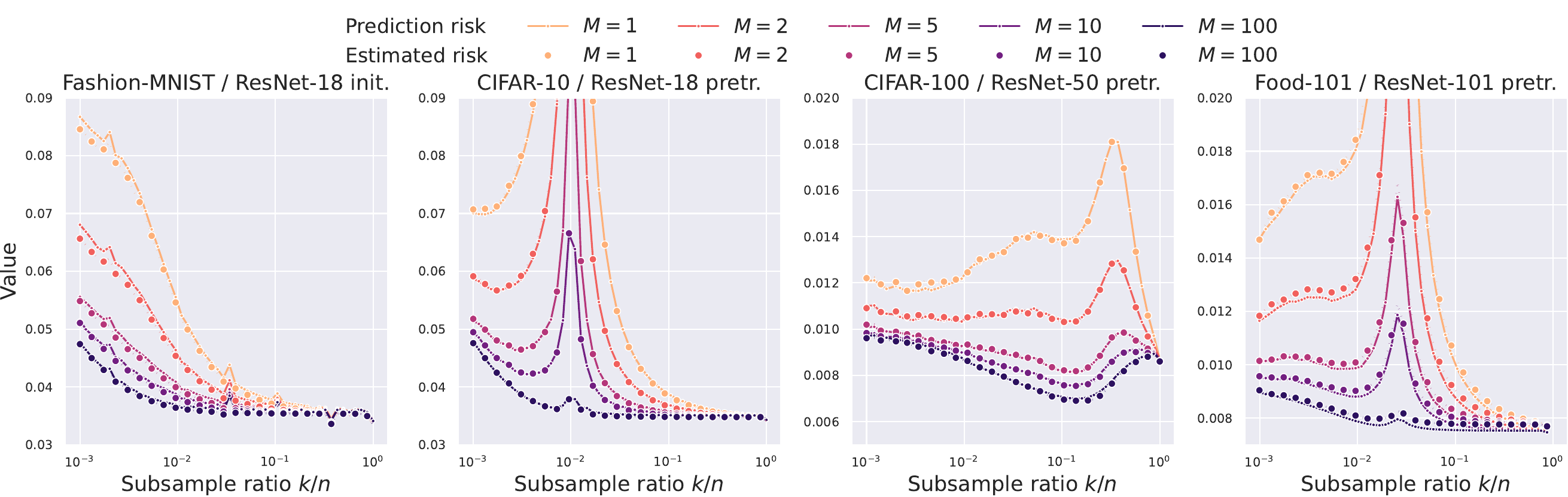}
    \caption{
        Risk estimation by corrected and extrapolated generalized cross-validation.
        The risk estimates are computed based on $M_0=25$ base estimators using \Cref{alg:cross-validation} with $\lambda=10^{-3}$.
    }
    \label{fig:risk-est}
\end{figure}

\section{Limitations and outlook}\label{sec:limitations-outlook}

While our results are quite general in terms of applying to a wide variety of pretrained features, they are limited in that they only apply to ridge regression fitted on the pretrained features. 
The key challenge for extending the analysis based on \Cref{cond:sketch-observation} to general estimators beyond ridge regression is the characterization of the effect of subsampling general resolvents as additional ridge regularization. 
To extend to generalized linear models, one approach is to view the optimization as iteratively reweighted least squares \citep{lejeune2021flipside} in combination with the current results. 
Another approach is to combine our results with the techniques in \citep{liao2021hessian} to obtain deterministic equivalents for the Hessian, enabling an understanding of implicit regularization due to subsampling beyond linear models.

Beyond implicit regularization due to subsampling, there are other forms of implicit regularization, such as algorithmic regularization due to early stopping in gradient descent \citep{ali2019continuous, neu2018iterate, ali2020implicit}, dropout regularization \citep{wager2013dropout, srivastava2014dropout}, among others.
In some applications, multiple forms of implicit regularization are present simultaneously. 
For instance, during a mini-batch gradient step, implicit regularization arises from both iterative methods and mini-batch subsampling.
The results presented in this paper may help to make explicit the combined effect of various forms of implicit regularization.

\section*{Acknowledgments}

We thank Benson Au, Daniel LeJeune, Ryan Tibshirani, and Alex Wei for the helpful conversations surrounding this work.
We also thank the anonymous reviewers for their valuable feedback and suggestions.

We acknowledge the computing support the ACCESS allocation MTH230020 provided for some of the experiments performed on the Bridges2 system at the Pittsburgh Supercomputing Center.
The code for reproducing the results of this paper can be found at \url{https://jaydu1.github.io/overparameterized-ensembling/weighted-neural}.


\appendix
\clearpage
\def\titleRLB{\removelinebreaks{\titletext}}
\begin{center}
\Large
{\bf
\framebox{Appendix}
}
\end{center}

\medskip

This serves as an appendix to the paper ``\titleRLB.''
The beginning (unlabeled) section of the appendix provides an organization for the appendix, followed by a summary of the general notation used in both the paper and the appendix.
Any other specific notation is explained inline where it is first used.

\subsection*{Organization}

\begin{itemize}[leftmargin=7mm]
    \item 
    In \Cref{sec:background-free-probability-theory}, we provide a brief technical background on free probability theory and various transforms that we need and collect known asymptotic ridge equivalents that we use in our proofs.

    \item
    In \Cref{sec:proofs-sec:path-characterization,}, we present proofs of the theoretical results in \Cref{sec:path-characterization} (\Cref{thm:general-path,thm:subsample-wor-path} and \Cref{prop:path-linear-features,prop:path-kernel-features,prop:path-random-features}).

    \item
    In \Cref{sec:proofs-sec:risk-equivalences}, we present proofs of the theoretical results in \Cref{sec:risk-equivalences} (\Cref{thm:risk-equivalence} and \Cref{prop:optimal-subsample-ratio}).

    \item
    In \Cref{sec:additional-illustrations-sec:path-characterization}, we provide additional illustrations for the results in \Cref{sec:path-characterization} (\Cref{fig:different-subsampling-with-and-without-replacement,fig:non-uniform-weights}).

    \item
    In \Cref{sec:additional-illustrations-sec:risk-equivalences}, we provide additional illustrations for the results in \Cref{sec:risk-equivalences} (\Cref{fig:ensemble-risk-illustration,fig:equiv-real-data-cifar-10,fig:equiv-real-data-fashion-mnist}), including our meta-algorithm for tuning (\Cref{alg:cross-validation}) that is not included in the main text due to space constraints.

    \item
    In \Cref{sec:details-experiments}, we provide additional details on the experiments in both \Cref{sec:path-characterization} and \Cref{sec:risk-equivalences}.
\end{itemize}

\subsection*{Notation}\label{subsec:notation}

We use blackboard letters to denote some special sets: $\NN$ denotes the set of natural numbers, $\RR$ denotes the set of real numbers, $\RR_{+}$ denotes the set of positive real numbers, $\CC$ denotes the set of complex numbers, $\CC^{+}$ denotes the set of complex numbers with positive imaginary part, and $\CC^{-}$ denotes the set of complex numbers with negative imaginary part.
We use $[n]$ to denote the index set $\{1, 2, \ldots, n\}$.

We denote scalars and vectors using lower-case letters and matrices using upper-case letters. 
For a vector $\bbeta$, $\bbeta^\top$ denotes its transpose,
and $\| \bbeta \|_2$ denotes its $\ell_2$ norm.
For a pair of vectors $\bu$ and $\bv$, $\langle \bu, \bv \rangle$ denotes their inner product.
For a matrix $\bX \in \RR^{n \times p}$, $\bX^\top \in \RR^{p \times n}$ denotes its transpose, and $\bX^{\dagger} \in \RR^{p \times n}$ denotes its Moore-Penrose inverse.
For a square matrix $\bA \in \RR^{p \times p}$, $\tr[\bA]$ denotes its trace,
$\otr[\bA]$ denotes its average trace $\tr[\bA] / p$,
and $\bA^{-1}$ denotes its inverse, provided that $\bA$ is invertible.
For a symmetric matrix $\bA$, $\smash{\lambda_{\min}^{+}(\bA)}$ denotes its minimum nonzero eigenvalue.
For a positive semidefinite matrix $\bG$, $\bG^{1/2}$ denotes its principal square root.
For a matrix $\bX$, we denote by $\| \bX \|_{\oper}$ its operator norm with respect to the $\ell_2$ vector norm.
It is also the spectral norm of $\bX$. 
For a matrix $\bX$, we denote by $\| \bX \|_{\tr}$ its trace norm.
It is given by $\tr[(\bX^\top \bX)^{1/2}]$, and is also the nuclear norm $\bX$. 
We denote $p \times p$ identity matrix by $\bI_p$, or simply by $\bI$ when it is clear from the context.

For symmetric matrices $\bA$ and $\bB$, we use $\bA \preceq \bB$ to denote the Loewner ordering to mean that $\bA - \bB$ is a positive semidefinite matrix.
For two sequences of matrices $\bA_p$ and $\bB_p$, we use $\bA_p \asympequi \bB_p$ to denote a certain asymptotic equivalence;
see \Cref{app:free-asymptotic-ridge-resolvents} for a precise definition.

\section{Technical background}
\label{sec:background-free-probability-theory}

\subsection{Basics of free probability theory}

In this section, we briefly review definitions from free probability theory and its applications to random matrices.
This review will help set the stage by introducing the various mathematical structures and spaces we are working with.
It will also introduce some of the notation used throughout the text.

Free probability is a mathematical framework that deals with non-commutative random variables \citep{svoiculescu1997free}.
The use of free probability theory has appeared in various recent works in statistical machine learning, including \citep{sadlam2020neural,sadlam2022random,smel2021anisotropic,slejeune2024asymptotics,spatil2023asymptotically}.
Good references on free probability theory include \citep{smingo2017free,sbose2021random}, from which we borrow some basic definitions in the following.
All the material in this section is standard in free probability theory and mainly serves to keep the definitions self-contained.

\begin{definition}
    [Non-commutative algebra]
    A set $\cA$ is called a (complex) algebra (over the field of complex numbers $\CC$) if it is a vector space (over $\CC$ with addition $+$), equipped with a bilinear multiplication $\cdot$, such that for all $x, y, z \in \cA$ and $\alpha \in \CC$,
    \begin{enumerate}[(1),leftmargin=7mm]
        \item $x \cdot (y \cdot z) = (x \cdot y) \cdot z$,
        \item $(x + y) \cdot z = x \cdot z + y \cdot z$,
        \item $x \cdot (y + z) = x \cdot y + x \cdot z$,
        \item $\alpha (x \cdot y) = (\alpha x) \cdot y = x \cdot (\alpha y)$.
    \end{enumerate}
    In addition, an algebra is called unital if a multiplicative identity element exists.
    We will use $1_{\cA}$ to denote this identity element.
    We will drop the ``$\cdot$'' symbol to denote multiplication over the algebra.
\end{definition}

\begin{definition}
    [Non-commutative probability space]
    Let $\cA$ over $\CC$ be a unital algebra with identity $1_{\cA}$.
    Let $\varphi: \cA \to \CC$ be a linear functional which is unital (that is, $\varphi(1_{\cA}) = 1$).
    Then $(\cA, \varphi)$ is called a non-commutative probability space, and $\varphi$ is called a state.
    A state $\varphi$ is said to be tracial if $\varphi(xy) = \varphi(yx)$ for all $x, y \in \cA$.
\end{definition}

\begin{definition}
    [Moments]
    Let $(\cA, \varphi)$ be a non-commutative probability space.
    The numbers $\{ \varphi(x^k) \}_{k=1}^{\infty}$ are called the moments of the variable $x \in \cA$.
\end{definition}

\begin{definition}
    [$\ast$-algebra]
    An algebra $\cA$ is called a $\ast$-algebra if there exists a mapping $x \to x^{\ast}$ from $\cA \to \cA$ such that, for all $x, y \in \cA$ and $\alpha \in \CC$,
    \begin{enumerate}[(1),leftmargin=7mm]
        \item $(x + y)^{\ast} = x^{\ast} + y^{\ast}$,
        \item $(\alpha x)^{\ast} = \bar{\alpha} x^{\ast}$,
        \item $(xy)^{\ast} = y^{\ast} x^{\ast}$,
        \item $(x^{\ast})^{\ast} = x$.
    \end{enumerate}
    A variable $x$ of a $\ast$-algebra is called self-adjoint if $x = x^{\ast}$.
    A unital linear functional $\varphi$ on a $\ast$-algebra is said to be positive if $\varphi(x^{\ast} x) \geq 0$ for all $x \in \cA$.
\end{definition}

\begin{definition}
    [$\ast$-probability space]
    Let $\cA$ be a unital $\ast$-algebra with a positive state $\varphi$.
    Then $(\cA, \varphi)$ is called a $\ast$-probability space.
\end{definition}

\begin{example}
\label{example:matrix-star-space}
    Denote by $\cM_p(\CC)$ the collection of all $p \times p$ matrices with complex entries.
    Let the multiplication and addition operations be defined in the usual way.
    The $\ast$-operation is the same as taking the conjugate transpose.
    Let $\tr: \cM_p(\CC) \to \CC$ be the normalized trace defined by:
    \begin{align*}
        \otr(\bA) = \frac{1}{p} \tr[\bA].
    \end{align*}
    The state $\tr$ is tracial and positive.
\end{example}

\begin{definition}
    [Free independence]
    Suppose $(\cA, \varphi)$ is a $\ast$-probability space.
    Then, the $\ast$-sub-algebras $\{\cA_i\}_{i \in I}$ of $\cA$ are said to be $\ast$-freely independent (or simply $\ast$-free) if, for all $n \geq 2$ and all $x_1, x_2, \cdots, x_n$ from $\{\cA_i\}_{i \in I}$, $\kappa_n(x_1, x_2, \cdots, x_n) = 0$ whenever at least two of the $x_i$ are from different $\cA_i$.
    In particular, any collection of variables is said to be $\ast$-free if the sub-algebras generated by these variables are $\ast$-free.
\end{definition}

\begin{lemma}
    Suppose $(\cA, \varphi)$ is a $\ast$-probability space.
    If $x$ and $y$ are free in $(\cA, \varphi)$, then for all non-negative integers $n$ and $m$,
    \begin{align*}
        \varphi(x^n y^m) = \varphi(x^n) \varphi(y^m) = \varphi(y^m x^n).
    \end{align*}
\end{lemma}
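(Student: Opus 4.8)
The plan is to reduce everything to the two-element case and invoke the defining vanishing property of mixed free cumulants (equivalently, the vanishing of alternating products of centered elements). First I would dispose of the degenerate cases: if $n = 0$ then $x^n = 1_\cA$, so $\varphi(x^n y^m) = \varphi(y^m) = \varphi(x^n)\varphi(y^m)$ by unitality of $\varphi$, and symmetrically if $m = 0$. So I may assume $n, m \ge 1$, in which case $x^n$ lies in the unital sub-algebra $\cA_x$ generated by $x$ and $y^m$ in the unital sub-algebra $\cA_y$ generated by $y$, and $\cA_x, \cA_y$ are $\ast$-free by hypothesis.

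Next I would center the two elements: set $a := x^n - \varphi(x^n)\, 1_\cA \in \cA_x$ and $b := y^m - \varphi(y^m)\, 1_\cA \in \cA_y$, so that $\varphi(a) = \varphi(b) = 0$. Expanding by linearity of $\varphi$ and using $\varphi(1_\cA) = 1$,
$$
\varphi(x^n y^m)
= \varphi(x^n)\varphi(y^m)\,\varphi(1_\cA) + \varphi(x^n)\,\varphi(b) + \varphi(y^m)\,\varphi(a) + \varphi(ab)
= \varphi(x^n)\varphi(y^m) + \varphi(ab).
$$
The crux is that $\varphi(ab) = 0$: since $a$ and $b$ are centered elements drawn from the two distinct free sub-algebras $\cA_x$ and $\cA_y$, their product is an alternating word of length $2$ whose $\varphi$-value vanishes. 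Concretely, via the moment–cumulant formula over the two non-crossing partitions of a two-element set, $\varphi(ab) = \kappa_2(a,b) + \kappa_1(a)\kappa_1(b) = 0 + \varphi(a)\varphi(b) = 0$, where $\kappa_2(a,b) = 0$ because it is a mixed cumulant with arguments from different $\cA_i$ and $\kappa_1 = \varphi$. Hence $\varphi(x^n y^m) = \varphi(x^n)\varphi(y^m)$.

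Finally, running the identical argument with the roles of $x$ and $y$ interchanged gives $\varphi(y^m x^n) = \varphi(y^m)\varphi(x^n)$; since $\varphi(x^n), \varphi(y^m) \in \CC$ commute, this equals $\varphi(x^n)\varphi(y^m)$, so all three quantities coincide. I would remark that no traciality of $\varphi$ is needed, since each product is evaluated directly to the same scalar. I do not anticipate any genuine obstacle: the statement is a textbook consequence of the definition of free independence, and the only point requiring care is correctly invoking either the moment–cumulant expansion over non-crossing partitions of the two-element index set, or equivalently the vanishing-of-centered-alternating-products characterization of freeness, and checking that the centered elements indeed lie in the respective unital sub-algebras.
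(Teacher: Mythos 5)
The paper states this lemma without proof in its free-probability background section, so there is no proof to compare against; your argument is correct and is the standard textbook one. One small streamlining is available given the paper's working definition of freeness via vanishing mixed free cumulants: you can skip the centering step entirely and write directly
\[
\varphi(x^n y^m) \;=\; \sum_{\pi \in NC(2)} \kappa_\pi(x^n, y^m) \;=\; \kappa_2(x^n, y^m) + \kappa_1(x^n)\kappa_1(y^m) \;=\; 0 + \varphi(x^n)\varphi(y^m),
\]
since $\kappa_2(x^n, y^m)$ is already a mixed cumulant with arguments from the two distinct sub-algebras and $\kappa_1 = \varphi$. Your centered-element expansion reaches the same conclusion and has the advantage of also working verbatim under the equivalent ``alternating products of centered elements are centered'' characterization; the two are interchangeable here, and your observation that no traciality is needed and that $\varphi(y^m x^n)$ follows by symmetry of the scalar product is correct. (One sentence has a small grammatical slip --- ``$x^n$ lies in \dots\ $\cA_x$ generated by $x$ and $y^m$ in \dots'' should read ``and $y^m$ lies in \dots'' --- but the intent is unambiguous.)
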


In other words, elements of the algebra are considered free if any alternating product of centered polynomials is also centered.

In this work, we will consider $\varphi$ to be the normalized trace.
The normalized trace is the generalization of $\tfrac{1}{p} \tr[\bA]$ for $\bA \in \CC^{p \times p}$ to elements of a $C^*$-algebra $\cA$.
Specifically, for any self-adjoint $a \in \cA$ and any polynomial $p$, we have
\[
    \varphi(p(a)) = \int p(z) \, \mathrm{d} \mu_a(z),
\]
where $\mu_a$ is the probability measure that characterizes the spectral distribution of $a$.

\begin{definition}
    [Convergence in spectral distribution]
    Let $(\cA, \varphi)$ be a $C^*$-probability space.
    We say that $\bA_1, \ldots, \bA_m \in \CC^{p \times p}$ \emph{converge in spectral distribution} to elements $a_1, \ldots, a_m \in \cA$ if, for all $1 \leq \ell < \infty$ and $1 \leq i_j \leq m$ for $1 \leq j \leq \ell$, we have
    \[
        \frac{1}{p}\tr[\bA_{i_1} \cdots \bA_{i_\ell}] \to \varphi(a_{i_1} \cdots a_{i_\ell}).
    \]
\end{definition}

Then, with slight abuse of notation, two matrices $\bA, \bB \in \RR^{p \times p}$ are said to be free if
\begin{align*}
    \frac{1}{p} \tr \left[\prod_{\ell=1}^L \mathrm{poly}_\ell^{\bA}(\bA) \mathrm{poly}_\ell^{\bB}(\bB)\right] = 0,
\end{align*}
for all $L \geq 1$ and all centered polynomials, that is, $\otr[\mathrm{poly}_\ell^{\bA}(\bA)] = 0$.
This notation is an abuse of notation because finite matrices cannot satisfy this condition.
However, they can satisfy it asymptotically as $p \to \infty$, and in this case, we say that $\bA$ and $\bB$ are \emph{asymptotically free}.

Note: With some abuse of notation, we will let matrices in boldface denote both the finite matrix and the limiting element in the free probability space.
The limiting element can be understood, for example, as a bounded linear operator on a Hilbert space.
We also remark that all notions we need are well-defined in this limit as well, as long as they are appropriately normalized.

\subsection{Useful transforms and their relationships}
\label{sec:transforms-relationships}

In this section, we review the key transforms used in free probability theory and their interrelationships.

\begin{definition}
    [Cauchy transform]
    \label{def:cauchy-transform}
    Let $a$ be an element of a $\ast$-probability space $(\cA, \varphi)$.
    Suppose there exists some $C > 0$ such that $| \varphi(a^n) | \leq C^n$ for all $n \in \NN$.
    Then the Cauchy transform of $a$ is defined as:
    \[
        \cG_a(z) = \sum_{n = 0}^{\infty} \frac{\varphi(a^n)}{z^{n+1}}
    \]
    for all $z \in \CC$ with $|z| > C$.
\end{definition}

Note that the Cauchy transform is the negative of the Stieltjes transform.
In this paper, we will focus only on the Cauchy transform.
Recall that for a probability measure $\nu$ on $\RR$ and for $z \notin \RR$, the Cauchy transform of $\nu$ is defined as:
\[ \cG(z) = \int_{\RR} \frac{1}{z - x} \, \mathrm{d}\nu(x). \]
The definition above is motivated by the following property of the Cauchy transform of a measure.
Suppose $\nu$ is a probability measure whose support is contained in $[-C, C]$ for some $C > 0$ and which has moments $\{ m_k(\nu) \}_{k=0}^{\infty}$.
Then the Cauchy transform of $\nu$ is defined for $z \in \CC$ with $|z| > C$ as:
\[ 
    \cG_\nu(z) = \sum_{k=0}^{\infty} \frac{m_k(\nu)}{z^{k+1}}. 
\]

\begin{definition}
    [Moment generating function]
    \label{def:moment-generating-function}
    Let $a$ be an element of a $\ast$-probability space $(\cA, \varphi)$.
    The moment generating function of $a$ is defined as:
    \[
        \cM_a(z) = 1 + \sum_{k=1}^{\infty} \varphi(a^k) z^k
    \]
    for $z \in \CC$ such that $|z| < r_a$.
    Here, $r_a$ is the radius of convergence of the series.
\end{definition}

For a probability measure $\nu$, the moment generating function is defined analogously.
(Note: The definition above is not to be confused with the moment generating function of a random variable in probability theory.)
The Cauchy transform is related to the moment series via:
\begin{equation}
    \label{eq:cauchy-transform-from-moment-generating-series}
    \cG_a(z) = \frac{1}{z} \cM_a\left(\frac{1}{z}\right).
\end{equation}
In the other direction, we have:
\begin{equation}
    \label{eq:moment-generating-series-from-cauchy-transform}
    \cM_a(z) = \frac{1}{z} \cG_a\left(\frac{1}{z}\right) - 1.
\end{equation}

\begin{definition}
    [$S$-transform]
    \label{def:S-tran}
    For
    \[
        \cM_a(z) = \sum_{m=0}^{\infty} \varphi(a^m) z^m,
    \]
    we define the $S$-transform of $a$ by:
    \begin{equation}
        \label{eq:s-transform-a}
        \cS_a(w) = \frac{1 + w}{w} \cM_a^{\langle -1 \rangle}(w),
    \end{equation}
    where $\cM^{\langle -1 \rangle}$ denotes the inverse under composition of $\cM$.
\end{definition}

Finally, in terms of operator $\bA$, we summarize the series of invertible transformations between the various transforms introduced in this section.
\begin{itemize}[leftmargin=7mm]
    \item \emph{Cauchy transform}:
    \[
        \cG_{\bA}(z) = \otr[(z \bI - \bA)^{-1}].
    \]

    \item \emph{Moment generating series}:
    \[
        \cM_{\bA}(z) = \frac{1}{z} \cG_{\bA} \left(\frac{1}{z}\right) - 1.
    \]

    \item \emph{$S$-transform}:
    \[
        \cS_{\bA}(w) = \frac{1 + w}{w} \cM_{\bA}^{\langle-1\rangle}(w).
    \]
\end{itemize}

Here:
\begin{itemize}[leftmargin=7mm]
    \item $\cM_{\bA}(z) = \sum_{k=1}^\infty \otr[\bA^k] z^k$ is the moment generating series.
    \item $\cM_{\bA}^{\langle -1 \rangle}$ denotes the inverse under composition of $\cM_{\bA}$.
    \item $\otr[\bA]$ denotes the average trace $\tr[\bA] / p$ of a matrix $\bA \in \RR^{p \times p}$.
\end{itemize}

\subsection{Asymptotic ridge resolvents}
\label{app:free-asymptotic-ridge-resolvents}

In this section, we provide a brief background on the language of asymptotic equivalents used in the proofs throughout the paper.
We will state the definition of asymptotic equivalents and point to useful calculus rules.
For more details, see \cite[Appendix S.7]{spatil2022bagging}.

To concisely present our results, we will use the framework of asymptotic equivalence \citep{sdobriban_sheng_2020,sdobriban_sheng_2021,spatil2022bagging}, defined as follows.
Let $\bA_p$ and $\bB_p$ be sequences of matrices of arbitrary dimensions (including vectors and scalars).
We say that $\bA_p$ and $\bB_p$ are \emph{asymptotically equivalent}, denoted as $\bA_p \asympequi \bB_p$, if $\lim_{p \to \infty} | \tr[\bC_p (\bA_p - \bB_p)] | = 0$ almost surely for any sequence of random matrices $\bC_p$ with bounded trace norm that are independent of $\bA_p$ and $\bB_p$.
Note that for sequences of scalar random variables, the definition simply reduces to the typical almost sure convergence of sequences of random variables involved.

The notion of deterministic equivalents obeys various calculus rules such as sum, product, differentiation, conditioning, and substitution.
We refer the reader to \citep{spatil2022bagging} for a comprehensive list of these calculus rules, their proofs, and other related details.

Next, we collect first- and second-order asymptotic equivalents for sketched ridge resolvents from \citep{slejeune2024asymptotics,spatil2023asymptotically}, which will be useful for our extensions to weighted ridge resolvents.

\begin{assumption}
    [Sketch structure]
    \label{cond:sketch}
    Let $\bS \in \RR^{p \times q}$ be the feature sketching matrix and $\bX \in \RR^{n \times p}$ be the data matrix.
    Let $\bS \bS^\top$ and $\tfrac{1}{n} \bX^\top \bX$ converge almost surely to bounded operators that are infinitesimally free with respect to $(\tfrac{1}{p}\tr [\cdot], \tr [\bTheta (\cdot)])$ for any $\bTheta$ independent of $\bS$ with $\| \bTheta \|_{\tr}$ uniformly bounded.
    Additionally, let $\bS \bS^\top$ have a limiting $S$-transform that is analytic on the lower half of the complex plane.
\end{assumption}

For the statement to follow, let us define $\hat{\bSigma} := \tfrac{1}{n} \bX^\top \bX$.
Let $\tilde{\lambda}_0 := -\liminf_{p \to \infty} \lambda_{\min}^{+}(\bS^\top \hat{\bSigma} \bS)$.
Here, recall that $\lambda_{\min}^{+}(\bA)$ represents the minimum nonzero eigenvalue of a symmetric matrix $\bA$.

\begin{theorem}
    [Free sketching equivalence; \citep{slejeune2024asymptotics}, Theorem 7.2]
    \label{thm:sketched-pseudoinverse}
    Under \Cref{cond:sketch}, for all $\lambda > \tilde{\lambda}_0$,
    \begin{align}
        \bS (\bS^\top \hSigma \bS + \lambda \bI_q)^{\dagger} \bS^\top
        \asympequi (\hSigma + \nu \bI_p)^{\dagger},
        \label{eq:cond-sketch}
    \end{align}
    where $\smash{\nu > -\lambda_{\min}^{+}(\hSigma)}$ is increasing in $\lambda > \tilde{\lambda}_0$ and satisfies:
    \begin{align}
        \label{eq:dual-fp-main}
        \nu 
        \asympequi
        \lambda \cS_{\bS \bS^\top}
        (
        -\otr[{\hSigma \bS (\bS^\top \hSigma \bS + \lambda \bI_q)^{\dagger} \bS^\top}]
        )
        \asympequi 
        \lambda \cS_{\bS \bS^\top}
        (- \otr[{\hat \bSigma (\hat \bSigma + \nu \bI_p)^{\dagger}}]).
    \end{align}
\end{theorem}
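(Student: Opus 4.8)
The plan is to extract, from a push-through identity, a scalar self-consistent equation in the $S$-transform that pins down $\nu$ — this yields the fixed-point relations \eqref{eq:dual-fp-main} — and then to lift that scalar statement to the anisotropic operator equivalence \eqref{eq:cond-sketch} using the infinitesimal freeness in \Cref{cond:sketch}. I would first carry everything out for $\lambda > 0$, where all inverses are genuine, and recover the range $\lambda \in (\tilde{\lambda}_0, 0]$ at the end by analytic continuation in $\lambda$.

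First I would set $\bA := \hSigma^{1/2} \bS \bS^\top \hSigma^{1/2}$ (square roots on the range of $\hSigma$, or via $\hSigma \mapsto \hSigma + \varepsilon \bI_p$ with $\varepsilon \to 0^{+}$). For $\lambda > 0$ the push-through identity gives $\otr[\hSigma \bS (\bS^\top \hSigma \bS + \lambda \bI_q)^{\dagger} \bS^\top] = \otr[\bA (\bA + \lambda \bI_p)^{-1}] =: e$. Since $\bS \bS^\top$ and $\hSigma$ are asymptotically free with respect to $\otr$, the limiting spectral law of $\bA$ is the free multiplicative convolution of those of $\hSigma$ and $\bS \bS^\top$, so $\cS_{\bA} = \cS_{\hSigma}\, \cS_{\bS \bS^\top}$ on a common domain. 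From $e = \int t/(t + \lambda)\, \mathrm{d}\mu_{\bA}(t)$ one reads off $\cM_{\bA}^{\langle -1 \rangle}(-e) = -1/\lambda$, hence $\lambda\, \cS_{\bA}(-e) = (1 - e)/e$; repeating this for $\hSigma$ with $\nu$ defined through $e = \int t/(t + \nu)\, \mathrm{d}\mu_{\hSigma}(t)$ gives $\nu\, \cS_{\hSigma}(-e) = (1 - e)/e$. Dividing and using the factorization of $\cS_{\bA}$ yields $\nu = \lambda\, \cS_{\bS \bS^\top}(-e)$, which is \eqref{eq:dual-fp-main}; a standard monotonicity analysis of the coupled system $\{\, e = \otr[\hSigma(\hSigma + \nu \bI_p)^{\dagger}],\ \nu = \lambda\, \cS_{\bS \bS^\top}(-e)\,\}$ — using positivity of $\bS \bS^\top$ and the lower-half-plane analyticity of $\cS_{\bS \bS^\top}$ assumed in \Cref{cond:sketch} — then gives uniqueness of the solution, monotonicity of $\nu$ in $\lambda$, and $\nu > -\lambda_{\min}^{+}(\hSigma)$ for $\lambda > \tilde{\lambda}_0$.

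Next I would establish the operator equivalence \eqref{eq:cond-sketch} itself, i.e.\ that $\tr[\bTheta(\bS (\bS^\top \hSigma \bS + \lambda \bI_q)^{\dagger} \bS^\top - (\hSigma + \nu \bI_p)^{\dagger})] \to 0$ almost surely for every $\bTheta$ of bounded trace norm. For this I would use operator-valued subordination for the free product $\bS \bS^\top \cdot \hSigma$: conditioning the sketched resolvent onto the von Neumann algebra generated by $\hSigma$ and $\bTheta$, from which $\bS \bS^\top$ is free by \Cref{cond:sketch}, turns it into a resolvent of $\hSigma$ with an operator-valued deformation that, by the scalar reduction above and the $\bTheta$-weighted trace functional supplied by infinitesimal freeness, collapses to the scalar shift $\nu$. (Equivalently, one can run the deterministic-equivalent calculus of \citep{spatil2022bagging}: replace $\bS \bS^\top$ by a free copy, expand the resolvent, apply the sum/product/substitution rules for $\asympequi$, and close the recursion with the fixed point of the previous step.) Choosing $\bTheta$ proportional to $\hSigma$ or to $\bI_p$ then recovers the second equivalence in \eqref{eq:dual-fp-main}. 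Finally I would extend to $\lambda \in (\tilde{\lambda}_0, 0]$ by analytic continuation — both sides and the fixed-point map extend analytically past $\lambda = 0$ thanks to the monotonicity of $\nu$ — and upgrade mean convergence to almost-sure convergence via concentration of the $\bTheta$-weighted bilinear resolvent forms in $\bS$.

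The hard part will be the second step. The scalar identity is essentially classical — free multiplicative convolution plus $S$-transform bookkeeping — whereas the anisotropic operator statement genuinely needs the infinitesimal-freeness hypothesis and a delicate operator-valued (subordination) argument, further complicated by having to control the pseudoinverse near the spectral edge when $\lambda \le 0$ and to rule out fluctuations of the $\bTheta$-weighted resolvent.
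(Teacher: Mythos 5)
The paper does not prove this statement; it is imported verbatim as Theorem~7.2 of LeJeune et al.\ (2024) and then used as a black box in the proof of \Cref{lem:general-first-order}, so there is no ``paper's own proof'' to compare against. Assessing your sketch on its own merits: the scalar step is correct and is essentially how one must derive the fixed point. Setting $\bA = \hSigma^{1/2}\bS\bS^\top\hSigma^{1/2}$ and $e = \otr[\bA(\bA+\lambda\bI_p)^{-1}]$ gives $\cM_{\bA}^{\langle -1\rangle}(-e) = -1/\lambda$, hence $\lambda\,\cS_{\bA}(-e) = (1-e)/e$, and the multiplicativity $\cS_{\bA} = \cS_{\hSigma}\cS_{\bS\bS^\top}$ under freeness then forces $\nu = \lambda\,\cS_{\bS\bS^\top}(-e)$. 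That part is solid.

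Your second step is where the substance is, and where the sketch as written over-reads the hypothesis. \Cref{cond:sketch} does not assert operator-valued (amalgamated) freeness of $\bS\bS^\top$ from the algebra generated by $\hSigma$ and $\bTheta$; it asserts \emph{infinitesimal} freeness with respect to the ordered pair of functionals $(\otr[\cdot],\,\tr[\bTheta\,\cdot])$. Infinitesimal (type-B) freeness treats $\tr[\bTheta\,\cdot]$ as a first-order perturbation of the tracial state, and the anisotropic equivalence \eqref{eq:cond-sketch} is obtained by differentiating the tracial subordination relation in that infinitesimal direction, not by conditioning onto a von Neumann subalgebra and invoking plain subordination. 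Running a naive amalgamated-free-product argument quietly upgrades the assumption to something stronger than what is posited. Your parenthetical alternative — the deterministic-equivalent calculus of \citep{spatil2022bagging} — is the cleaner route at the random-matrix level, precisely because it works directly with the family of test functionals $\tr[\bC_n(\cdot)]$ over all bounded-trace-norm $\bC_n$, which is what the $\asympequi$ relation actually demands (uniformity over the test matrix, not concentration at a single fixed $\bTheta$). Your analytic continuation to $\lambda\in(\tilde\lambda_0,0]$ via monotonicity of $\nu$ is fine, and you are right to flag edge control of the pseudoinverse for $\lambda\le 0$ as a genuine technical burden; that, together with making the infinitesimal-freeness mechanism precise, is where a full proof would live.
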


\begin{lemma}
    [Second-order equivalence for sketched ridge resolvents; \citep{spatil2023asymptotically}, Lemma 15]
    \label{lem:general-second-order}
    Under the settings of \Cref{lem:general-first-order},
    for any positive semidefinite $\bPsi$ with uniformly bounded operator norm, for all $\lambda > \tilde{\lambda}_0$,
    \begin{equation}
        \bS (\bS^\top \hSigma \bS + \lambda \bI_q)^{\dagger} \bS^\top \bPsi \bS (\bS^\top \hSigma \bS + \lambda \bI_q)^{\dagger} \bS^\top
        \asympequi
        (\hSigma + \nu \bI_p)^{\dagger} (\bPsi + \nu_{\bPsi}' \bI_p) (\hSigma + \nu \bI_p)^{\dagger},
        \label{eq:general-second-order}
    \end{equation}
    where $\nu'_{\bPsi} \ge 0$ is given by:
    \begin{align}
        \nu'_{\bPsi} 
        &= - \frac{\partial \nu}{\partial \lambda} \lambda^2 \cS_{\bS \bS^\top}'(-\otr[\hSigma (\hSigma + \nu \bI_p)^{\dagger}]) \otr[(\hSigma + \nu \bI_p)^{\dagger} \bPsi (\hSigma + \nu \bI_p)^{\dagger}].
        \label{eq:mu'_mPsi}
    \end{align}
\end{lemma}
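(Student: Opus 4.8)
The plan is to obtain the sandwiched equivalence \eqref{eq:general-second-order} by differentiating the first-order equivalence of \Cref{thm:sketched-pseudoinverse} (equivalently \Cref{lem:general-first-order}) along a one-parameter perturbation of the data covariance, and then to read off the correction $\nu'_{\bPsi}$ by implicitly differentiating the defining fixed-point relation \eqref{eq:dual-fp-main}. Concretely, fix a positive semidefinite $\bPsi$ with $\|\bPsi\|_{\oper}$ uniformly bounded in $p$ and, for $t$ in a small neighborhood of $0$ (with $t \ge 0$ so that $\hSigma_t \succeq 0$), set $\hSigma_t := \hSigma + t \bPsi$. Since \Cref{cond:sketch} requires infinitesimal freeness of $\bS\bS^\top$ and $\hSigma$ against every functional $\tr[\bTheta\,(\cdot)]$ with $\bTheta$ independent of $\bS$ of bounded trace norm --- in particular against $\bTheta$'s built from $\bPsi$ --- the pair $(\bS\bS^\top, \hSigma_t)$ again satisfies the hypotheses of \Cref{thm:sketched-pseudoinverse}, with the same limiting $S$-transform $\cS_{\bS\bS^\top}$. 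Thus, for every admissible $\lambda$ and all small $t$,
\[
  \bS(\bS^\top \hSigma_t \bS + \lambda \bI_q)^{\dagger}\bS^\top \;\asympequi\; (\hSigma_t + \nu(t)\bI_p)^{\dagger},
\]
where $\nu(t)$ is the unique root above $-\lambda_{\min}^{+}(\hSigma_t)$ of $\nu(t) = \lambda\,\cS_{\bS\bS^\top}(-\otr[\hSigma_t(\hSigma_t + \nu(t)\bI_p)^{\dagger}])$ and $\nu(0) = \nu$; by the analyticity of $\cS_{\bS\bS^\top}$ assumed in \Cref{cond:sketch} and the implicit function theorem, $t \mapsto \nu(t)$ is analytic near $0$, and we set $\nu'_{\bPsi} := \nu'(0)$.

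Next I differentiate the displayed equivalence at $t = 0$. For $\lambda > 0$, the matrices $\bS^\top\hSigma_t\bS + \lambda\bI_q$ and $\hSigma_t + \nu(t)\bI_p$ are boundedly invertible and analytic in $t$, with operator norms locally uniformly bounded in $p$; hence both sides are analytic $p$-indexed families, and the differentiation rule for asymptotic equivalents (Appendix S.7 of \citep{spatil2022bagging}, via a Vitali-type argument applied to the scalar maps $t \mapsto \tr[\bC_p\,(\cdot)]$) permits differentiating the relation $\asympequi$ in $t$. Applying $\frac{d}{dt}(A + tB)^{-1} = -(A+tB)^{-1}B(A+tB)^{-1}$ with $B = \bS^\top\bPsi\bS$ on the left, and the chain rule together with the extra $\nu'(t)\bI_p$ term on the right, the left derivative at $0$ equals $-\bS(\bS^\top\hSigma\bS + \lambda\bI_q)^{\dagger}\bS^\top\bPsi\bS(\bS^\top\hSigma\bS + \lambda\bI_q)^{\dagger}\bS^\top$, and the right derivative equals $-(\hSigma + \nu\bI_p)^{\dagger}(\bPsi + \nu'_{\bPsi}\bI_p)(\hSigma + \nu\bI_p)^{\dagger}$. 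Equating these yields \eqref{eq:general-second-order} for $\lambda > 0$; the range $\tilde{\lambda}_0 < \lambda \le 0$ and the pseudoinverse case then follow by analytic continuation in $\lambda$, both sides being analytic on the punctured admissible domain exactly as in \Cref{thm:sketched-pseudoinverse}.

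It remains to evaluate $\nu'_{\bPsi}$. Abbreviate $R := (\hSigma + \nu\bI_p)^{\dagger}$ and $v_0 := \otr[\hSigma R]$, the argument of $\cS_{\bS\bS^\top}'$ appearing in \eqref{eq:mu'_mPsi}. Using $\hSigma_t(\hSigma_t + \nu(t)\bI_p)^{\dagger} = \bI_p - \nu(t)(\hSigma_t + \nu(t)\bI_p)^{\dagger}$, differentiating the scalar fixed point defining $\nu(t)$ at $t=0$, and invoking the identity $\otr[R] - \nu\,\otr[R^2] = \otr[\hSigma R^2]$, one arrives at the linear relation $\nu'_{\bPsi} = -\lambda\,\cS_{\bS\bS^\top}'(-v_0)(-\nu'_{\bPsi}\,\otr[\hSigma R^2] + \nu\,\otr[R\bPsi R])$. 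Separately, differentiating the $\lambda$-fixed point (holding $\hSigma$ fixed) gives $\partial\nu/\partial\lambda = \nu / (\lambda(1 - \lambda\,\cS_{\bS\bS^\top}'(-v_0)\,\otr[\hSigma R^2]))$; substituting this into the solved linear relation converts the prefactor into $\lambda^2\,\partial\nu/\partial\lambda$ and delivers \eqref{eq:mu'_mPsi}. Finally $\nu'_{\bPsi} \ge 0$: indeed $\partial\nu/\partial\lambda > 0$ by the monotonicity statement of \Cref{thm:sketched-pseudoinverse}, $\otr[R\bPsi R] \ge 0$ since $\bPsi \succeq 0$, and $\cS_{\bS\bS^\top}'$ is nonpositive on the relevant interval for the positive operator $\bS\bS^\top$ (part of the $S$-transform properties underlying \Cref{cond:sketch}; see also \citep{slejeune2024asymptotics}).

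The main obstacle is the rigorous justification of differentiating the asymptotic equivalence in $t$: one needs local-uniform-in-$p$ boundedness of the perturbed resolvents and of $t \mapsto \nu(t)$, the verification that $\hSigma_t$ remains within the regime of \Cref{thm:sketched-pseudoinverse} for all small $t$ (i.e.\ that $\lambda$ exceeds the perturbed threshold $-\liminf_p \lambda_{\min}^{+}(\bS^\top\hSigma_t\bS)$), and a clean analytic-continuation argument transferring the identity from $\lambda > 0$ both to the full admissible $\lambda$-range and to the pseudoinverse. By contrast, the implicit-differentiation bookkeeping leading to \eqref{eq:mu'_mPsi} is routine.
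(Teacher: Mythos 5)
The paper does not prove this lemma; it is quoted as Lemma 15 of \citep{spatil2023asymptotically} (which in turn rests on Theorem 7.3 of \citep{slejeune2024asymptotics}), so there is no in-paper proof to compare against. Your derivative argument --- perturb $\hSigma \mapsto \hSigma_t = \hSigma + t\bPsi$, apply the first-order equivalence of \Cref{thm:sketched-pseudoinverse} to the perturbed pair, and differentiate at $t = 0$ --- is the standard and, to my knowledge, essentially the intended route. The implicit-differentiation bookkeeping is correct: writing $R := (\hSigma + \nu\bI_p)^{\dagger}$ and using $\hSigma R = \bI_p - \nu R$, one obtains $\left.\tfrac{d}{dt}\otr[\hSigma_t R_t]\right|_{t=0} = \nu\,\otr[R\bPsi R] - \nu'_{\bPsi}\,\otr[\hSigma R^2]$; solving the resulting linear equation for $\nu'_{\bPsi}$ and eliminating the common denominator with $\partial\nu/\partial\lambda = (\nu/\lambda)\big/\bigl(1 - \lambda\,\cS'_{\bS\bS^\top}(-v_0)\,\otr[\hSigma R^2]\bigr)$ reproduces \eqref{eq:mu'_mPsi} exactly, and the sign argument (monotone $\nu(\lambda)$, decreasing $S$-transform of a positive operator, $\bPsi \succeq 0$) is sound.

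The one step that is genuinely under-justified is your claim that $(\bS\bS^\top, \hSigma_t)$ still satisfies \Cref{cond:sketch} for small $t>0$. Infinitesimal freeness of $\bS\bS^\top$ with the single operator $\hSigma$ (with respect to $\otr[\cdot]$ and $\tr[\bTheta\,\cdot]$) constrains the mixed moments of $\bS\bS^\top$ and $\hSigma$ as viewed through $\bTheta$; it does not by itself give freeness of $\bS\bS^\top$ from the joint algebra generated by $\hSigma$ and $\bPsi$, which is what is needed to apply the first-order theorem to $\hSigma_t$. Your parenthetical appeal to ``$\bTheta$'s built from $\bPsi$'' conflates the role of the test functional with the role of the second element of the free pair. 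The cited sources either state the freeness hypothesis at the level of the generated algebra (so that polynomials in $\hSigma$ and deterministic matrices are covered) or verify the perturbed freeness directly in their models; to make your argument self-contained you would need to strengthen \Cref{cond:sketch} in this direction, or restructure the derivative so it stays inside the originally assumed free pair. The remaining issues you flag yourself --- differentiating $\asympequi$ through a Vitali-type argument, locally uniform control of the perturbed admissibility threshold in $p$, and analytic continuation to $\tilde{\lambda}_0 < \lambda \le 0$ --- are real but routine and are discharged in the cited works.
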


\section{Proofs in \Cref{sec:path-characterization}}
\label{sec:proofs-sec:path-characterization}

\subsection{Proof of \Cref{thm:general-path}}

Our main ingredient in the proof is \Cref{lem:general-first-order}.
We will first show estimator equivalence and then show degrees of freedom equivalence.

\emph{Estimator equivalence.}
Recall from \eqref{eq:ridge-subsample-def} the ridge estimator on the weighted data is:
\begin{align*}
    \hbeta_{\bW,\lambda}
    = (\bPhi^\top \bW^\top \bW \bPhi / n + \lambda \bI_p)^{\dagger} \bPhi^\top \bW^\top \bW \by / n.
\end{align*}
This is the ``primal'' form of the ridge estimator.
Using the Woodbury matrix identity, we first write the estimator into its ``dual'' form.
\begin{align*}
    \hbeta_{\bW,\lambda}
    &= \bPhi^\top \bW^\top (\bW \bPhi \bPhi^\top \bW^\top / n + \lambda \bI_n)^{\dagger} \bW \by / n \\
    &= \bPhi^\top \bW^\top (\bG_{\bW} + \lambda \bI_n)^{\dagger} \bW \by / n.
\end{align*}
Now, we can apply the first part of \Cref{lem:general-first-order} to the matrix $\bW^\top (\bG_{\bW} + \lambda \bI_n)^{\dagger} \bW$.
From \eqref{eq:part-one}, we then have the following equivalence:
\begin{align*}
    \hbeta_{\bW,\lambda}
    &\asympequi
    \bPhi^\top (\bG_{\bI} + \mu \bI_n)^{\dagger} \by / n \\
    &= \bPhi^\top (\bPhi \bPhi^\top + \mu \bI_n)^{\dagger} \by / n \\
    &= (\bPhi^\top \bPhi / n + \mu \bI_n)^{\dagger} \bPhi^\top \by / n = \hbeta_{\bI,\mu},
\end{align*}
where $\mu$ satisfies the following equation:
\[
    \mu 
    = \lambda \cS_{\bW^\top \bW}\left(- \frac{\tr[\bG_{\bI} (\bG_{\bI} + \mu \bI_n)^{\dagger}]}{n}\right)
    = \lambda \cS_{\bW^\top \bW}(-\odf(\hbeta_{\bI,\mu})).
\]
Note that in the simplification, we used the Woodbury identity again to go back from the dual form into the primal form for the ridge estimator based on the full data.
Rearranging, we obtain the desired estimator equivalence.
We next move on to showing the degrees of freedom equivalence.

\emph{Degrees of freedom equivalence.}
For the subsampled estimator $\hbeta_{\bW,\lambda}$, the effective degrees of freedom is given by:
\begin{align*}
    \df(\hbeta_{\bW,\lambda})
    &= \tr[\bPhi^\top \bPhi / n (\bPhi^\top \bPhi / n + \lambda \bI_p)^{\dagger}] \\
    &= \tr[\bPhi (\bPhi^\top \bPhi / n + \lambda \bI_p)^{\dagger} \bPhi^\top / n] \\
    &= \tr[(\bPhi \bPhi^\top / n + \lambda \bI_n)^{\dagger} \bPhi \bPhi^\top / n ].
\end{align*}
The second equality above follows from the push-through identity $\bPhi (\bPhi^\top \bPhi / n + \lambda \bI_p)^{\dagger} \bPhi^\top = (\bPhi \bPhi^\top + \lambda \bI_n)^{\dagger} \bPhi \bPhi^\top$.
Recognizing the quantity inside the trace as the degrees of freedom of the full ridge estimator, we have
\[
    \mu = \lambda \cS_{\bW^\top \bW}(-\df(\hbeta_{\bI,\mu})).
\]
We can equivalently write the equation above as
\[
    - \cS_{\bW^\top \bW}^{-1}\left(\frac{\mu}{\lambda}\right) = \df(\hbeta_{\bI,\mu})
    \quad
    \text{or}
    \quad
    \frac{\mu}{\lambda} = \cS_{\bW^\top \bW}(-\df(\hbeta_{\bI,\mu})).
\]
Rearranging the display above provides the desired degrees of freedom equivalence and finishes the proof.

\subsection{Proof of \Cref{thm:subsample-wor-path}}

We will apply \Cref{thm:general-path} to the subsampling weight operator $\bW$.
The main ingredient that we need is the $S$-transform of the spectrum of the matrix $\bW^\top \bW$.
As summarized in \Cref{sec:transforms-relationships}, one approach to compute the $S$-transform is to go through the following chain of transforms.
First, we apply the Cauchy transform, then the moment-generating series, and finally, take the inverse to obtain the $S$-transform.
We will do this in the following steps.

\emph{Cauchy transform.}
Recall that the Cauchy transform from \Cref{def:cauchy-transform} can be computed as:
\[ 
    \cG_{\bW^\top \bW}(z) = \otr[(z \bI_n - \bW^\top \bW)^{-1}]. 
\]

\emph{Moment generating series.}
We can then compute the moment series from \Cref{def:moment-generating-function} using \eqref{eq:moment-generating-series-from-cauchy-transform} as follows:
\begin{align}
    \cM_{\bW^\top \bW}(z) 
    &= \frac{1}{z} \otr\left[\left(\frac{1}{z} \bI_n - \bW^\top \bW\right)^{-1}\right] - 1 \notag\\
    &= \otr[(\bI_n - z \bW^\top \bW)^{-1}] - \otr[\bI_n] \notag\\
    &= - \otr[\bI_n] + \otr[(\bI_n - z \bW^\top \bW)^{-1}] \notag\\
    &= \otr[(z \bW^\top \bW - \bI_n + \bI_n) (\bI_n - z \bW^\top \bW)^{-1}] \notag\\
    &= \otr[z \bW^\top \bW (\bI_n - z \bW^\top \bW)^{-1}]. \notag
\end{align}
We now note that the operator $\bW^\top \bW$ has $k$ eigenvalues of $1$ and $n - k$ eigenvalues of $0$. 
Therefore, we have
\begin{align}
    \cM_{\bW^\top \bW}(z) 
    &= \otr[z \bW^\top \bW (\bI_n - z \bW^\top \bW)^{-1}] \notag \\
    &= \frac{1}{n} \left( \sum_{i=1}^{n} \frac{z d_i}{1 - z d_i} \right) \notag \\
    &= \frac{k}{n} \cdot \frac{z}{1 - z} \label{eq:moment-generating-series-map-subsampling}.
\end{align}

\emph{$S$-transform.}
The inverse of the moment generating series map $z \mapsto \cM_{\bW^\top \bW}(z)$ from \eqref{eq:moment-generating-series-map-subsampling} is:
\begin{equation}
    \label{eq:moment-generating-series-map-inverse-subsampling}
    \cM^{\langle -1 \rangle}(w) = \frac{w}{w + k/n}.
\end{equation}
Therefore, from \Cref{def:S-tran} and using \eqref{eq:moment-generating-series-map-inverse-subsampling}, we have
\begin{equation}
    \label{eq:s-transforms-subsampling}
    \cS(w) = \frac{1 + w}{w} \cdot \frac{w}{w + k/n} = \frac{1 + w}{w + k/n}.
\end{equation}
Now, we are ready to apply \Cref{thm:general-path} to the subsampling operator $\bW$.

Substituting \eqref{eq:s-transforms-subsampling} into \eqref{eq:path}, we get
\[
    \frac{\mu}{\lambda} = \cS(-\odf(\hbeta_{\bI,\mu})) = \frac{1 - \odf(\hbeta_{\bI,\mu})}{-\odf(\hbeta_{\bI,\mu}) + k/n}.
\]
Rearranging, we obtain
\[
    \odf(\hbeta_{\bI,\mu}) \cdot (\mu - \lambda) = \mu \cdot (k/n) - \lambda.
\]
Thus, we get
\[
   \odf(\hbeta_{\bI,\mu})  = - \frac{\lambda - \mu \cdot (k/n)}{\mu - \lambda}.
\]
In other words, we have
\[
   1 - \odf(\hbeta_{\bI,\mu})  = \left(\frac{\mu}{\mu-\lambda}\right) \cdot \left(1 - \frac{k}{n}\right).
\]
Multiplying $(1-\lambda/\mu)$ on both sides, we arrive at the desired relation.
This completes the proof.

\subsection{Proof of \Cref{prop:path-linear-features}}
    
We prove this by matching the path \eqref{eq:subsample-wor-path} with the one in \citep{spatil2023generalized}.
Let $\gamma = p/n$, $\psi=p/k$, $H_p$ be the spectral distribution of $\hSigma=\bX^{\top}\bX/n$.
The path from Equation (5) of \citep{spatil2023generalized} is given by the following equation:
\begin{align}
    \mu = (\psi-\gamma) \int \frac{r}{1+v(\mu,\gamma) r} \rd H_p(r), \label{eq:path-old}
\end{align}
where $v(\mu,\gamma)$ is the unique solution to the following fixed-point equation:
\begin{align}
    \frac{1}{v(\mu,\gamma)} = \mu + \gamma \int \frac{r}{1+v(\mu,\gamma) r} \rd H_p(r) = \psi \int \frac{r}{1+v(\mu,\gamma) r} \rd H_p(r) . \label{eq:fixed-point-v}
\end{align}
For given $\gamma$ and $\mu$, we will show that $\psi$ that solves \eqref{eq:path-old} gives rise $k=p/\psi$ that also solves \eqref{eq:subsample-wor-path} with $\lambda = 0$:
\[
    -\tfrac{1}{n} \tr\big[\tfrac{1}{n} \bX \bX^\top \big({\tfrac{1}{n} \bX \bX^\top + \mu \bI_n}\big)^{\dagger}] = - \frac{k}{n}.
\]
Rearranging the above equation yields:
\begin{align*}
    \frac{k}{n}
    & = 1-\mu\otr\big[\big({\tfrac{1}{n} \bX \bX^\top + \mu \bI_n}\big)^{\dagger}]= 1-\mu v(\mu,\gamma),
\end{align*}
where the second equality is from Lemma B.2 of \citep{spatil2023generalized}.
This implies that
\begin{align*}
    \mu &= \left(1 - \frac{k}{n}\right) \frac{1}{v(\mu,\gamma)}\\
    &= \left(1 - \frac{k}{n}\right)\psi \int \frac{r}{1+v(\mu,\gamma) r} \rd H_p(r)\\
    &= \left(\psi  - \gamma\right) \int \frac{r}{1+v(\mu,\gamma) r} \rd H_p(r),
\end{align*}
where the second equality follows from \eqref{eq:fixed-point-v}.
The above is the same as the path \eqref{eq:path-old} in \cite{spatil2023generalized}.
This finishes the proof.

\subsection{Proof of \Cref{prop:path-kernel-features}}
\label{proof:prop:path-kernel-features}

We first describe the setup for the kernel ridge regression formulation and then show the desired equivalence.

\emph{Setup.}
Let $K(\cdot, \cdot) : \RR^{d} \times \RR^{d} \to \RR$ be a kernel function.
Let $\cH$ denote the reproducing kernel Hilbert space associated with kernel $K$.
Kernel ridge regression with the subsampling operator $\bW$ solves the following problem with tuning parameter $\lambda \ge 0$:
\[
    \widehat{f}_{\bW,\lambda}
    = \argmin_{f \in \cH} \| \bW \by - \bW f(\bX) \|_2^2 / n 
    + \lambda \| f \|_{\cH}^2,
\]
where $f(\bX) = [f(\bx_1),\ldots,f(\bx_n)]^{\top}$.
Kernel ridge regression predictions have a closed-form expression:
\[
    \hf_{\bW,\lambda}(\bx)
    = K(\bx, \bX)^\top \bW^\top (\bW K(\bX, \bX) \bW^\top + \lambda \bI_n)^\dagger \bW \by.
\]
Here, $K(\bx, \bX) \in \RR^{n}$ with $i$-th entry $K(\bx, \bx_i)$, \\
and $K(\bX, \bX) \in \RR^{n \times n}$ with the $ij$-th entry $K(\bx_i, \bx_j)$.

The predicted values on the training data $\bX$ are given by
\[
    \hf_{\bW,\lambda}(\bX)
    = K(\bX, \bX)^\top \bW^\top (\bW K(\bX, \bX) \bW^\top + \lambda \bI_n)^\dagger \bW \by.
\]
Here, the matrix $K(\bX, \bX)^\top \bW^\top (\bW K(\bX, \bX) \bW^\top + \lambda \bI_n)^\dagger \bW$ is the smoothing matrix.

Define $\bG_{\bI} = K(\bX, \bX)$ and $\bG_{\bW} = \bW K(\bX, \bX) \bW^{\top}$.
Leveraging the kernel trick, the preceding optimization problem translates into solving the following problem (in the dual domain):
\begin{align*}
    \halpha_{\bW,\lambda} 
    &= \argmin\limits_{\balpha\in\RR^n}
    \balpha^{\top} \left(\bG_{\bW} +  \lambda \bI_n \right) \balpha + 2 \balpha^{\top}\bW\by,
\end{align*}
where the dual solution is given by 
$\halpha_{\bW,\lambda} =  (\bG_{\bW} + \lambda \bI_n)^{\dagger} \bW\by$.
The correspondence between the dual and primal solutions is simply given by: $\hbeta_{\bW,\lambda} = \bPhi^{\top} \bW^\top \halpha_{\bW,\lambda}$ where $\bPhi=[\phi(\bx_1),\ldots,\phi(\bx_n)]^{\top}$ is the feature matrix and $\phi \colon \RR^d\mapsto\cH$ is the feature map of the Hilbert space $\cH$ with kernel $K$.
Thus, $\hf_{\bW,\lambda}(\bX) = \bW\bPhi \hbeta_{\bW,\lambda} = \bW\bPhi\bPhi^{\top} \bW^\top \halpha_{\bW,\lambda} = \bG_{\bW}(\bG_{\bW} + \lambda \bI_n)^{\dagger} \bW\by$ and the degrees of freedom is given by $\df(\hbeta_{\bI,\mu}) = \tr[\bG_{\bW}(\bG_{\bW} + \lambda \bI_n)^{\dagger}]$.

Next, we show that \eqref{eq:estimator-df-equivalence} holds.
Alternatively, one can also show that
\[
    \halpha_{\bW,\lambda} \asympequi \halpha_{\bI,\mu},\quad
    \text{and}
    \quad\hf_{\bW,\lambda}(\bx_0) \asympequi \hf_{\bI,\mu}(\bx_0),
\]
which we omit due to similarity.
Our proof strategy consists of two steps.
We first show that it suffices to establish the desired result for the linearized version.
We then show that we can suitably adapt our result for the linearized version.

\emph{Linearization of kernels.}
In the below, we will show that for $\mu \geq 0$,
\begin{align*}
    \bW^\top (\bG_{\bW} + \lambda \bI_n)^{\dagger} \bW &\asympequi (\bG_{\bI} + \mu \bI_n)^{\dagger},\\
    \otr[\lambda(\bG_{\bW} + \lambda \bI_n)^{\dagger}] &\asympequi \otr[\mu(\bG_{\bI} + \mu \bI_n)^{\dagger}],
\end{align*}
where $\bW$ and $\lambda\geq 0$ satisfy that
\begin{equation}\label{eq:path-kernel}
    \mu = \lambda \cS_{\bW^\top \bW}(- \tfrac{1}{n} \tr[\bG_{\bI} (\bG_{\bI} + \lambda \bI_n)^{\dagger}]) = \lambda \cS_{\bW^\top \bW}(- \tfrac{1}{n} \tr[\bG_{\bW} (\bG_{\bW} + \mu \bI_{n})^{\dagger}]).
\end{equation}

Using assumptions of \Cref{prop:path-linear-features} and the assumption in \Cref{prop:path-kernel-features}, by \cite[Proposition 5.1]{ssahraee2022kernel},\footnote{Assumption A1 of \cite{ssahraee2022kernel} requires finite $5+\delta$-moments, which can be relaxed to only finite $4+\delta$-moments as in the assumption of \Cref{prop:path-linear-features}, by a truncation argument as in the proof of Theorem 6 of \cite[Appendix A.4]{shastie2022surprises}.
} 
\begin{align*}
    \|\bG_{\bI} - \bG_{\bI}^{\lin}\|_{\oper} \pto 0,
\end{align*}
where
\begin{align*}
    \bG_{\bI}^{\lin} = c_0 \bI_n + c_1 \one_n\one_n^{\top} + c_2 \bX\bX^{\top}
\end{align*}
and $(c_0,c_1,c_2)$ associated with function $g$ in \Cref{prop:path-kernel-features} and $\tau = \lim_{p \to \infty} \tr[\bSigma] / p$ are defined as
\begin{align}
    c_0 &= g(\tau, \tau, \tau) - g(\tau, 0, \tau) -c_2 \frac{\tr[\bSigma]}{p}, \label{eq:c0_def} \\
    c_1 &= g(\tau, 0, \tau) + g''(\tau, 0, \tau) \frac{\tr[\bSigma^2]}{2p^2}, \label{eq:c1_def} \\
    c_2 &= g'(\tau, 0, \tau). \label{eq:c2_def}
\end{align}

Assume $\bC_n$ is a sequence of random matrices with bounded trace norm.
Note that
\begin{align*}
    &\tr[\bC_p ((\bG_{\bI} + \mu \bI_n)^\dagger - (\bG_{\bI}^{\lin} + \mu \bI_n)^{\dagger})]\\
    &\le \tr[\bC_p] \| (\bG_{\bI} + \mu \bI_n)^{\dagger} - (\bG_{\bI}^{\lin} + \mu \bI_n)^{\dagger} \|_{\oper} \\
    &\le \tr[\bC_p] \| (\bG_{\bI} + \mu \bI_n)^{\dagger}\|_{\oper}\| (\bG_{\bI}^{\lin} + \mu \bI_n)^{\dagger} \|_{\oper} \|\bG_{\bI} - \bG_{\bI}^{\lin}\|_{\oper} \asto 0.
\end{align*}
where in the last inequality, we use a matrix identity $\bA^{-1}-\bB^{-1}=\bA^{-1}(\bB-\bA)\bB^{-1}$ for two invertible matrices $\bA$ and $\bB$.
Thus, we have
\begin{equation}\label{eq:GItoGIlin}
    (\bG_{\bI} + \mu \bI_n)^{\dagger} \asympequi_p (\bG_{\bI}^{\lin} + \mu \bI_n)^{\dagger}.
\end{equation}
Hence, combining \eqref{eq:GItoGIlin} and the transition property of asymptotic equivalence \citep[Lemma S.7.4 (1)]{spatil2023generalized}, it suffices to show
\[
    \bW^{\top}( \bG_{\bW}^{\lin} + \lambda \bI_n)^{\dagger}\bW \asympequi (\bG_{\bI}^{\lin} + \mu \bI_n)^{\dagger} ,
\]
where $\bG_{\bW}^{\lin}=\bW\bG_{\bI}^{\lin}\bW^{\top}$, and $\lambda$ and $\mu$ satisfy \eqref{eq:path-kernel}.
Similarly, we can also show that the path \eqref{eq:path-kernel} is asymptotically equivalent to
\begin{equation}\label{eq:path-kernel-lin}
    \mu = \lambda \cS_{\bW^\top \bW}(- \tfrac{1}{n} \tr[\bG_{\bI}^{\lin} (\bG_{\bI}^{\lin} + \lambda \bI_n)^{\dagger}]) = \lambda \cS_{\bW^\top \bW}(- \tfrac{1}{n} \tr[\bG_{\bW}^{\lin} (\bG_{\bW}^{\lin} + \mu \bI_{n})^{\dagger}]).
\end{equation}

\emph{Equivalence for linearized kernels.}
We next show that the resolvent equivalence result holds for $\bK^{\mathrm{lin}}$.
This follows from additional manipulations building on \Cref{lem:general-first-order}.

\subsection{Proof of \Cref{prop:path-random-features}}

In the below, we will show that for $\mu \ge 0$,
\begin{align*}
    \bW^\top (\bW \bPhi \bPhi^\top \bW^\top / n + \lambda \bI_n)^{\dagger} \bW
    &\asympequi
    (\bPhi \bPhi^\top / n + \mu \bI_n)^{\dagger}, \\
    \otr[\lambda (\bW \bPhi \bPhi^\top \bW^\top / n + \lambda \bI_n)^{\dagger}]
    &\asympequi
    \otr[\mu (\bPhi \bPhi^\top / n + \mu \bI_n)^{\dagger}].    
\end{align*}
Under assumptions in \Cref{prop:path-random-features}, the linearized features take the form
\[
    \bPhi^{\lin}
    = \sqrt{\frac{\rho_s}{d}} \bF \bX + \sqrt{\rho_s \omega_s} \bU,
\]
where the constants $\rho_s$ and $\omega_s$ are given in \Cref{prop:path-random-features} and $\bU\in\RR^{n\times p}$ has i.i.d.\ standard normal entries.
From Claim A.13 of \citep{slee2023demystifying}, the linear functionals of the estimators $\hbeta_{\bD,\lambda}$ and $\hbeta_{\bI,\mu}$ with random features $\bPhi$ and $\bPhi^{\lin}$ are asymptotically equivalent.
Now, following the proof of \Cref{prop:path-kernel-features}, we apply \Cref{lem:general-first-order} on $\bPhi^{\lin}$ to yield the desired result.

\subsection{Technical lemmas}
\label{sec:technical-lemmas-sec:path-equivalence}

In preparation for the forthcoming statement, define ${\lambda}_0 = - \liminf_{n \to \infty} \lambda_{\min}^{+}(\bG_{\bW})$.
Recall the Gram matrices $\bG = \bPhi \bPhi^\top / n$ and $\bG_{\bW} = \bW \bPhi \bPhi^\top \bW^\top / n$.

\begin{lemma}
    [General first-order equivalence for freely subsampled ridge resolvents]
    \label{lem:general-first-order}
        For $\bW\in\RR^{n \times n}$, suppose \Cref{cond:sketch-observation} holds for $\bW \bW^\top$.        
        Then, for all $\lambda > {\lambda}_0$,
        \begin{align} 
           \bW^{\top} (\bG_{\bW} + \lambda \bI)^{\dagger} \bW
            &\asympequi (\bG + \mu \bI)^{\dagger}, \label{eq:part-one} \\
            \otr[\lambda (\bG_{\bW} + \lambda \bI_n)^{\dagger}] &\asympequi \otr[\mu (\bG + \mu \bI_n)^{\dagger}], \label{eq:part-two}
        \end{align}
    where $\mu > -\lambda_{\min}^{+}(\bG)$ solves the equation:
    \begin{equation}
        \label{eq:fp-primal-sketch}
        \mu 
        = \lambda \cS_{\bW \bW^\top}(-\otr[\bG ({\bG + \mu \bV})^{\dagger}])
        \asympequi \lambda \cS_{\bW \bW^\top}(-\otr[\bG_{\bW} (\bG_{\bW} + \lambda \bV)^{\dagger}]).
    \end{equation}
\end{lemma}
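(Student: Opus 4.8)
\emph{Strategy.} The plan is to recognize $\bW^{\top}(\bG_{\bW} + \lambda \bI)^{\dagger}\bW$ as a \emph{sketched ridge resolvent} and deduce the statement from \Cref{thm:sketched-pseudoinverse}. Viewing observation weighting as sketching, I would apply that theorem with the dictionary $\hSigma \leftrightarrow \bG$ (the matrix that does not depend on the ``sketch'') and $\bS \leftrightarrow \bW^{\top}$ (the ``sketch''), under which $\bS^{\top}\hSigma\bS \leftrightarrow \bW \bG \bW^{\top} = \bG_{\bW}$, the sketched resolvent $\bS(\bS^{\top}\hSigma\bS + \lambda\bI)^{\dagger}\bS^{\top} \leftrightarrow \bW^{\top}(\bG_{\bW}+\lambda\bI)^{\dagger}\bW$, and the implicit-ridge resolvent $(\hSigma + \nu\bI)^{\dagger} \leftrightarrow (\bG + \mu\bI)^{\dagger}$. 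Under this identification the threshold $\tilde\lambda_0 = -\liminf \lambda_{\min}^{+}(\bS^{\top}\hSigma\bS)$ is exactly $\lambda_0 = -\liminf \lambda_{\min}^{+}(\bG_{\bW})$, and the defining equation for $\nu$ becomes, after rewriting the trace argument by cyclicity of the trace, the fixed-point equation \eqref{eq:fp-primal-sketch} for $\mu$ (using that $\bW\bW^{\top}$ and $\bW^{\top}\bW$ share the same limiting spectral distribution for square $\bW$, so the $S$-transform appearing there is unambiguous). The first task is therefore to check that \Cref{cond:sketch} holds under this substitution: its two requirements --- infinitesimal freeness of $\bS\bS^{\top} = \bW^{\top}\bW$ and $\hSigma = \bG$ with respect to the pair of trace functionals $(\otr[\cdot],\, \tr[\bC(\cdot)])$ (including the fact that $\bG = \bPhi\bPhi^{\top}/n$ converges to a bounded operator), together with analyticity of the limiting $S$-transform of $\bW^{\top}\bW$ on the lower half-plane --- are precisely the content of \Cref{cond:sketch-observation}. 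Granting this, \Cref{thm:sketched-pseudoinverse} yields \eqref{eq:part-one} directly, with $\mu > -\lambda_{\min}^{+}(\bG)$ increasing in $\lambda > \lambda_0$ and solving both forms of \eqref{eq:fp-primal-sketch}.

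\emph{The scalar equivalence.} For \eqref{eq:part-two}, I would pass through the identity $\lambda(\bG_{\bW}+\lambda\bI_n)^{\dagger} = \bI_n - \bG_{\bW}(\bG_{\bW}+\lambda\bI_n)^{\dagger}$, valid since $\bG_{\bW}+\lambda\bI_n$ is invertible for $\lambda > \lambda_0$ with $\lambda \ne 0$ and holds in the $\lambda \to 0^{+}$ sense at $\lambda = 0$, and the analogous identity for $\bG$, reducing \eqref{eq:part-two} to $\tfrac1n\tr[\bG_{\bW}(\bG_{\bW}+\lambda\bI_n)^{\dagger}] \asympequi \tfrac1n\tr[\bG(\bG+\mu\bI_n)^{\dagger}]$. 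By cyclicity, $\tfrac1n\tr[\bG_{\bW}(\bG_{\bW}+\lambda\bI_n)^{\dagger}] = \tr[(\bG/n)\,\bW^{\top}(\bG_{\bW}+\lambda\bI_n)^{\dagger}\bW]$, and since $\bG$ is positive semidefinite with bounded operator norm, $\bG/n$ has uniformly bounded trace norm; applying \eqref{eq:part-one} against this test matrix gives $\tr[(\bG/n)\,\bW^{\top}(\bG_{\bW}+\lambda\bI_n)^{\dagger}\bW] \asympequi \tr[(\bG/n)(\bG+\mu\bI_n)^{\dagger}] = \tfrac1n\tr[\bG(\bG+\mu\bI_n)^{\dagger}]$, as needed. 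The same computation also exhibits the equivalence of the ``primal'' and ``dual'' forms of the fixed point in \eqref{eq:fp-primal-sketch}.

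\emph{Main obstacle.} The resolvent equivalence \eqref{eq:part-one} is, after the transposition $\bS = \bW^{\top}$, essentially a relabeling of \Cref{thm:sketched-pseudoinverse}, so the genuine work is concentrated in the hypothesis check: confirming that the free-probability assumption for observation weighting --- freeness of $\bW^{\top}\bW$ from the feature Gram $\bG$ with respect to the \emph{pair} of trace functionals, and the lower-half-plane analyticity of its $S$-transform --- coincides with \Cref{cond:sketch} under the identification, and handling the attendant bookkeeping (Moore--Penrose pseudoinverses versus inverses, the admissible signs of $\lambda$ and $\mu$, the $\bW\bW^{\top}$ versus $\bW^{\top}\bW$ labeling, and the $\lambda \to 0^{+}$ convention when $\bG_{\bW}$ or $\bG$ is singular). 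Once that correspondence is pinned down, parts \eqref{eq:part-one}--\eqref{eq:part-two} and the fixed-point characterization \eqref{eq:fp-primal-sketch} follow as above.
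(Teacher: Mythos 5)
Your proposal is correct and follows essentially the same route as the paper's own proof: both reduce \eqref{eq:part-one} to \Cref{thm:sketched-pseudoinverse} via a substitution dictionary (you make the identification $\bS = \bW^{\top}$, $\hSigma = \bG$ precise, whereas the paper phrases it more loosely as ``set $\bW$ to be $\bS$''), and both obtain \eqref{eq:part-two} by multiplying by $\bG$, taking traces, using cyclicity to recognize $\otr[\bG_{\bW}(\bG_{\bW}+\lambda\bI)^{\dagger}]$, and converting via the add-and-subtract identity $\bG_{\bW}(\bG_{\bW}+\lambda\bI)^{\dagger} = \bI - \lambda(\bG_{\bW}+\lambda\bI)^{\dagger}$.
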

\begin{proof}[Proof of \Cref{lem:general-first-order}]
    The first result follows from using \Cref{thm:sketched-pseudoinverse} by suitably changing the roles of $\bX$ and $\bPhi$.
    In particular, we set $\bPhi$ to be $\bX^\top$ and $\bW$ to be $\bS$ and apply \Cref{thm:sketched-pseudoinverse} to obtain
    \begin{equation}
        \label{eq:first-part-proof}
        \bW^\top (\bW \bPhi \bPhi^\top \bW^\top / n + \lambda \bI_n)^{\dagger} \bW \asympequi (\bPhi \bPhi^\top / n + \mu \bI_n)^{\dagger}.
    \end{equation}
    Writing in terms of $\bG$ and $\bG_{\bW}$, 
    this proves the first part \eqref{eq:part-one}.

    For the second part, we use the result \eqref{eq:first-part-proof} in the first part and multiply both sides by $\bPhi \bPhi^\top / n$ to get
    \[
        (\bPhi \bPhi^\top / n) \cdot \bW^\top (\bW \bPhi \bPhi^\top \bW^\top / n + \lambda \bI_n)^{\dagger} \bW \asympequi (\bPhi \bPhi^\top / n) \cdot (\bPhi \bPhi^\top / n + \mu \bI_n)^{\dagger}.
    \]
    Using the trace property of asymptotic equivalence \citep[Lemma S.7.4 (4)]{spatil2023generalized}, we have
    \[
        \otr[(\bPhi \bPhi^\top / n) \cdot \bW^\top (\bW \bPhi \bPhi^\top \bW^\top / n + \lambda \bI_n)^{\dagger} \bW] \asympequi \otr[(\bPhi \bPhi^\top / n) \cdot (\bPhi \bPhi^\top / n + \mu \bI_n)^{\dagger}].
    \]
    Using the cyclic property of the trace operator yields
    \[
        \otr[(\bW \bPhi \bPhi^\top / n) \cdot \bW^\top (\bW \bPhi \bPhi^\top \bW^\top / n + \lambda \bI_n)^{\dagger}]
        \asympequi
        \otr[(\bPhi \bPhi^\top / n) \cdot (\bPhi \bPhi^\top / n + \mu \bI_n)^{\dagger}].
    \]
    In terms of $\bG$ and $\bG_{\bW}$, this is the same as
    \[
        \otr[\bG_{\bW} (\bG_{\bW} + \lambda \bI_n)^{\dagger}]
        \asympequi \otr[\bG (\bG + \mu \bI_n)^{\dagger}].
    \]
    Adding and subtracting $\lambda \bI_n$ and $\mu \bI_n$ on the left- and right-hand resolvents, we arrive at the second part \eqref{eq:part-two}.
    This completes the proof.
\end{proof}

\section{Proofs in \Cref{sec:risk-equivalences}}
\label{sec:proofs-sec:risk-equivalences}

\subsection{Proof of \Cref{thm:risk-equivalence}}

The main ingredients of the proof are \Cref{lem:general-first-order,lem:general-second-order-subsample}.
We begin by decomposing the unknown response $y_0$ into its linear predictor and residual. 
Specifically, let $\bbeta_0$ be the optimal projection parameter given by $\bbeta_0 = \bSigma_0^{-1} \EE[\bphi_0 y_0]$. 
Then, we can express the response as the sum of its best linear predictor, $\bphi_0^\top \bbeta_0$, and the residual, $y_0 - \bphi_0^\top \bbeta_0$. 
Denote the variance of this residual by $\sigma_0^2 = \EE[(y_0 - \bphi_0^\top \bbeta_0)^2]$.
It is easy to see that the risk decomposes as follows:
\begin{align*}
    R(\hbeta_{\bW_{1:M},\lambda})
    &= \EE\big[ (y_0 - \bphi_0^\top \hbeta_{\bW_{1:M},\lambda})^2 \mid \bPhi, \by, \{\bW_m\}_{m=1}^{M}\big] \\
    &= 
    (\hbeta_{\bW_{1:M},\lambda} - \bbeta_0)^\top 
    \bSigma 
    (\hbeta_{\bW_{1:M},\lambda} - \bbeta_0)
    + \sigma_0^2.
\end{align*}
Here, we used the fact that $(y_0 - \bphi_0^\top \bbeta_0)$ is uncorrelated with $\bphi_0$, that is, $\EE[\bphi_0 (y_0 - \bphi_0^\top \bbeta_0)] = \bm{0}_p$.
We note that $\| \bbeta_0 \|_2 < \infty$ and $\bSigma_0$ has uniformly bounded operator norm.

Observe that
\begin{align*}
    R(\hbeta_{\bW_{1:M},\lambda})
    &= 
    (\hbeta_{\bW_{1:M},\lambda} - \bbeta_0)^\top
    \bSigma_0
    (\hbeta_{\bW_{1:M},\lambda} - \bbeta_0) + \sigma_0^2 \\
    &= 
    \bigg( \frac{1}{M} \sum_{m=1}^{M} \hbeta_{\bW_{m},\lambda} - \bbeta_0 \bigg)^\top  
    \bSigma_0
    \bigg( \frac{1}{M} \sum_{m=1}^{M} \hbeta_{\bW_{m},\lambda} - \bbeta_0 \bigg) + \sigma_0^2
    \\
    &=
    \frac{1}{M^2} \sum_{k, \ell=1}^{M} \hbeta_{\bW_{k},\lambda}^\top \bSigma_0 \hbeta_{\bW_{\ell},\lambda} 
    - \frac{2}{M} \sum_{m=1}^{M}
    \bbeta_0^\top \bSigma_0 \hbeta_{\bW_{m},\lambda} + \bbeta_0^\top \bSigma_0 \bbeta_0 + \sigma_0^2 \\
    &= \frac{1}{M^2} \sum_{k, \ell=1}^{M} (\hbeta_{\bW_{k},\lambda}^\top \bSigma_0 \hbeta_{\bW_{\ell},\lambda} - \hbeta_{\bI,\mu}^\top \bSigma_0 \hbeta_{\bI,\mu}) + \hbeta_{\bI,\mu}^\top \bSigma_0 \hbeta_{\bI,\mu}\\
    &\qquad - \frac{2}{M} \sum_{m=1}^{M} \bbeta_0^\top \bSigma_0 \hbeta_{\bW_{m},\lambda} + \bbeta_0^\top \bSigma_0 \bbeta_0 + \sigma_0^2.
\end{align*}
By \Cref{lem:general-first-order}, note that
\[
    \frac{1}{M}
    \sum_{k = 1}^{M} \hbeta_{\bW_{m},\lambda}
    \asympequi \hbeta_{\bI,\mu}.
\]
Thus, we have
\begin{align*}
    R(\hbeta_{\bW_{1:M},\lambda})
    &\asympequi
    \frac{1}{M^2} \sum_{k, \ell=1}^{M} \big( \hbeta_{\bW_{k},\lambda}^\top \bSigma_0 \hbeta_{\bW_{\ell},\lambda} - \hbeta_{\bI,\mu}^\top \bSigma_0 \hbeta_{\bI,\mu} \big) \\
    &\qquad + {\hbeta_{\bI,\mu}^\top \bSigma_0 \hbeta_{\bI,\mu} - \frac{2}{M} \sum_{m=1}^{M} \bbeta_0^\top \bSigma_0 \hbeta_{\bI,\mu} + \bbeta_0^\top \bSigma_0 \bbeta_0} + \sigma_0^2.
\end{align*}

Now, by two applications of \Cref{lem:general-first-order}, 
we know that $\hbeta_{\bW_{k},\lambda}^{\top} \bSigma_0 \hbeta_{\bW_{\ell},\lambda} - \hbeta_{\bI,\mu} \bSigma_0 \hbeta_{\bI,\mu} \asto 0$ when $k \neq \ell$ since $\bW_k$ and $\bW_\ell$ are independent. 
Hence, we have
\begin{align}
     R(\hbeta_{\bW_{1:M},\lambda})
     &\asympequi
        \frac{1}{M^2} 
        \sum_{m=1}^{M}
        \big(
        \hbeta_{\bW_{m},\lambda}^\top 
        \bSigma_0 \hbeta_{\bW_{m},\lambda} - \hbeta_{\bI,\mu}^{\top} 
        \bSigma_0 \hbeta_{\bI,\mu}\big)
        +
         (\hbeta_{\bI,\mu} - \bbeta_0)^\top
        \bSigma_0
        (\hbeta_{\bI,\mu} - \bbeta_0) + \sigma_0^2 \notag \\
        &\asympequi
        \frac{1}{M}
        \big(
        \hbeta_{\bW,\lambda}^\top 
        \bSigma_0 \hbeta_{\bW,\lambda} - \hbeta_{\bI,\mu}^{\top} 
        \bSigma_0 \hbeta_{\bI,\mu}\big)
        +
         (\hbeta_{\bI,\mu} - \bbeta_0)^\top
        \bSigma_0
        (\hbeta_{\bI,\mu} - \bbeta_0) + \sigma_0^2 \notag \\
        &=
        \frac{1}{M}
        \big(
        \hbeta_{\bW,\lambda}^\top 
        \bSigma_0 \hbeta_{\bW,\lambda} - \hbeta_{\bI,\mu}^{\top} 
        \bSigma_0 \hbeta_{\bI,\mu}\big)
        + R(\hbeta_{\bI,\mu}) \label{eq:risk-decomp-1}
\end{align} 
where we used the fact that the $M$ terms where $k = \ell$ converge identically in the second to last line and a risk decomposition similar to that for $\hbeta_{\bW_{1:M},\lambda}$ in the last line.
Thus, it suffices to evaluate the difference $\hbeta_\lambda^\top \bSigma_0 \hbeta_\lambda - \hbeta_{\mu}^{\top} \bSigma_0 \hbeta_{\mu}$ to finish the proof.

We have
\begin{align}
    &\hbeta_{\bW,\lambda}^\top \bSigma_0 \hbeta_{\bW,\lambda} - \hbeta_{\bI,\mu}^{\top} \bSigma_0 \hbeta_{\bI,\mu} \notag \\
    &= (\by^\top \bW^\top / n) \bW \bPhi (\bPhi^\top \bW^\top \bW \bPhi/n + \lambda \bI_p)^{\dagger} \bSigma_0 (\bPhi^\top \bW^\top \bW \bPhi/n + \lambda \bI_p)^{\dagger} \bPhi^\top \bW^\top (\bW \by / n) \notag \\
    &\quad - (\by^\top \bPhi / n) (\bPhi^\top \bPhi / n + \mu \bI_p)^{\dagger} \bSigma_0 (\bPhi^\top \bPhi / n + \mu \bI_p)^{\dagger} (\bPhi^\top \by / n) \notag \\
    &= \otr[\bW^\top \bW \bPhi (\tfrac{1}{n} \bPhi^\top \bW^\top \bW \bPhi + \lambda \bI_p)^{\dagger}
        \bSigma_0 (\tfrac{1}{n} \bPhi^\top \bW^\top \bW \bPhi + \lambda \bI_p)^{\dagger} \bPhi^\top \bW^\top \bW / n\cdot (\by \by^\top)] \notag \\
    &\quad - \otr[\bPhi (\bPhi^\top \bPhi / n + \mu \bI_p)^{\dagger} \bSigma_0 (\bPhi^\top \bPhi / n + \mu \bI_p)^{\dagger} \bPhi^\top / n \cdot (\by \by^\top)] \notag \\
    &\asympequi
    \otr[(\bPhi \bPhi^\top / n + \mu \bI_n)^{\dagger} (\bPhi \bSigma_0 \bPhi^\top / n + \mu_{\bSigma_0}' \bI_n) (\bPhi \bPhi^\top / n + \mu \bI_n)^{\dagger} (\by \by^\top)] \notag \\
    &\quad - \otr[\bPhi (\bPhi^\top \bPhi / n + \mu \bI_p)^{\dagger} \bSigma_0 (\bPhi^\top \bPhi / n + \mu \bI_p)^{\dagger} \bPhi^\top / n \cdot (\by \by^\top)] \notag \\
    &=
    \otr[(\bPhi \bPhi^\top / n + \mu \bI_n)^{\dagger} (\bPhi \bSigma_0 \bPhi^\top / n) (\bPhi \bPhi^\top / n + \mu \bI_n)^{\dagger} (\by \by^\top)] \notag \\
    &\quad + \mu_{\bSigma_0}' \otr[(\bPhi \bPhi^\top + \mu \bI_n)^{\dagger} (\by \by^\top) (\bPhi \bPhi^\top + \mu \bI_n)^{\dagger}] \notag \\
    &\quad \quad - \otr[(\bPhi \bPhi^\top / n + \mu \bI_n)^{\dagger} (\bPhi \bSigma_0 \bPhi^\top / n) (\bPhi \bPhi^\top / n + \mu \bI_n)^{\dagger} (\by \by^\top)] \notag \\
    &= \mu_{\bSigma_0}' \otr[(\bPhi \bPhi^\top + \mu \bI_n)^{\dagger} (\by \by^\top) (\bPhi \bPhi^\top + \mu \bI_n)^{\dagger}], \label{eq:risk-decomp-2}
\end{align}
where in the third line, we used the second-order equivalence for freely weighted ridge resolvents from \Cref{lem:general-second-order-subsample}; in the fourth line, we employed the push-through identity multiple times.
Substituting for $\mu_{\bSigma_0}'$ from \Cref{lem:general-second-order-subsample} in \eqref{eq:risk-decomp-2} and substituting this back into \eqref{eq:risk-decomp-1},
we arrive at the desired decomposition. 
This completes the proof.

\subsection{Proof of \Cref{prop:optimal-subsample-ratio}}
    
We use the path \eqref{eq:subsample-wor-path} with $k^{*}$ and $\mu^{*}$, and setting $\lambda^*=0$:
\begin{align*}
    \left(1 - \frac{\df(\hbeta_{\bI,\mu^*})}{n}\right)
    = \left(1 - \frac{k^*}{n}\right).
\end{align*}
This suggests that
\begin{align*}
    k^* &= \df(\hbeta_{\bI,\mu^*}) .
\end{align*}
Note that $r:=\rank(\bX^{\top}\bX) = \rank(\bG_{\bI})$.
By the definition of degrees of freedom, it follows that
\begin{align*}
    \df(\hbeta_{\bI,\mu^*}) &= \tr[\bX^\top\bX (\bX^\top \bX+\mu^*\bI_p)^{\dagger}] \\ &=\sum_{i=1}^{r}\frac{s_i}{s_i+\mu^*} \leq r=\rank(\bG_{\bI}),
\end{align*}
where $s_1,\ldots,s_{r}$ are non-zero eigenvalues of $\bX^{\top}\bX$.
This finishes the proof.

\subsection{Technical lemmas}

Recall from \Cref{sec:technical-lemmas-sec:path-equivalence} that we define ${\lambda}_0 = - \liminf_{n \to \infty} \lambda_{\min}^{+}(\bW \bPhi \bPhi^\top \bW^\top / n)$.

\begin{lemma}
    [General second-order equivalence for freely weighted ridge resolvents]
    \label{lem:general-second-order-subsample}
    Under the settings of \Cref{lem:general-first-order},
    for any positive semidefinite $\bSigma_0$ with uniformly bounded operator norm, for all $\lambda > {\lambda}_0$,
    \begin{multline}
        \tfrac{1}{n} \bW^\top \bW \bPhi (\tfrac{1}{n} \bPhi^\top \bW^\top \bW \bPhi + \lambda \bI_p)^{\dagger}
        \bSigma_0 (\tfrac{1}{n} \bPhi^\top \bW^\top \bW \bPhi + \lambda \bI_p)^{\dagger} \bPhi^\top \bW^\top \bW \\
        \asympequi
        (\tfrac{1}{n} \bPhi \bPhi^\top + \mu \bI_n)^{\dagger} (\tfrac{1}{n} \bPhi \bSigma_0 \bPhi^\top + \mu_{\bSigma_0}' \bI_n)
        (\tfrac{1}{n} \bPhi \bPhi^\top + \mu \bI_n)^{\dagger},
        \label{eq:general-second-order-primal}
    \end{multline}
    where $\mu_{\bSigma_0}' \ge 0$ is given by:
    \begin{align}
        \mu'_{\bSigma_0} 
        &=
        -
        \frac{\partial \mu}{\partial \lambda} 
        \lambda^2
        \cS_{\bW \bW^\top}'\big(- \tfrac{1}{n}\tr\big[\tfrac{1}{n} \bPhi \bPhi^\top (\tfrac{1}{n} \bPhi \bPhi^\top + \mu \bI_n)^{\dagger}\big]\big) \notag\\
        &\quad \cdot \tfrac{1}{p} \tr\big[(\tfrac{1}{n} \bPhi \bPhi^\top + \mu \bI_n)^{\dagger} (\tfrac{1}{n} \bPhi \bSigma_0 \bPhi^\top) (\tfrac{1}{n} \bPhi \bPhi^\top + \mu \bI_n)^{\dagger}\big]. \label{eq:tmu'_mPsi}
    \end{align}
\end{lemma}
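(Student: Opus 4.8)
The plan is to obtain this lemma from the second-order sketched-resolvent equivalence \Cref{lem:general-second-order} by the same change of variables that drives the proof of \Cref{lem:general-first-order}, now carrying along a quadratic factor. Concretely, I would identify the observation weight $\bW^\top$ with the feature sketch $\bS$, the full Gram matrix $\bG_{\bI} = \bPhi\bPhi^\top/n$ with the matrix $\hSigma$ appearing in \Cref{cond:sketch}, and $\bG_{\bW} = \bW\bPhi\bPhi^\top\bW^\top/n$ with $\bS^\top\hSigma\bS$. Under this dictionary \Cref{cond:sketch-observation} supplies exactly the freeness and $S$-transform analyticity hypotheses of \Cref{cond:sketch} for the pair $(\bW^\top\bW, \bG_{\bI})$, and $\lambda_0 = -\liminf_{n}\lambda_{\min}^{+}(\bG_{\bW})$ plays the role of $\tilde{\lambda}_0$, so \Cref{lem:general-second-order} is available for all $\lambda > \lambda_0$.

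The first concrete step is to put the primal left-hand side of \eqref{eq:general-second-order-primal} into ``dual'' form. Applying the push-through identity $\bA(\bA^\top\bA + \lambda\bI)^{\dagger} = (\bA\bA^\top + \lambda\bI)^{\dagger}\bA$ with $\bA = \bW\bPhi/\sqrt{n}$ gives $\bW\bPhi(\tfrac{1}{n}\bPhi^\top\bW^\top\bW\bPhi + \lambda\bI_p)^{\dagger} = (\bG_{\bW}+\lambda\bI_n)^{\dagger}\bW\bPhi$, together with its transpose; multiplying on the left by $\bW^\top$ and inserting $\bSigma_0$ between the two factors shows that the left-hand side of \eqref{eq:general-second-order-primal} equals $\bW^\top(\bG_{\bW}+\lambda\bI_n)^{\dagger}\bW\,\bPsi\,\bW^\top(\bG_{\bW}+\lambda\bI_n)^{\dagger}\bW$ with $\bPsi := \bPhi\bSigma_0\bPhi^\top/n$. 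This is precisely the left-hand side of \eqref{eq:general-second-order} under the substitution above, so it remains to check that $\bPsi$ is an admissible test matrix: it is positive semidefinite since $\bSigma_0 \succeq 0$, it is independent of $\bW$ since it involves only $\bPhi$ and $\bSigma_0$, and $\|\bPsi\|_{\oper} \le \|\bSigma_0\|_{\oper}\|\bG_{\bI}\|_{\oper}$ is uniformly bounded because $\bSigma_0$ has bounded operator norm by hypothesis and $\bG_{\bI}$ converges almost surely to a bounded operator by \Cref{cond:sketch-observation}. Invoking \Cref{lem:general-second-order} then yields the asserted equivalence with right-hand side $(\bG_{\bI}+\nu\bI_n)^{\dagger}(\bPsi + \nu'_{\bPsi}\bI_n)(\bG_{\bI}+\nu\bI_n)^{\dagger}$, where $\nu$ solves the fixed point \eqref{eq:dual-fp-main}; comparing that fixed point with \eqref{eq:fp-primal-sketch} and using uniqueness of the solution identifies $\nu = \mu$, so the right-hand side is exactly that of \eqref{eq:general-second-order-primal} with $\mu'_{\bSigma_0} = \nu'_{\bPsi}$. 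Finally I would specialize the formula \eqref{eq:mu'_mPsi} for $\nu'_{\bPsi}$ to $\hSigma = \bG_{\bI}$, $\nu = \mu$, $\bS\bS^\top = \bW^\top\bW$, and $\bPsi = \bPhi\bSigma_0\bPhi^\top/n$, using $\otr[\bG_{\bI}(\bG_{\bI}+\mu\bI_n)^{\dagger}] = \odf(\hbeta_{\bI,\mu})$, to arrive at \eqref{eq:tmu'_mPsi}; the factor $\partial\mu/\partial\lambda$ there is finite and strictly positive on $\lambda > \lambda_0$, which I would justify by implicitly differentiating $\mu = \lambda\cS_{\bW^\top\bW}(-\odf(\hbeta_{\bI,\mu}))$ and using the monotonicity of $\nu$ in $\lambda$ recorded in \Cref{thm:sketched-pseudoinverse}, and $\mu'_{\bSigma_0}\ge 0$ is inherited directly from \Cref{lem:general-second-order}.

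The step I expect to be the main obstacle is bookkeeping rather than conceptual. One must verify carefully that the hypotheses of \Cref{lem:general-second-order} genuinely transfer under the change of variables --- in particular that the infinitesimal-freeness and analyticity conditions on $(\bW^\top\bW, \bG_{\bI})$ are precisely what \Cref{cond:sketch-observation} grants, and that the test matrix $\bPsi = \bPhi\bSigma_0\bPhi^\top/n$, which is random and couples to $\bPhi$, still qualifies; the cleanest way to handle its randomness is to argue conditionally on $\bPhi$, applying the deterministic-$\bPsi$ version of \Cref{lem:general-second-order} on the event that the relevant spectral quantities converge. The second delicate point is the precise identification of the constant $\mu'_{\bSigma_0}$ in \eqref{eq:tmu'_mPsi}: translating \eqref{eq:mu'_mPsi} requires keeping exact track of which normalized traces become $\tr[\cdot]/n$ and which $\tr[\cdot]/p$, and of the fact that the implicitly defined derivative $\partial\mu/\partial\lambda$ appearing there is the same object as $\partial\nu/\partial\lambda$ in the sketched formulation. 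These details are routine but unavoidable.
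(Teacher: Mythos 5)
Your proposal is correct and follows the paper's proof exactly: rewrite the primal expression into dual form via the push-through/Woodbury identity to obtain $\bW^\top(\bG_{\bW}+\lambda\bI)^{\dagger}\bW\,(\bPhi\bSigma_0\bPhi^\top/n)\,\bW^\top(\bG_{\bW}+\lambda\bI)^{\dagger}\bW$, then apply \Cref{lem:general-second-order} with the substitutions $\bS \leftarrow \bW^\top$, $\hSigma \leftarrow \bPhi\bPhi^\top/n$, and $\bPsi \leftarrow \bPhi\bSigma_0\bPhi^\top/n$, and identify the fixed-point parameter $\nu$ with $\mu$ to read off $\mu'_{\bSigma_0}$ from \eqref{eq:mu'_mPsi}. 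Your write-up is in fact more careful than the paper's terse one-line proof — in particular, you verify the admissibility of the test matrix $\bPsi$ (psd, bounded operator norm, independent of $\bW$) and note the conditioning argument needed because $\bPsi$ is random — but the route is identical.
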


\begin{proof}
    We use the Woodbury matrix identity to write
    \begin{align*}
        & \tfrac{1}{n} \bW \bW^\top \bPhi (\tfrac{1}{n} \bPhi^\top \bW \bW^\top \bPhi + \lambda \bI_p)^{\dagger}
        \bSigma_0 (\tfrac{1}{n} \bPhi^\top \bW \bW^\top \bPhi + \lambda \bI_p)^{\dagger} \bPhi^\top \bW \bW^\top \\
        &= \tfrac{1}{n} \bW (\tfrac{1}{n} \bW^\top \bPhi \bPhi^\top \bW + \lambda \bI_m)^{\dagger} \bW^\top \bPhi
        \bSigma_0 \bPhi^\top \bW (\tfrac{1}{n} \bW^\top \bPhi \bPhi^\top \bW + \lambda \bI_m)^{\dagger} \bW^\top.
    \end{align*}
    The equivalence in \eqref{eq:general-second-order-primal} and the inflation parameter in \eqref{eq:tmu'_mPsi} now follow from the second-order result for feature sketch by substituting $\bW$ for $\bS$, $\bPhi$ for $\bPhi^\top$, and $\tfrac{1}{n} \bPhi \bSigma_0 \bPhi^\top$ for $\bSigma_0$ in \eqref{eq:general-second-order}.
\end{proof}

\clearpage
\section{Additional illustrations for \Cref{sec:path-characterization}}
\label{sec:additional-illustrations-sec:path-characterization}

\subsection{Implicit regularization paths for bootstrapping}

\begin{figure*}[!ht]
  \centering\includegraphics[width=0.99\textwidth]{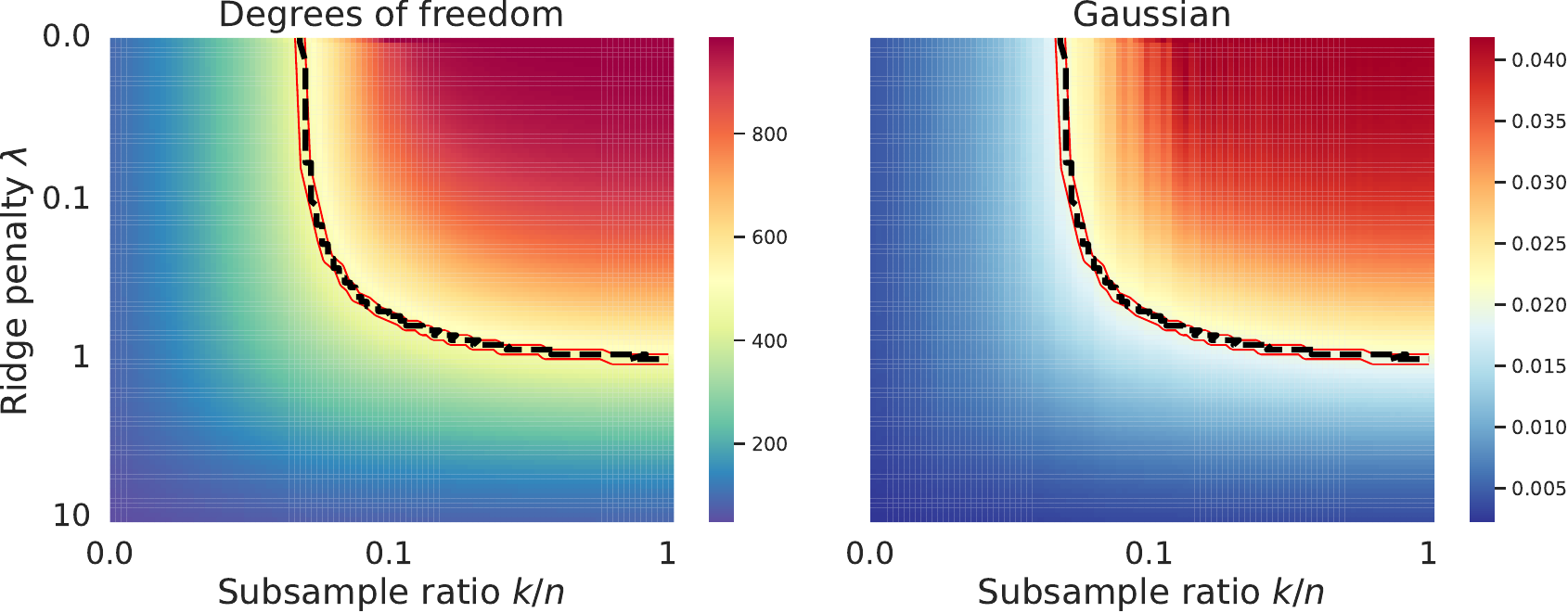}
  \caption{
    Equivalence under bootstrapping.
    The left panel shows the heatmap of degrees of freedom, and the right panel shows the random projection $\EE_{\bW}[\ba^{\top}\hbeta_{\bW,\lambda}]$ where $\ba\sim\cN(\zero_p,\bI_p/p)$.
    In both heatmaps, the red lines indicate the predicted paths using \Cref{eq:subsample-wor-path}, and the black dashed lines indicate the empirical paths obtained by matching empirical degrees of freedom.
    Despite the complexity of the theoretical path for bootstrapping, we observe that the empirical paths closely resemble it.
    Therefore, the theoretical path for sampling without replacement from \eqref{eq:subsample-wor-path} serves as a good approximation.
    }
  \label{fig:different-subsampling-with-and-without-replacement}
\end{figure*}

\subsection{Implicit regularization paths with non-uniform weights}

\begin{figure*}[!ht]
  \includegraphics[width=0.99\textwidth]{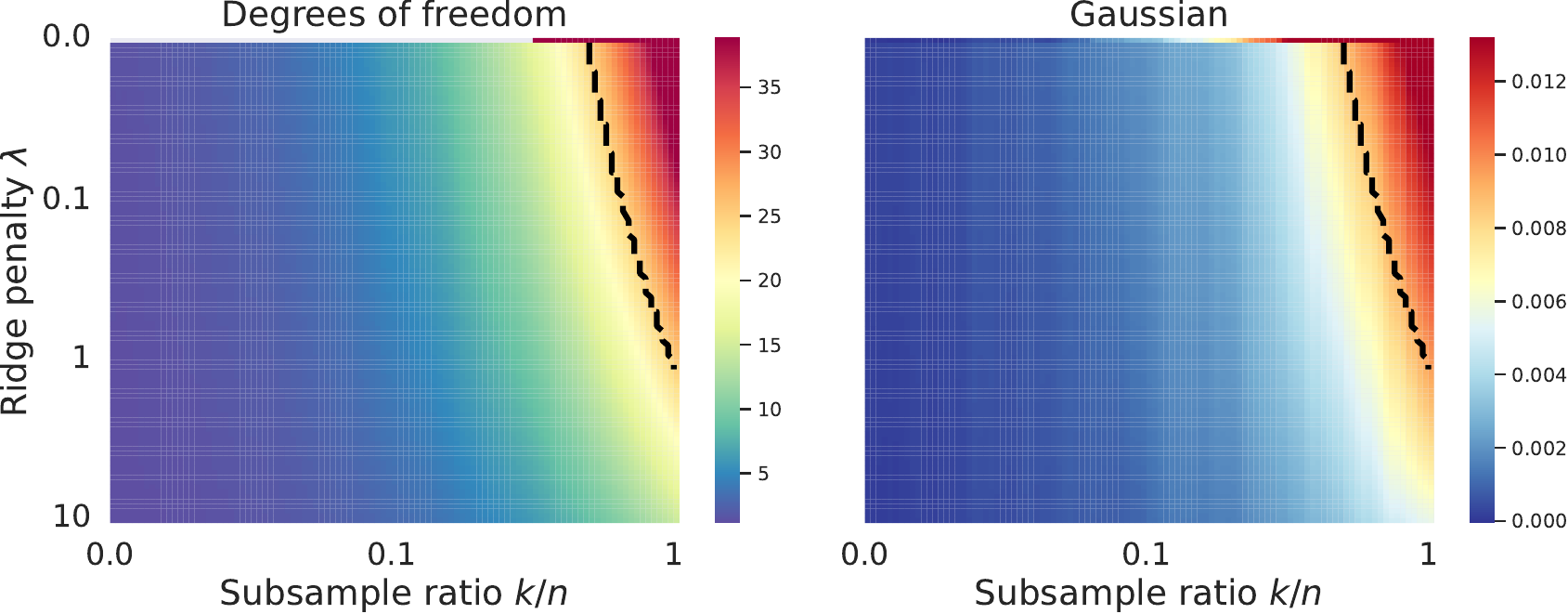}
\caption{
    Equivalence under non-uniform weighting.
    The left panel shows the heatmap of degrees of freedom, and the right panel shows the random projection $\EE_{\bW}[\ba^{\top}\hbeta_{\bW,\lambda}]$, where $\ba\sim\cN(\zero_p,\bI_p/p)$.
    The weights ($\diag(\bW)$) for observations are initially generated as $(9/10)^i$ for $i=0,\ldots,n-1$, subsample $k$ entries from $\{1, \dots, n\}$, zero out the other $n-k$ entries, and then normalized to have norm $k$.
    The black dashed lines indicate the empirical paths obtained by matching the empirical degrees of freedom.
    }
  \label{fig:non-uniform-weights}
\end{figure*}

\clearpage

\section{Additional illustrations for \Cref{sec:risk-equivalences}}
\label{sec:additional-illustrations-sec:risk-equivalences}

\subsection{Rate illustration for ensemble risk against ensemble size}

\begin{figure*}[!ht]
    \centering
  \includegraphics[width=0.99\textwidth]{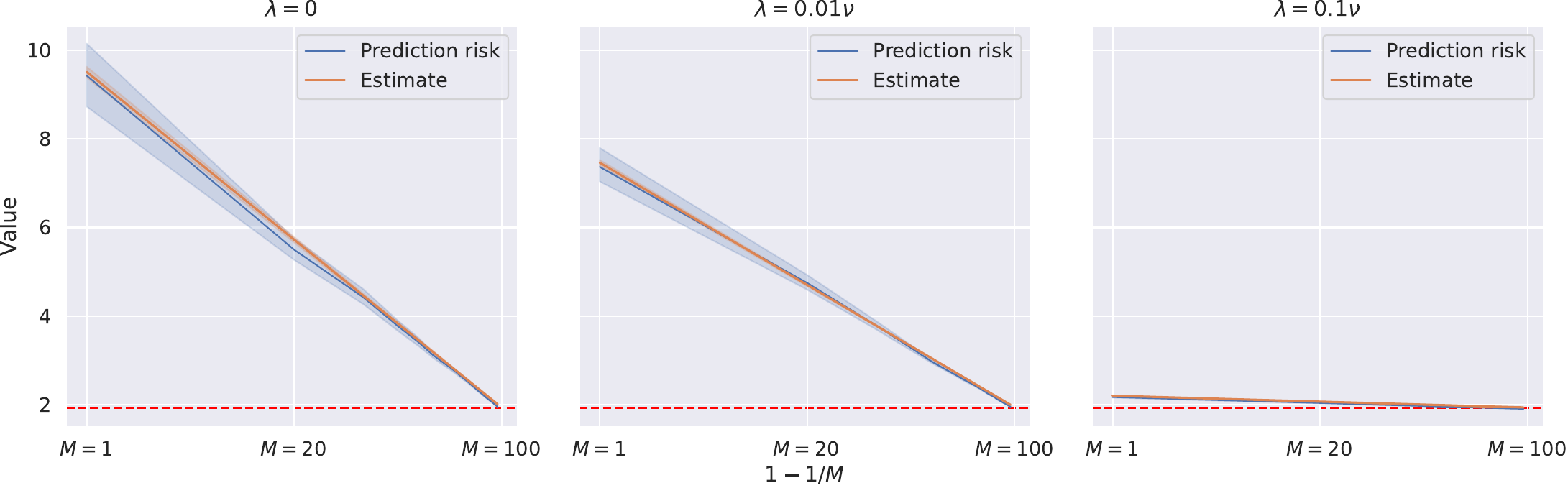}
  \caption{
    Risk equivalence for random feature structures when sampling without replacement. 
    The solid lines represent the prediction risks and their estimates of the subsample ridge ensemble, and the red dashed lines indicate the prediction error of the full ridge predictor.
    The data and random features with the ReLU activation function are generated according to \Cref{subsec:simu} with $n=5000$ and $p=500$.
    The regularization level for the full ridge is set as $\mu=1$, and each subsampled ridge ensemble is fitted with $M=100$ randomly sampled subsampling matrices.
    For each value of $\lambda$, the subsample ratio is determined by solving \Cref{eq:subsample-wor-path}.
  }
  \label{fig:ensemble-risk-illustration}
\end{figure*}

\bigskip

\subsection{Real data illustrations for implicit regularization paths}

\begin{figure}[!ht]
    \centering
    \includegraphics[width=0.99\textwidth]{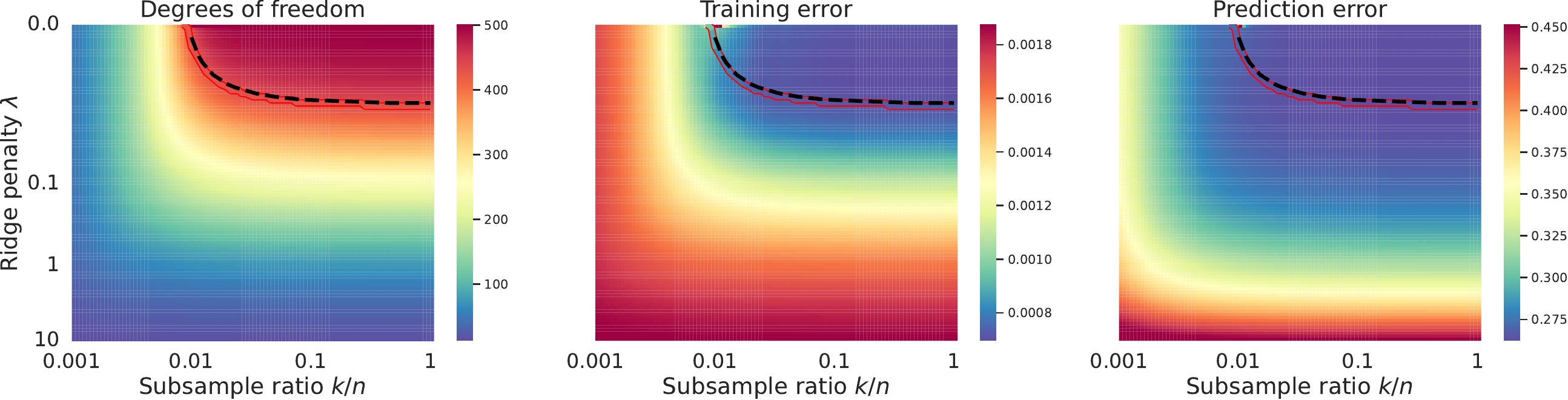}
    \caption{
        Equivalence in pretrained features of pretrained ResNet-18 on CIFAR-10 dataset.
    }
    \label{fig:equiv-real-data-cifar-10}
\end{figure}

\begin{figure}[!ht]
    \centering
    \includegraphics[width=0.99\textwidth]{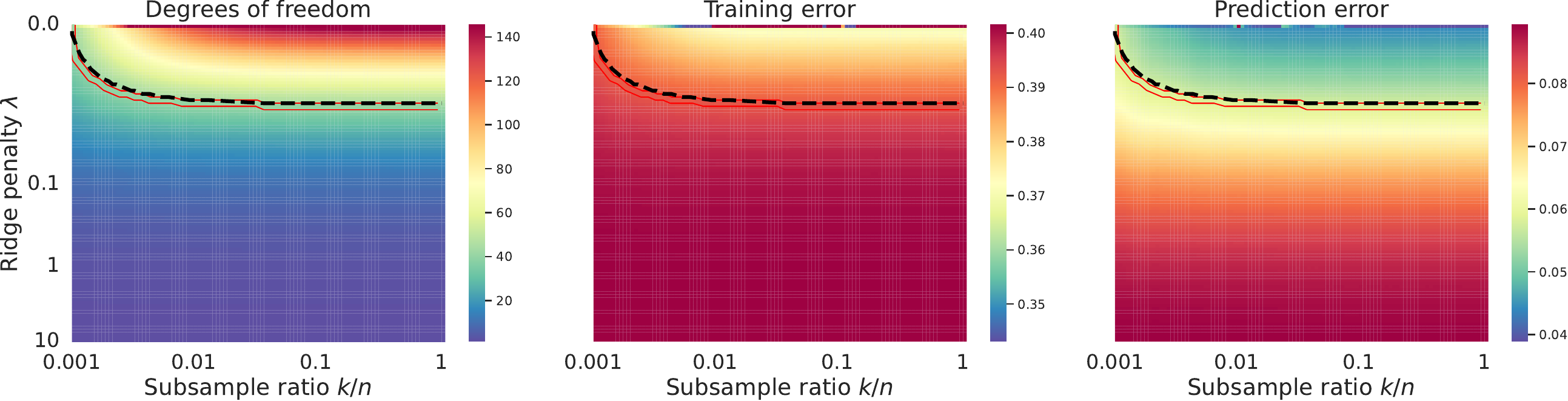}
    \caption{
        Equivalence in features of randomly initialized ResNet-18 on Fashion-MNIST dataset.
    }
    \label{fig:equiv-real-data-fashion-mnist}
\end{figure}

\section{Details of experiments}
\label{sec:details-experiments}

\subsection{Simulation details}\label{subsec:simu}

The simulation settings are as follows.

\begin{itemize}[leftmargin=7mm]
    \item 
    \emph{Covariance model.}
    The covariance matrix of an auto-regressive process of order 1 (AR(1)) is given by $\bSigma_{\mathrm{ar1}}\in\RR^{d\times d}$, where $(\bSigma_{\mathrm{ar1}})_{ij} = \rhoar^{|i-j|}$ for some parameter $\rhoar\in(0,1)$. 
    For the simulations, we set $\rhoar=0.25$.

    \item
    \emph{Signal model.}
    Define $\bbeta_0 = \frac{1}{5}\sum_{j=1}^5 \bw_{(j)}$ where $\bw_{(j)}$ is the eigenvector of $\bSigma_{\mathrm{ar1}}$ associated with the top $j$th eigenvalue $r_{(j)}$.

    \item
    \emph{Response model.}
    We generated data $(\bx_i,y_i)\in\RR^d\times \RR$ for $i=1,\ldots,n$ from a nonlinear model:
    \begin{align}
        y_i &= \bx_i^{\top}\bbeta_0 + \frac{1}{p}(\|\bx_i\|_2^2-\tr[\bSigma_{\mathrm{ar1}}]) + \epsilon_i,
        \quad \bx_i = \bSigma_{\mathrm{ar1}}^{\frac{1}{2}}\bz_i,
        \quad z_{ij}\overset{iid}{\sim} \frac{t_5}{\sigma_5},
        \quad  \epsilon_i\sim \frac{t_5}{\sigma_5}, \tag{M-AR1}\label{eq:model-ar1}
    \end{align}
    where $\sigma_5 = \sqrt{5/3}$ is the standard deviation of $t_5$ distribution.
\end{itemize}

The benefit of using the above nonlinear model is that we can clearly separate the linear and the nonlinear components and compute the quantities of interest because $\bbeta_0$ happens to be the best linear projection. 

The linear, random, and kernel features are generated as follows.

\begin{itemize}[leftmargin=7mm]
    \item  
    \emph{Linear features.}
    For a given feature dimension $p$, we use $d=p$ raw features from \eqref{eq:model-ar1} as linear features.

    \item
    \emph{Random features.}
    For generating random features, we use $d=2p$ raw features from \eqref{eq:model-ar1} and sample a randomly initialized weight matrix $\bF\in\RR^{p\times d}$ whose entries are i.i.d.\ samples from $\cN(0,d^{-1/2})$.
    Then the transform feature is given by \smash{$\tilde{\bx}_i=\varphi(\bF\bx_i)\in\RR^{p}$}, where $\varphi$ is a nonlinear transformation and set to be ReLU function in our experiment.
    \item
    \emph{Kernel features.}
    For kernel features, we use $d=p$ raw features from \eqref{eq:model-ar1} to construct the kernel matrix.
\end{itemize}

In the simulations, the estimates are averaged across $20$ simulations with different random seeds.

\subsection{Experimental details in \Cref{sec:experiments-real-data}}

Following the similar experimental setup in \cite{swei2022more}, we use residual networks to extract features on several computer vision datasets, both at random initialization and after pretraining.
More specifically, we consider ResNet-\{18, 34, 50, 101\} applied to the CIFAR-\{10,100\} \citep{skrizhevsky2009learning}, Fashion-MNIST \citep{sxiao2017fashion}, Flowers-102 \citep{snilsback2008automated}, and Food-101 \citep{sbossard2014food} datasets.
All random initialization was done following \citep{she2015delving}; pretrained networks (obtained from PyTorch) were pretrained on ImageNet, and the outputs of the last pretrained layer on each dataset mentioned above were used as the embedding feature $\bPhi$.

After obtaining the embedding features from the last layer of the neural network model, we further normalize each row of the pretrained feature to have a norm of $p$, and center the one-hot labels to have zero means. 
To reduce the computational burden, we only consider the first 10 one-hot labels of all datasets. 
For datasets with different data aspect ratios, we stratify 10\% of the training samples as the training set for the CIFAR-100 dataset. 
The training and predicting errors are the mean square errors on the training and test sets, respectively, aggregated over all the labels.

\begin{table}[!ht]
    \centering
    \caption{Summary of pretrained features from different real datasets.}
    \label{tab:real-data}    
    \begin{tabularx}{0.99\textwidth}{cC{2cm}XXX}
        \toprule
         \textbf{Dataset} &  \textbf{Model} & \textbf{Number of train samples} & \textbf{Number of test samples} & \textbf{Number of pretrained features}\\\midrule
         Fashion-MNIST & ResNet-18 init. & 60000 & 10000 & 512\\
         CIFAR-10 & ResNet-18 pretr. & 50000 & 10000 & 512\\
         CIFAR-100 (subset) & ResNet-50 pretr. & 5000 & 10000 & 2048\\
        Flowers-102 & ResNet-50 pretr.& 2040 & 6149 & 2048 \\
        Food-101 & ResNet-101 pretr.&  75750 &  25250 & 2048\\\bottomrule
    \end{tabularx}
\end{table}

\bigskip

\end{document}